\documentclass{article}

\usepackage{microtype}
\usepackage{graphicx}
\usepackage{subcaption}
\usepackage{booktabs} 

\usepackage{hyperref}

\usepackage[accepted]{icml2026}

\usepackage{amsmath}
\usepackage{amssymb}
\usepackage{mathtools}
\usepackage{amsthm}
\usepackage{url} 
\usepackage{algorithm}
\usepackage{paralist}
\usepackage[dvipsnames]{xcolor} 
\usepackage{xcolor}
\usepackage{placeins}
\usepackage{booktabs}
\usepackage{wrapfig}
\usepackage{thmtools}
\usepackage{thm-restate}
\usepackage{graphicx}
\graphicspath{{../}}

\PassOptionsToPackage{table,xcdraw,dvipsnames}{xcolor}
\hypersetup{
    breaklinks,
    colorlinks,
    linkcolor=OrangeRed,
    citecolor=RoyalBlue,
    urlcolor=RoyalBlue,
}

\newcommand{\sign}{\mathrm{sign}}
\newcommand{\E}{\mathbb{E}}
\newcommand{\algorithmicreturn}{\textbf{return}}
\newcommand{\RETURN}{\STATE \algorithmicreturn}
\newcommand\independent{\protect\mathpalette{\protect\independenT}{\perp}}
\def\independenT#1#2{\mathrel{\rlap{$#1#2$}\mkern2mu{#1#2}}}

\usepackage[capitalize,noabbrev]{cleveref}

\theoremstyle{plain}

\newtheorem{lemma}{Lemma}
\newtheorem{corollary}{Corollary}
\theoremstyle{definition}
\newtheorem{definition}{Definition}

\theoremstyle{remark}

\usepackage[textsize=tiny]{todonotes}

\icmltitlerunning{Scalable and Interpretable Representation Alignment with Ordinal Similarity}

\begin{document}

\twocolumn[
  \icmltitle{Scalable and Interpretable Representation Alignment with Ordinal Similarity}



  \icmlsetsymbol{equal}{*}

  \begin{icmlauthorlist}
    \icmlauthor{Diogo Soares}{yyy}
    \icmlauthor{Pankhil Gawade}{yyy,xxx}
    \icmlauthor{Andrea Dittadi}{yyy,xxx}
    \icmlauthor{Ewa Szczurek}{yyy,zzz}
  \end{icmlauthorlist}

  \icmlaffiliation{yyy}{Institute of AI for Health, Helmholtz Munich}
  \icmlaffiliation{xxx}{School of Computation, Information and Technology, Technical University of Munich}
  \icmlaffiliation{zzz}{Faculty of Mathematics, Informatics and Mechanics, University of Warsaw}

  \icmlcorrespondingauthor{Ewa Szczurek}{ewa.szczurek@helmholtz-munich.de}

  \icmlkeywords{Representation Learning, Similarity Metrics, Ordinal Alignment, Interpretability, Theory}

  \vskip 0.3in
]



\printAffiliationsAndNotice{}  

\begin{abstract}
Evaluating representation similarity is fundamental to representation learning. However, existing metrics suffer from significant limitations: they lack interpretability due to shifting baselines, lack robustness to outliers, and are computationally intractable for large datasets, forcing reliance on heuristic approximations. To address this, we develop an ordinal-similarity framework, instantiated by the Triplet (TSI) and Quadruplet (QSI) Similarity Indices, which measure alignment by quantifying the consistency of ordinal relationships. We theoretically demonstrate this formulation is inherently interpretable, robust to outliers, and computationally efficient. Finally, we establish a formal equivalence between TSI and local neighborhood alignment, measured by Mutual Nearest Neighbors. Empirically, we validate these properties and show that ordinal similarity offers a scalable approach to measuring alignment, enabling practitioners to better understand and design representations.
\end{abstract}

\section{Introduction}

The ability to quantify representational similarity is a foundational challenge in modern machine learning, essential for interpreting model behavior \citep{nguyen2021widedeepnetworkslearn, huh2024platonicrepresentationhypothesis, Klabunde_2025} and aligning artificial systems with human cognition \citep{muttenthaler2022human, sucholutsky2024gettingalignedrepresentationalalignment}. Similarity metrics are central to understanding how neural networks interpret semantic concepts, directly informing the design of transfer learning, knowledge distillation \citep{park2019relational, gou2021knowledge}, and generative modeling \citep{yao2025reconstructionvsgenerationtaming, yu2025representationalignmentgenerationtraining, leng2025repaeunlockingvaeendtoend}. Given their extensive influence on both model design and behavioral analysis, the reliability of these metrics is paramount for the interpretability of increasingly ubiquitous AI systems. 

\begin{figure}[t!]
\centering
\includegraphics[width=1\columnwidth]{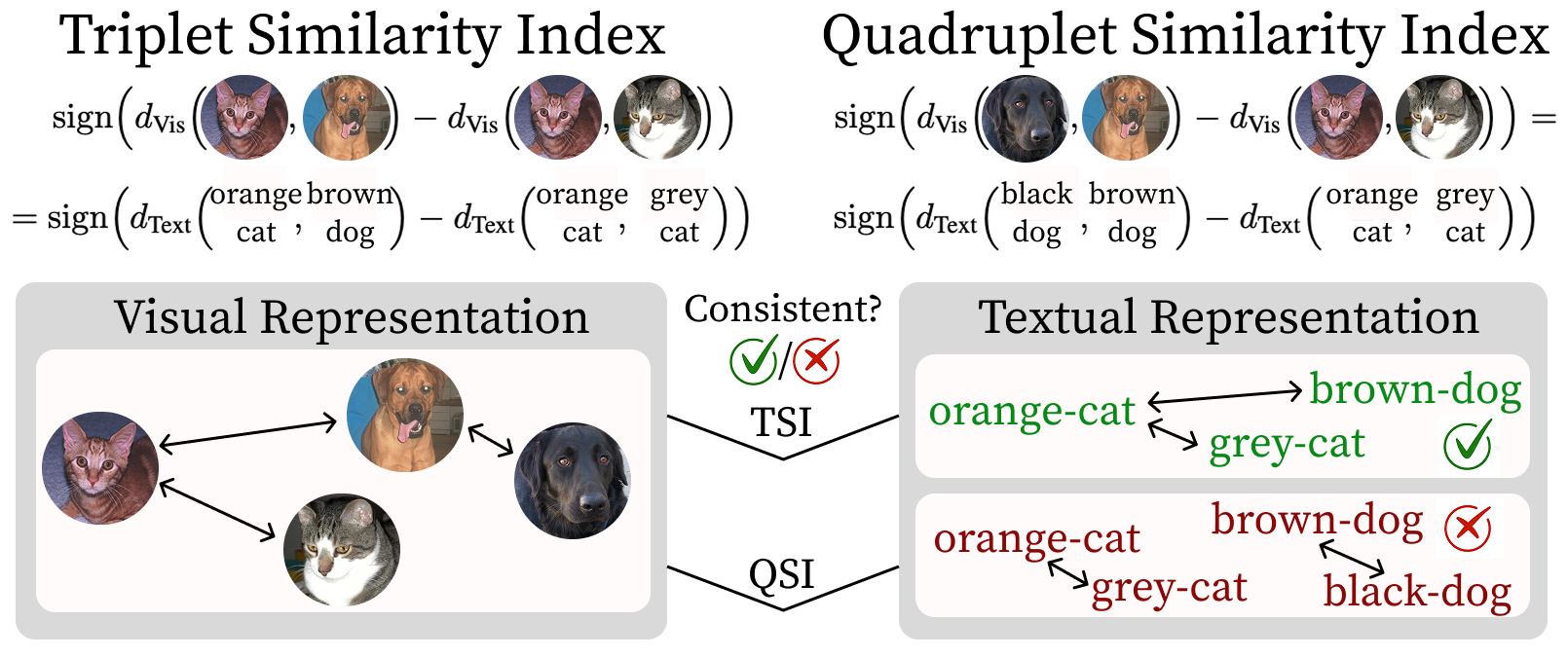}
\vspace{-1.2em}
\caption{TSI and QSI measure alignment between two representation spaces (e.g., Visual and Textual) by quantifying the consistency of ordinal relationships. TSI checks if relative similarity from an anchor is preserved (e.g., 'Is $A$ closer to $B$ than to $C$?'). QSI compares relative similarity between distinct pairs (e.g., 'Is $A$ closer to $B$ than $C$ is to $D$?')}
\label{fig:hero-figure}
\vspace{-2em}
\end{figure}

However, despite their widespread use, current metrics suffer from key shortcomings:
\begin{inparaenum}[\itshape (i)\upshape]
    \item \textbf{Complex to Interpret:} The interpretation of most metrics is inherently difficult as they yield raw scores that lack intuitive meaning and, as noted by \citet{cloos2025differentiable}, fail to provide a consistent threshold to distinguish meaningful alignment from spurious similarity. 
    \item \textbf{Sensitive to Outliers:} Most metrics lack robustness to outliers, a fragility so pronounced that two perfectly aligned representations can appear misaligned after introducing only a small number of extreme outliers \citep{davari2022reliabilityckasimilaritymeasure}.
    \item \textbf{Computationally Infeasible:} Existing metrics demand significant memory resources and extensive computation time \citep{nguyen2021widedeepnetworkslearn}, leading to a reliance on practical, often batched, approximations \citep{huh2024platonicrepresentationhypothesis}. However, these approximations lack theoretical bounds for correctness and can yield inconsistent estimates.
\end{inparaenum}

To overcome these challenges, we shift our focus from metrics reliant on precise nominal values to those that emphasize more robust relational properties.
Inspiration for our work comes from Non-metric Multidimensional Scaling (NMDS), a data visualization technique that arranges points in a 2D space to preserve the rank order of pre-specified pairwise distances, rather than their exact values \citep{borg2005modern,agarwal2007generalized, van2012stochastic, vankadara2023insights}. The success of NMDS in creating perceptually aligned visualizations, demonstrates that preserving ordinal relationships captures essential structure needed for alignment. Building on this insight, we measure representational similarity by quantifying the consistency of ordinal relationships between two representation spaces. Specifically, we evaluate the fraction of similarity triplets (of the form ``A is more similar to B than to C'') and quadruplets (of the form ``A is more similar to B than C is to D'') for which the relationship holds true in both representations. 

To make this ordinal principle concrete, we study it through the Triplet (TSI) and Quadruplet (QSI) Similarity Indices (\cref{fig:hero-figure}). We analyze the key theoretical properties of ordinal consistency and its connections to existing methods. Our main contributions are:
\begin{compactitem}
    \item \textbf{Ordinal Similarity Formulation (\cref{sec:formal-definitions}):} We formalize ordinal consistency through TSI and QSI, two metrics based on the agreement of relative similarity and distance comparisons.
    \item \textbf{Theoretical Interpretability (\cref{sec:probabilistic_interpretation}):} We show that ordinal similarity has a direct probabilistic interpretation and establish stable baselines for dissimilar representations (\cref{lem:expectation-over-random-representations}).
    \item \textbf{Connection to Neighborhood Consistency (\cref{sec:validating-ordinal-alignment}):} We prove a formal equivalence between perfect TSI alignment and Mutual Nearest Neighbors (MNN) across all scales (\cref{cor:tsi_mutual_nn_alignment}).
    \item \textbf{Guaranteed Robustness (\cref{sec:robustness-to-outliers}):} We establish theoretical bounds that guarantee robustness to outliers for ordinal similarity (\cref{lem:sensitivity_subset_transformation}, \cref{corol:outlier-sensitivity}).
    \item \textbf{Computational Scalability (\cref{sec:efficient-computation}):} We provide efficient algorithms for exact calculation (\cref{corol:tsi-qsi-complexity}) and a principled approximation scheme with theoretical error bounds (\cref{corol:approximate-tsi-qsi-complexity}).
    \item \textbf{Empirical Validation (\cref{sec:experiments}):} We demonstrate the utility of TSI and QSI in diverse real-world applications, including tracking training representation convergence and CLIP multimodal alignment.
\end{compactitem}

Together, these contributions establish ordinal similarity as a rigorous, scalable and theoretically grounded framework for representation alignment, giving practitioners a reliable way to understand and interpret their models.

\section{Related Work}
\label{sec:related-work}

This section reviews existing similarity metrics and situates our approach within the broader landscape of representational analysis.

\paragraph{Similarity Metrics.} Numerous similarity metrics have been proposed to compare representations \citep{sucholutsky2024gettingalignedrepresentationalalignment, Klabunde_2025}, with formal characterizations provided in \cref{sec:similarity-metrics-mathematical-description}. Foundational work includes Canonical Correlation Analysis (CCA) \citep{hardoon2004canonical} and its refinements: SVCCA \citep{raghu2017svccasingularvectorcanonical}, which filters for variance-explaining directions, and PWCCA \citep{morcos2018insightsrepresentationalsimilarityneural}, which weights correlations by explained variance. Another seminal contribution, Centered Kernel Alignment (CKA) \citep{kornblith2019similarityneuralnetworkrepresentations}, satisfies critical invariance properties by computing a normalized Hilbert-Schmidt Independence Criterion (HSIC) on pairwise distance matrices. This method and the metrics proposed here are both inspired by Representational Similarity Analysis (RSA) \citep{kriegeskorte2008representational}, a neuroscience framework for measuring alignment across brain areas \citep{freiwald2010functional} and individuals \citep{connolly2012representation}. \looseness=-1

More recently, methods focusing on local structure, particularly Mutual Nearest Neighbors (MutualNN) and CKNNA, have gained widespread use. MutualNN directly computes the average number of shared $k$-nearest neighbors between representations, while CKNNA calculates a version of CKA masked to these mutual neighborhoods \citep{huh2024platonicrepresentationhypothesis}. Such local metrics have been instrumental in analyzing representation convergence and the alignment between generative and foundation models \citep{huh2024platonicrepresentationhypothesis, universetbd2025platonicuniversefoundationmodels, yu2025representationalignmentgenerationtraining, leng2025repaeunlockingvaeendtoend}.

\paragraph{Concepts Related to Ordinal Similarity.} Seminal ordinal correlation coefficients such as Kendall's Tau \citep{kendall1938new} and Spearman's Rho \citep{Spearman1961} motivated the study of relational properties and share important connections with the metrics analyzed in this work. In addition, preserving ordinal relationships in the $d$-dimensional vector space has been studied within the scope of NMDS \citep{borg2005modern, agarwal2007generalized,van2012stochastic, vankadara2023insights} with important theoretical results on the uniqueness and existence of aligned hyper-dimensional pointclouds derived over the years \citep{bilu2005monotone, jamieson2011low, kleindessner2014uniqueness, chatziafratis2024dimension}. From an application perspective, several methodologies have been proposed that leverage ordinal information, such as evaluation metrics assessing whether embeddings satisfy a \textit{pre-defined set} of triplet relationships \citep{veit2017conditional, vankadara2023insights, li2024frontiers}, similarity kernels constructed from such sets \citep{tamuz2011adaptively, kleindessner2017kernel}, and loss functions that optimize them \citep{Schroff_2015}. Crucially, these methods typically rely on an externally provided set of triplet relationships as input, rather than evaluating the consistency of ordinal relationships between two representation spaces.


\section{Background \& Notation}
\label{sec:background-notation}

Let us consider a set of points from an original input space, $\{z_i\}_{i=1}^N$. Let $f: \mathcal{Z} \to \mathcal{X}$ and $g: \mathcal{Z} \to \mathcal{Y}$ be two embedding functions that map these inputs into two different representation spaces. This process yields a paired dataset of embeddings $(x_i, y_i)_{i=1}^N$, where each pair is generated from the same source, i.e., $x_i = f(z_i)$ and $y_i = g(z_i)$. We define the complete sets of representations as $X = \{x_1, \dots, x_N\}$ and $Y = \{y_1, \dots, y_N\}$. Furthermore, assume that the similarities in $\mathcal{X}$ and $\mathcal{Y}$ are individually modeled through their corresponding distance functions $d_X$ and $d_Y$. The field of similarity metrics aims to quantify the alignment between $(X, d_X)$ and $(Y, d_Y)$ using a metric, which can be viewed as a function $\mathcal{M}_{d_X, d_Y}(X, Y)$ that maps the two sets of representations to a score, typically in the range $[0,1]$, where higher scores correspond to greater similarity. 

\section{Ordinal Similarity Metrics}
\label{sec:ordinal-similarity-metrics}

To measure representation alignment, we evaluate the consistency of ordinal relationships between spaces via two metrics: the Triplet Similarity Index (TSI), which evaluates anchor-based relative similarity, and the Quadruplet Similarity Index (QSI), which compares distances between disjoint pairs. 

Our focus on ordinal relations is deliberate. While discarding absolute distances may appear to inflate similarity, ordinal methods such as NMDS, as well as our TSI and QSI, impose a number of constraints that grows cubically (or faster) with the number of points $N$, sharply restricting admissible configurations~\citep{borg2005modern}. Moreover, under suitable conditions and as $N \to \infty$, ordinal relationships uniquely determine a representation up to similarity-preserving transformations (e.g., isotropic scaling, orthogonal transforms) \citep{kleindessner2014uniqueness}, making ordinal consistency a principled and robust measure of geometric alignment.

\subsection{Formal Definitions}
\label{sec:formal-definitions}

Our metrics are built upon a generalized ordinal comparison function, $O_d(a, b, c, d)=\sign\big(d(a, b) - d(c, d)\big) \in \{-1, 0, 1\}$, which captures the relationship between a pair of distances in a space with metric $d$.  
A special case of this function, which we denote $O^{\Delta}_d$, compares the distances from a single anchor point $a$ to two other points, $b$ and $c$. It is defined as $O^{\Delta}_d(a, b, c) \equiv O_d(a, b, a, c) = \sign\big(d(a, b) - d(a, c)\big)$. Using these functions, we define our metrics as the expected agreement of ordinal comparisons over sets of points.

\begin{definition}[Triplet Similarity Index (TSI)]
Let $\mathcal{T} \coloneq \{(i,j,k) \in \{1, \dots N\}  | i \neq j, i \neq k, j \neq k\}$ be the set of all triplets of distinct indices. The TSI is the average agreement of the three-point (anchor-based) ordinal comparison function:
\[
\begin{split}
&\text{TSI}_{d_X, d_Y}(X, Y) = \\ &\E_{(i,j,k) \in \mathcal{T}} \Big[ \mathbb{I}\big[ O^{\Delta}_{d_X}(x_i, x_j, x_k) = O^{\Delta}_{d_Y}(y_i, y_j, y_k) \big] \Big].
\end{split}
\]
\end{definition}

For QSI, we use the set $\mathcal{Q}$ of all quadruplets of distinct indices.

\begin{definition}[Quadruplet Similarity Index (QSI)]
The QSI is the average agreement of the ordinal comparison function over all distinct quadruplets:
\[
\begin{split}
&\text{QSI}_{d_X, d_Y}(X, Y) = \\  &\mathbb{E}_{(i,j,k,l) \in \mathcal{Q}} \Big[\mathbb{I}\big[ O_{d_X}(x_i, x_j, x_k, x_l) = O_{d_Y}(y_i, y_j, y_k, y_l) \big] \Big].
\end{split}
\]
\end{definition}

This reliance on ordinal comparisons ensures our metrics satisfy a critical set of invariance properties. As highlighted by \citet{kornblith2019similarityneuralnetworkrepresentations}, a similarity metric should be invariant to translations, isotropic scaling, and orthogonal transformations, but not to arbitrary invertible linear transforms. Given that TSI and QSI depend only on the relative ordering of distances, they directly satisfy these criteria in inner product spaces, as is formally analyzed in \cref{sec:ordinal-similarities-invariances}. 

Moreover, by showing that $1 - \text{TSI}$ and $1 - \text{QSI}$ can be formulated as normalized Hamming distances over ordinal relationship arrays, we formally prove that these corresponding distance functions satisfy all three metric axioms: equivalence, symmetry, and triangle inequality (see \cref{sec:tsi-qsi-distance-metric-axioms} for a complete analysis). This ensures that TSI and QSI offer theoretically grounded notions of dissimilarity, fulfilling the requirement from \citet{williams2021generalized} that representation distances should yield proper metrics.

\subsection{Probabilistic Interpretation}
\label{sec:probabilistic_interpretation}

An essential property of our ordinal metrics is their direct probabilistic interpretation. 
Defined as the expectation of an indicator function, their values represent the probability that a given ordinal relationship is preserved. The TSI, for example, is the probability that for a randomly chosen triplet of points, the relative ordering of distances from an anchor point is the same in both spaces (e.g., ``$x_i$ is closer to $x_j$ than to $x_k$''):

\begin{equation}
\label{eq:tsi_probabilistic}
\begin{split}
\text{TSI}_{d_X, d_Y}(X, Y) &= P_{(i,j,k) \in \mathcal{T}}\Big( O^{\Delta}_{d_X}(x_i, x_j, x_k) \\
&\quad= O^{\Delta}_{d_Y}(y_i, y_j, y_k) \Big)
\end{split}
\end{equation}

Similarly, the QSI is the probability that for a randomly chosen quadruplet of distinct points, the ordinal relationship between 
a pair of distances is preserved:
\begin{equation}
\label{eq:qsi_probabilistic}
\begin{split}
\text{QSI}_{d_X, d_Y}(X, Y) &= 
P_{(i,j,k,l) \in \mathcal{Q}}\Big( O_{d_X}(x_i, x_j, x_k, x_l) \\ 
&\quad= O_{d_Y}(y_i, y_j, y_k, y_l) \Big)
\end{split}
\end{equation}

This probabilistic framing highlights the clear semantic value of these metrics. For instance, a TSI of 0.8 means that 80\% of triplets agree on their ordering, in sharp contrast to metrics that yield hard-to-interpret opaque values. Additionally, these ordinal relationships yield predictable scores for both perfectly aligned and dissimilar representations. While a score of 1 indicates complete alignment,  dissimilarity is captured when the ordinal relations induced by two representations are statistically independent. Sampling independent representations $X \independent Y$ provides a natural and interpretable construction that induces this ordinal independence and therefore serves as a baseline for dissimilarity (see \cref{sec:why-statistically-independent-representations-dissimilar} for the rationale). Under this setting, the following lemma demonstrates that the expected score is approximately $\frac{1}{2}$ when the probability of ties is negligible. 

\begin{restatable}[Expected Similarity for Independent Representations]{lemma}{ExpectedSimilarityIndependent}
\label{lem:expectation-over-random-representations}
Let $X = \{x_i\}_{i=1}^N$ and $Y = \{y_i\}_{i=1}^N$ be two datasets, with points drawn i.i.d. from arbitrary and independent distributions, $\mathcal{D}_X$ and $\mathcal{D}_Y$, respectively, and equipped with distance functions $d_X$ and $d_Y$. This setting ensures that the ordinal comparisons in $X$ and $Y$ are independent. For TSI, let $a, b, c$ be points drawn i.i.d. from $\mathcal{D}_X$. We define the probability of a tie as $p_X^0 = P_{a,b,c \sim \mathcal{D}_X}\big(d_X(a, b) = d_X(a, c)\big)$. Similarly for $\mathcal{D}_Y$. The expected TSI is:
\[
\mathbb{E}_{\substack{X \sim \mathcal{D}_X^N \\ Y \sim \mathcal{D}_Y^N}} \left[ \text{TSI}_{d_X, d_Y}(X, Y) \right] = p_X^0 p_Y^0 + \frac{1}{2}(1 - p_X^0)(1 - p_Y^0)
\]
For QSI, let $a, b, c, d$ be points drawn i.i.d. from $\mathcal{D}_X$. We define the probability of a tie as $p_X^0 = P_{a,b,c,d \sim \mathcal{D}_X}\big(d_X(a, b) = d_X(c, d)\big)$. Similarly for $\mathcal{D}_Y$. The expected QSI is:
\[
\mathbb{E}_{\substack{X \sim \mathcal{D}_X^N \\ Y \sim \mathcal{D}_Y^N}} \left[ \text{QSI}_{d_X, d_Y}(X, Y) \right] = p_X^0 p_Y^0 + \frac{1}{2}(1 - p_X^0)(1 - p_Y^0)
\]
\end{restatable}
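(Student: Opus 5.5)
The plan is to use linearity of expectation to reduce the expected index to the agreement probability of a single fixed triplet (resp. quadruplet). That single-tuple probability is then computed from two ingredients: independence between the two spaces, and exchangeability of i.i.d. points within each space. The TSI case is treated first; QSI follows by the same argument with a different symmetry.

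\textbf{Step 1 (reduction to one triplet).} Since $\text{TSI}_{d_X,d_Y}(X,Y)$ is a uniform average over the finite set $\mathcal{T}$ of indicator variables, linearity of expectation gives
\[
\mathbb{E}\big[\text{TSI}_{d_X,d_Y}(X,Y)\big] = \frac{1}{|\mathcal{T}|}\sum_{(i,j,k)\in\mathcal{T}} P\big(O^{\Delta}_{d_X}(x_i,x_j,x_k) = O^{\Delta}_{d_Y}(y_i,y_j,y_k)\big).
\]
For distinct indices $i,j,k$, the triple $(x_i,x_j,x_k)$ is distributed as three i.i.d. draws $a,b,c\sim\mathcal{D}_X$, and likewise for $Y$. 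Hence every summand equals one common value $q$, and it suffices to show that $q$ equals the claimed expression.

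\textbf{Step 2 (split over signs using independence).} Write $S_X = O^{\Delta}_{d_X}(a,b,c)$ and $S_Y = O^{\Delta}_{d_Y}(a',b',c')$. These two signs are independent because $X \independent Y$, so
\[
q = \sum_{s\in\{-1,0,1\}} P(S_X=s)\,P(S_Y=s).
\]
By definition $P(S_X=0)=p_X^0$ and $P(S_Y=0)=p_Y^0$.

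\textbf{Step 3 (symmetry of the strict orderings).} This is the only step needing care. Because $b$ and $c$ are i.i.d., the pair $(b,c)$ has the same law as $(c,b)$. Swapping them maps $S_X$ to $-S_X$, so $S_X$ is symmetric in distribution and
\[
P(S_X=1)=P(S_X=-1)=\tfrac12(1-p_X^0),
\]
and similarly for $S_Y$. Substituting these into Step 2 gives
\[
q = p_X^0p_Y^0 + 2\cdot\tfrac14(1-p_X^0)(1-p_Y^0),
\]
which is the stated formula.

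\textbf{QSI case.} The same three steps apply, with the average now over $\mathcal{Q}$ and four i.i.d. points $a,b,c,d$. The symmetry in Step 3 comes from exchanging the pairs $(a,b)\leftrightarrow(c,d)$. This exchange preserves the joint law and maps $d_X(a,b)-d_X(c,d)$ to its negative, so again $P(+1)=P(-1)=\tfrac12(1-p_X^0)$.

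The main point to verify is that distinctness of the indices is what makes the sampled points genuinely i.i.d. This in turn makes the tie probabilities in the statement, which are defined via i.i.d. draws, coincide with the per-tuple tie probabilities. No further assumption on the distributions or on $d_X, d_Y$ is needed beyond measurability.
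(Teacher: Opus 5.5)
Your proof is correct and follows the paper's argument: split over the three sign outcomes, factor using the independence of $X$ and $Y$, and use the exchange of the two compared points to obtain $P(S_X=1)=P(S_X=-1)=\tfrac12(1-p_X^0)$. Two steps the paper only states informally are made explicit in yours: the linearity-of-expectation reduction to a single tuple of distinct indices, and the pair swap $(a,b)\leftrightarrow(c,d)$ that supplies the symmetry in the QSI case.
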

\paragraph{Universal dissimilarity baseline.}
The significance of this result, whose proof is provided in \cref{sec:independent-expectation}, lies in its universality: the baseline for independent representations does not depend on the number of samples, their dimensionality, or the type of distribution. This establishes $\frac{1}{2}$ as a consistent reference point for random alignment across diverse applications. While scores below this baseline are mathematically possible, theoretical analysis with 1D representations suggests that geometric constraints might make such extreme misalignment exceedingly rare in practice (see \cref{sec:practical-lower-bounds}). Consequently, when analyzing their ML models, practitioners can evaluate the TSI/QSI and confidently use $\frac{1}{2}$ as a robust dissimilarity reference, regardless of the architecture or set size. This is key, since previous work from \citet{cloos2025differentiable} reported that existing metrics do not offer consistent thresholds for determining good similarity scores, thus requiring case-specific calibration that hinders method interpretability.

\subsection{Connection to Neighborhood Consistency}
\label{sec:validating-ordinal-alignment}

A fundamental characteristic of any similarity metric is its alignment condition, i.e., the set of joint representations for which the metric yields the maximum score of 1. This condition 
formalizes the notion of perfect alignment and provides a semantic anchor for interpreting lower scores. For TSI, the alignment condition is given by the following equivalence:

\begin{restatable}[Perfect TSI alignment is equivalent to joint Mutual Nearest Neighbors alignment]{corollary}{TsiMutualNnAlignment}
\label{cor:tsi_mutual_nn_alignment}
Let $(X, d_X)$ and $(Y, d_Y)$ be two sets of $N$ representations without ties. Then, full alignment of the TSI is equivalent to joint alignment of Mutual Nearest Neighbors for all $k$-nearest neighbors with $k$ ranging from 1 to $N-2$:
\[
\begin{split}
&\text{TSI}_{d_X, d_Y}(X, Y) = 1 \iff \\ 
&\forall k \in \{1, \dots, N-2\}: \text{MutualNN}_{k, d_X, d_Y}(X, Y) = 1
\end{split}
\]
where $\text{MutualNN}_{k, d_X, d_Y}(X, Y)$ is the Mutual Nearest Neighbors score for $k$ neighbors.
\end{restatable}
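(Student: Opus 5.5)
We prove the equivalence by first turning perfect TSI into a statement about rankings, and then turning each direction of the corollary into that same statement. Write $\mathcal{N}^X_k(i)$ for the $k$ nearest neighbours of $x_i$ in $X\setminus\{x_i\}$, and similarly $\mathcal{N}^Y_k(i)$. The absence of ties makes these sets well defined. We use the standard definition
\[
\text{MutualNN}_{k}(X,Y)=\tfrac{1}{N}\sum_i \tfrac{1}{k}\,\big|\mathcal{N}^X_k(i)\cap \mathcal{N}^Y_k(i)\big|.
\]
Since each summand is at most $1$, $\text{MutualNN}_k=1$ holds exactly when $\mathcal{N}^X_k(i)=\mathcal{N}^Y_k(i)$ for every anchor $i$.

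\textbf{Step 1: TSI as a ranking statement.} TSI is an average of indicators, so $\text{TSI}=1$ iff every triplet $(i,j,k)\in\mathcal{T}$ agrees. Without ties, the signs $O^\Delta$ take values in $\{-1,1\}$. Agreement on all triplets with anchor $i$ therefore says that the strict total order on $\{1,\dots,N\}\setminus\{i\}$ induced by $d_X(x_i,\cdot)$ coincides with the order induced by $d_Y(y_i,\cdot)$. Denote these orders by the rank permutations $\pi^X_i$ and $\pi^Y_i$.

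\textbf{Step 2: the forward direction.} If $\pi^X_i=\pi^Y_i$ for all $i$, then for every $k$ the $k$ nearest neighbours of $i$ (the first $k$ elements of the order) are the same in both spaces. Hence $\text{MutualNN}_k=1$ for every $k\in\{1,\dots,N-2\}$, and indeed for every $k$.

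\textbf{Step 3: the reverse direction.} Suppose $\mathcal{N}^X_k(i)=\mathcal{N}^Y_k(i)$ for all $k\le N-2$ and all $i$. Then the $m$-th neighbour of $i$ in each space is the unique element of $\mathcal{N}_m(i)\setminus\mathcal{N}_{m-1}(i)$, with $\mathcal{N}_0=\emptyset$. These differences coincide across the two spaces, so positions $1,\dots,N-2$ of $\pi^X_i$ and $\pi^Y_i$ agree. Position $N-1$ holds the single remaining element in both orders, so it agrees as well. Thus $\pi^X_i=\pi^Y_i$ for every $i$, and Step 1 gives $\text{TSI}=1$.

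The only delicate point is this bookkeeping. The range stops at $N-2$ because $\mathcal{N}_{N-1}$ is trivially the whole set, and the last rank is forced once the first $N-2$ are fixed. We also need the no-ties assumption: it keeps the nearest-neighbour sets well defined and rules out sign-$0$ agreements, which would otherwise break the correspondence between TSI and total orders.
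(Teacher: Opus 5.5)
Your proposal is correct and takes essentially the same route as the paper. The forward direction reads off equal $k$-nearest-neighbour sets from identical anchor-wise distance orderings. The reverse direction recovers the $m$-th neighbour as the unique element of $\mathcal{N}_m(i)\setminus\mathcal{N}_{m-1}(i)$, so the rank orderings coincide.

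You are slightly more careful than the paper: you state explicitly that the $(N-1)$-th position is forced once the first $N-2$ agree. The paper simply asserts agreement for all $k\in\{1,\dots,N-1\}$, without remarking that $\text{KNN}_{N-1}$ is trivially the full set.
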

This result, proven in \cref{sec:tsi-mutual-nn-alignment-proof}, is significant because it addresses the limitations of traditional neighborhood descriptors. While Mutual Nearest Neighbors (MNN) is extensively used to analyze representation convergence \citep{huh2024platonicrepresentationhypothesis} and forms the backbone of metrics like CKNNA, individual $\text{MutualNN}_k$ scores suffer from specific limitations: they only measure consistency for a fixed number of neighbors ($k$), ignore their relative ordering within that set, and become ill-defined when ties prevent unique neighborhood determination. Although evaluating neighborhood consistency across all possible scales would provide a more complete view, such an exhaustive approach is typically computationally prohibitive. The equivalence in \cref{cor:tsi_mutual_nn_alignment} shows that TSI overcomes this by jointly evaluating local neighborhood consistency across all scales. Importantly, as we show later, TSI can be efficiently estimated with high precision in time independent of the dataset size $N$, positioning it as a computationally efficient metric for capturing multi-scale neighborhood consistency. We further explore this relationship in \cref{sec:tsi-connection-to-mutual-nearest-neighbors}, where we show that TSI alignment also serves as a lower bound for a convex combination of MNN coefficients.

\subsection{Robustness to Outliers}
\label{sec:robustness-to-outliers}

A similarity metric is unreliable if small subsets disproportionately affect its value. TSI and QSI address this by relying on ordinal relationships rather than precise values, which naturally limits the influence of extreme fluctuations. This robustness is formally captured in \cref{lem:sensitivity_subset_transformation}, which establishes tight bounds on the change in similarity when a subset of representations is transformed. 

\begin{restatable}[Bounds for Ordinal Similarity Metrics under Partial Transformations]{lemma}{SensitivitySubsetTransformation}
\label{lem:sensitivity_subset_transformation}
Let the set of indices $\{1, \dots, N\}$ be partitioned into two disjoint subsets, $U$ (unchanged) and $V$ (transformed). Let $T: \mathcal{X} \to \mathcal{X}$ be a transformation, and define a new dataset $X' = \{x'_i\}_{i=1}^N$ where $x'_i = x_i$ for $i \in U$ and $x'_i = T(x_i)$ for $i \in V$. For similarity scores $S \in \{\text{TSI}, \text{QSI}\}$, the score of the transformed dataset, $S(X', Y)$, is bounded by the score on the unchanged subset, $S(X_U, Y_U)$, as follows:
\[
c_S \cdot S(X_U, Y_U) \leq S(X', Y) \leq c_S \cdot S(X_U, Y_U) + (1 - c_S)
\]
where $c_S$ is the proportion of comparisons drawn exclusively from the unchanged set $U$: $c_{\text{TSI}} = \frac{(|U|)_3}{(N)_3}$ and $c_{\text{QSI}} = \frac{(|U|)_4}{(N)_4}$, where $(n)_k = n(n-1) \cdots (n-k+1)$ denotes the falling factorial.
\end{restatable}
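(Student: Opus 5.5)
The plan is to split the uniform average defining $S(X',Y)$ according to whether a tuple uses only indices from $U$, and then bound the two parts separately. Take $S=\text{TSI}$; QSI is handled identically with quadruplets.

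Let $\mathcal{T}$ be the set of ordered triplets of distinct indices, so $|\mathcal{T}|=(N)_3$. Partition it as
\[
\mathcal{T}=\mathcal{T}_U\cup\mathcal{T}_{\neg U},
\]
where $\mathcal{T}_U$ contains the triplets with all indices in $U$ and $\mathcal{T}_{\neg U}$ the remaining ones. Then $|\mathcal{T}_U|=(|U|)_3$, and hence $|\mathcal{T}_U|/|\mathcal{T}|=c_{\text{TSI}}$.

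Since TSI is a uniform average of the indicators $\mathbb{I}[O^{\Delta}_{d_X}(x'_i,x'_j,x'_k)=O^{\Delta}_{d_Y}(y_i,y_j,y_k)]$, we can write
\[
\text{TSI}(X',Y)=c_{\text{TSI}}\,A_U+(1-c_{\text{TSI}})\,A_{\neg U},
\]
where $A_U$ and $A_{\neg U}$ are the average agreements over $\mathcal{T}_U$ and $\mathcal{T}_{\neg U}$ respectively. If $(1-c)$ is zero, the second term is simply absent.

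The key observation concerns $A_U$. For $(i,j,k)\in\mathcal{T}_U$ we have $x'_i=x_i$, $x'_j=x_j$ and $x'_k=x_k$, so each comparison is unchanged by $T$. Moreover, $\mathcal{T}_U$ is exactly the set of distinct-index triplets of the subdataset indexed by $U$. Therefore $A_U=\text{TSI}(X_U,Y_U)$.

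The other average $A_{\neg U}$ involves the transformed points and is not controlled at all. However, it is an average of indicators, so $0\le A_{\neg U}\le 1$. Substituting these two extremes gives the lower bound $c\,S(X_U,Y_U)$ and the upper bound $c\,S(X_U,Y_U)+(1-c)$.

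For QSI the same argument applies with $\mathcal{Q}$, $|\mathcal{Q}|=(N)_4$ and $|\mathcal{Q}_U|=(|U|)_4$.

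There is no serious obstacle here. The only care needed is to confirm that restricting the index set turns the uniform average over $\mathcal{T}_U$ into exactly the metric on $(X_U,Y_U)$. This holds because both are uniform over ordered distinct tuples of $U$.

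One edge case needs a convention: when $|U|<3$ (resp. $<4$), we have $c=0$ and $S(X_U,Y_U)$ is undefined. The bound then holds trivially, since it reduces to $0\le S\le 1$.

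If the statement's claim of tightness is intended, I would note that the bounds are attained when $T$ makes every mixed comparison agree or every mixed comparison disagree. One could exhibit this by placing the transformed points far away, so that all mixed orderings are fixed and can be chosen to match or contradict $Y$. This is optional for the inequality itself.
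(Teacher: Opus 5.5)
Your proof is correct and is essentially the paper's argument: the paper likewise splits the sum over $\mathcal{T}$ (resp.\ $\mathcal{Q}$) into tuples lying entirely in $U$, identifies that part with $(|U|)_3\cdot S(X_U,Y_U)$, and bounds the remaining $(N)_3-(|U|)_3$ indicators between $0$ and $1$; you simply phrase this as a convex combination of averages and also handle the $|U|<3$ edge case. Your closing tightness remark is not needed for the statement, and the claim that placing transformed points far away can force every mixed comparison to agree or disagree with $Y$ would need more justification, so you can drop it.
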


This result, proven in \cref{sec:bounds-partial-transformation}, quantifies the extent to which the similarity score remains bounded by the untransformed subset. Within this paradigm, \textbf{outliers} (extreme numerical representations) and \textbf{corruptions} (perturbations of existing samples) can be viewed as partial transformations, allowing us to bound their deviation from the ground-truth, i.e., the ideal representations we wish to evaluate but which are often inaccessible: (i) the clean subset after removing outliers, $|S(X_U, Y_U) - S(X', Y)|$; and (ii) the original dataset without corruptions, $|S(X, Y) - S(X', Y)|$.

\begin{restatable}[TSI and QSI are Robust to Corruptions and Outliers]{corollary}{OutlierRobustness}
\label{corol:outlier-sensitivity}
Consider the setup of \cref{lem:sensitivity_subset_transformation} where $U$ indices the subset of clean or unchanged points and $V$ corresponds to the subset of corruptions/outliers. For any similarity metric $S \in \{\text{TSI}, \text{QSI}\}$, the absolute deviation of the similarity score is bounded by:
\begin{align*}
|S(X_U, Y_U) - S(X', Y)| &\leq 1 - c_S \quad \text{(Outliers)} \\
|S(X, Y) - S(X', Y)| &\leq 1 - c_S \quad \text{(Corruptions)} 
\end{align*}
\end{restatable}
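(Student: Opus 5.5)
Both inequalities follow from \cref{lem:sensitivity_subset_transformation}, applied in the right way to each case. Throughout we use only that any score $S\in\{\text{TSI},\text{QSI}\}$ lies in $[0,1]$, since it is an average of indicators, and that $c_S\in[0,1]$.

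\textbf{Outliers.} We start from the two-sided bound of \cref{lem:sensitivity_subset_transformation},
\[
c_S\, S(X_U,Y_U) \le S(X',Y) \le c_S\, S(X_U,Y_U) + (1-c_S),
\]
and subtract $S(X_U,Y_U)$ throughout:
\[
-(1-c_S)\,S(X_U,Y_U) \;\le\; S(X',Y)-S(X_U,Y_U) \;\le\; (1-c_S)\bigl(1-S(X_U,Y_U)\bigr).
\]
Both sides have absolute value at most $1-c_S$, because $S(X_U,Y_U)$ and $1-S(X_U,Y_U)$ both lie in $[0,1]$. This gives the first claim.

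\textbf{Corruptions.} We do not compare $S(X,Y)$ and $S(X',Y)$ directly. Instead, note that $X$ is itself a partial transformation of the reference configuration restricted to $U$, namely with the identity map on $V$. Applying \cref{lem:sensitivity_subset_transformation} with $T=\mathrm{id}$ gives
\[
c_S\, S(X_U,Y_U) \le S(X,Y) \le c_S\, S(X_U,Y_U)+(1-c_S).
\]
The same lemma with the actual corruption $T$ gives the identical interval for $S(X',Y)$. Since the unchanged subset $U$ is the same in both cases, so is $S(X_U,Y_U)$, and hence both scores lie in one common interval of length $1-c_S$. Their difference is therefore bounded in absolute value by $1-c_S$.

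The only step needing care is this last one: the lemma must be applied twice with the same $U$, so that both scores sit in the same interval and the triangle inequality is not needed. That is what avoids a factor-of-two loss and yields $1-c_S$ rather than $2(1-c_S)$.

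An alternative, self-contained argument splits the comparison set $\mathcal{T}$ (respectively $\mathcal{Q}$) into tuples lying entirely in $U$, a fraction $c_S$, and the remainder. On the former, the indicators for $X$ and $X'$ coincide. On the latter, the indicators can differ by at most $1$ per tuple, so the averages differ by at most $1-c_S$.
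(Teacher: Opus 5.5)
Your proof is correct and follows the paper's own argument. For outliers you subtract $S(X_U,Y_U)$ from the bounds of \cref{lem:sensitivity_subset_transformation} and use $S(X_U,Y_U)\in[0,1]$; for corruptions you apply the lemma twice (once with the identity on $V$), so both scores lie in a common interval of length $1-c_S$. Your closing direct-counting argument is also valid and gives the corruption bound without routing through $S(X_U,Y_U)$ at all.
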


 The deviation upper-bounds in \cref{corol:outlier-sensitivity} guarantee that neither extreme outliers nor corrupted entries can disproportionately shift the similarity score. Since $(1 - c_S) \to 0$ as the fraction of transformed points $|V|/N$ vanishes, ordinal similarity is theoretically robust to sparse outliers/corruptions (proof in \cref{sec:tsi-qsi-robustness-to-outliers}).

\subsection{Efficient Computation}
\label{sec:efficient-computation}

A critical requirement for any metric is computational tractability \citep{räisä2025positioncurrentgenerativefidelity}. While naive TSI and QSI implementations require $\mathcal{O}(N^3)$ and $\mathcal{O}(N^4)$ time, resulting in prohibitive runtimes, in this section, we show that not only can our metrics be computed exactly with a complexity comparable to existing methods, but they also admit a principled approximation scheme with theoretical guarantees.

\paragraph{Tractable Exact Computation.} Noting the connection between TSI, QSI, and Kendall's Tau Correlation Coefficient \citep{kendall1938new}, as explored in \cref{sec:connection-to-kendall-tau}, we draw inspiration from efficient algorithms for the Kendall's Tau computation \citep{knight1966computer} to significantly reduce the cost of exact computation of TSI and QSI, leading to improved time-complexity bounds:

\begin{restatable}[Computational Complexity of TSI and QSI]{corollary}{TsiQsiComplexity}
TSI can be computed exactly in $\mathcal{O}(N^2 \log N)$ time and $\mathcal{O}(N)$ space. Assuming a symmetric distance function, QSI can be computed exactly in $\mathcal{O}(N^2 \log N)$ time and $\mathcal{O}(N^2)$ space.
\label{corol:tsi-qsi-complexity}
\end{restatable}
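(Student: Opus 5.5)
The plan is to reduce both indices to Kendall-type concordance counts and compute them with Knight's $\mathcal{O}(n\log n)$ merge-sort algorithm. Knight's algorithm counts concordant, discordant and tied pairs of a list of $n$ paired values $(u_m,v_m)$ in $\mathcal{O}(n\log n)$ time and $\mathcal{O}(n)$ space.

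\emph{TSI.} Fix an anchor $i$ and form the $n=N-1$ pairs $(u_j,v_j)=(d_X(x_i,x_j),\,d_Y(y_i,y_j))$ for $j\neq i$. For an ordered pair $(j,k)$ with $j\neq k$, the event $O^{\Delta}_{d_X}(x_i,x_j,x_k)=O^{\Delta}_{d_Y}(y_i,y_j,y_k)$ splits into three mutually exclusive cases: $\sign(u_j-u_k)=\sign(v_j-v_k)\neq 0$ (concordant), a tie in both coordinates, or neither. The number of agreeing unordered pairs is therefore $C_i+T^{XY}_i$, where $C_i$ is the number of concordant pairs and $T^{XY}_i$ the number of pairs tied in both coordinates. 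Each unordered pair counts twice among ordered triplets, and the agreement indicator is symmetric in $j\leftrightarrow k$.

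Knight's algorithm gives $C_i$ after a lexicographic sort by $(u,v)$ followed by a merge sort on $v$ that counts inversions. The joint ties $T^{XY}_i$ come from run lengths of equal $(u,v)$ in the sorted order. The total cost per anchor is $\mathcal{O}(N\log N)$ time and $\mathcal{O}(N)$ space, and we compute the distances on the fly from the stored points. Summing over the $N$ anchors, with space reused between anchors, gives
\[
\text{TSI}=\frac{2\sum_i (C_i+T^{XY}_i)}{(N)_3}
\]
in $\mathcal{O}(N^2\log N)$ time and $\mathcal{O}(N)$ working space. This assumes the point sets themselves are given as input, so only the working memory is counted.

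\emph{QSI.} By symmetry of the distance functions, $O_{d}(a,b,c,d)$ depends only on the unordered pairs $\{a,b\}$ and $\{c,d\}$. We therefore list the $M=\binom{N}{2}$ unordered pairs $p=\{i,j\}$ with values $(d_X(x_i,x_j),d_Y(y_i,y_j))$, which takes $\mathcal{O}(N^2)$ space. A single Knight pass over these $M$ items counts concordant plus jointly tied pairs-of-pairs in $\mathcal{O}(M\log M)=\mathcal{O}(N^2\log N)$ time.

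This pass includes pairs of pairs that share an index, such as $\{i,j\}$ versus $\{i,k\}$, and these are excluded from $\mathcal{Q}$ because its four indices must be distinct. \textbf{The main obstacle is this correction: removing the overlapping pairs without paying $\mathcal{O}(N^3)$.} The resolution is that the overlapping pairs are exactly the anchor-based triplet comparisons already handled by TSI. The total agreement count over pairs sharing exactly one index equals $\sum_i (C_i+T^{XY}_i)$ from the TSI computation: each such unordered pair-of-pairs $\{\{i,j\},\{i,k\}\}$ has a unique shared anchor $i$ and corresponds to the unordered pair $\{j,k\}$ in anchor $i$'s list. Two distinct unordered pairs cannot share two indices. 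So we subtract the TSI agreement total, computed in the same $\mathcal{O}(N^2\log N)$ time, from the global count.

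The normalization then follows by counting. The number of unordered pairs of disjoint pairs is $\binom{N}{2}\binom{N-2}{2}/2$, and each corresponds to $8$ ordered quadruplets in $\mathcal{Q}$: $2$ orders within each of the two pairs and $2$ orders of which pair comes first. This gives $(N)_4=8\cdot\binom{N}{2}\binom{N-2}{2}/2$. The agreement indicator is invariant under all $8$ reorderings, because swapping which pair comes first flips the sign in both spaces simultaneously. Dividing the corrected count by $\binom{N}{2}\binom{N-2}{2}/2$ therefore yields QSI exactly, in $\mathcal{O}(N^2\log N)$ time and $\mathcal{O}(N^2)$ space.
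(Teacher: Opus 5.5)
Your proposal is correct and takes essentially the same route as the paper. For TSI, you run $N$ anchor-wise, tie-aware Kendall concordance counts; this is the paper's Adj-Kendall-$\tau$ subroutine, which uses Knight-style inversion counting with the correction $C=\binom{M}{2}-I-T_A-T_B+T_{AB}$. For QSI, you make one Kendall pass over the $\binom{N}{2}$ pairwise distances, subtract the anchor-sharing comparisons using the TSI counts, and renormalize; this is exactly the paper's identity $\frac{4}{N+1}\text{TSI}+\frac{N-3}{N+1}\text{QSI}=\text{Adj-Kendall-}\tau(D_X^{\text{all}},D_Y^{\text{all}})$, which the paper inverts in its efficient QSI algorithm.
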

This quadratic complexity, established in \cref{sec:exact-computation}, makes our metrics practical for many real-world datasets.

\paragraph{Principled Approximation for Large-Scale Data.} While quadratic complexity is a significant improvement, it can still be a bottleneck for the massive datasets common in modern machine learning. This has led other metrics to rely on heuristic, batched approximations that often lack theoretical guarantees of convergence \citep{nguyen2021widedeepnetworkslearn, huh2024platonicrepresentationhypothesis}. In contrast, a key advantage of our ordinal framework is that TSI and QSI, being bounded U-statistics, provide high-quality estimates with probabilistic bounds on their deviation error by evaluating the ordinal relationships of a small subset of uniformly sampled triplets/quadruplets, respectively. We formalize these concepts in \cref{corol:approximate-tsi-qsi-complexity} and provide supporting formal analysis in \cref{sec:approximate-computation}.

\begin{restatable}[Computational Complexity of Approximate TSI and QSI]{corollary}{ApproximateComplexity}
By evaluating $\lceil \frac{1}{2\epsilon^2}\log(\frac{2}{\delta}) \rceil$ triplets or quadruplets sampled uniformly at random, we can estimate TSI or QSI with an additive error of at most $\epsilon$ with a probability of at least $1-\delta$. This approximation has a time complexity of $\mathcal{O}\left(\frac{1}{\epsilon^2}\log\left(\frac{1}{\delta}\right)\right)$ and an auxiliary space complexity of $\mathcal{O}(1)$.
\label{corol:approximate-tsi-qsi-complexity}
\end{restatable}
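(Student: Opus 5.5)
The plan is to view TSI (and likewise QSI) as the mean of a bounded random variable, and then obtain the sample count from Hoeffding's inequality. Let $Z_{(i,j,k)} = \mathbb{I}[O^{\Delta}_{d_X}(x_i,x_j,x_k) = O^{\Delta}_{d_Y}(y_i,y_j,y_k)] \in \{0,1\}$. By the definition of TSI and the probabilistic reading in \cref{eq:tsi_probabilistic}, drawing a triplet $(i,j,k)$ uniformly from $\mathcal{T}$ makes $Z$ a Bernoulli variable with mean exactly $\text{TSI}_{d_X,d_Y}(X,Y)$. For QSI we do the same with $\mathcal{Q}$ and $O_{d}$ and use \cref{eq:qsi_probabilistic}. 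The estimator samples $M$ tuples independently and uniformly, with replacement, and returns the empirical mean $\hat S = \frac{1}{M}\sum_{m=1}^M Z_m$.

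\textbf{Accuracy.} The $Z_m$ are i.i.d. and take values in $[0,1]$, so Hoeffding's inequality gives $P(|\hat S - S| \geq \epsilon) \leq 2\exp(-2M\epsilon^2)$. Setting the right-hand side at most $\delta$ and solving for $M$ gives $M \geq \frac{1}{2\epsilon^2}\log\frac{2}{\delta}$. Hence $M = \lceil \frac{1}{2\epsilon^2}\log(\frac{2}{\delta}) \rceil$ suffices for additive error at most $\epsilon$ with probability at least $1-\delta$. One subtlety is that the full-population quantity is a U-statistic over distinct indices, not a mean over i.i.d. samples. The key point to state is that we do not apply Hoeffding to the U-statistic itself. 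We apply it to Monte Carlo samples drawn from the finite uniform distribution over $\mathcal{T}$ (or $\mathcal{Q}$), whose mean is exactly the target. Sampling with replacement keeps the draws independent.

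\textbf{Cost.} A uniform tuple of distinct indices can be generated in $\mathcal{O}(1)$ expected time by rejection sampling: draw three (or four) indices uniformly from $\{1,\dots,N\}$ and redraw on any collision. For $N \geq 4$ the acceptance probability is bounded below by a constant. Alternatively, we can draw sequentially without replacement from shrinking ranges, which takes $\mathcal{O}(1)$ deterministic time. Evaluating $Z_m$ takes four distance evaluations (two in each space) and two sign comparisons. Treating the embedding dimension and the cost of one distance evaluation as constants, as the statement implicitly does, each sample costs $\mathcal{O}(1)$ time. The total time is then $\mathcal{O}(M) = \mathcal{O}(\frac{1}{\epsilon^2}\log\frac{1}{\delta})$, since $\log\frac{2}{\delta} = \mathcal{O}(\log\frac1\delta)$ for $\delta<1$. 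Only a running counter and the current indices are stored, so auxiliary space is $\mathcal{O}(1)$.

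The main obstacle is stating the assumptions cleanly rather than any hard estimate. The $\mathcal{O}(1)$ per-sample cost requires the dimension $d$ to be treated as fixed; otherwise every bound gains a factor $d$. Using the expected rather than worst-case time for rejection sampling also needs a remark, or should be replaced by the sequential sampling scheme.
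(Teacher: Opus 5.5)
Your proposal is correct and follows essentially the same route as the paper: \cref{lem:approximation_bounds} treats the agreement indicator of a uniformly sampled triplet/quadruplet as a Bernoulli variable with mean exactly TSI/QSI and applies Hoeffding's inequality to the $n$ i.i.d.\ draws, and the corollary inverts that bound to get $n=\lceil \frac{1}{2\epsilon^2}\log(\frac{2}{\delta})\rceil$ and counts $\mathcal{O}(1)$ work per sample with a single running sum. Your extra remarks only make explicit assumptions the paper leaves implicit: generating distinct-index tuples in $\mathcal{O}(1)$, treating one distance evaluation as constant cost, and needing $\delta$ bounded away from $1$ for $\log\frac{2}{\delta}=\mathcal{O}(\log\frac{1}{\delta})$ to hold.
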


Notably, the complexity of this approximation is independent of the dataset size $N$. This yields a theoretically grounded and highly scalable method to estimate representational alignment, ensuring reliable results even on massive datasets where exact computation is infeasible.

\subsection{Predictable Estimates of True Similarity}

\begin{figure*}[t]
\centering
\includegraphics[width=\textwidth]{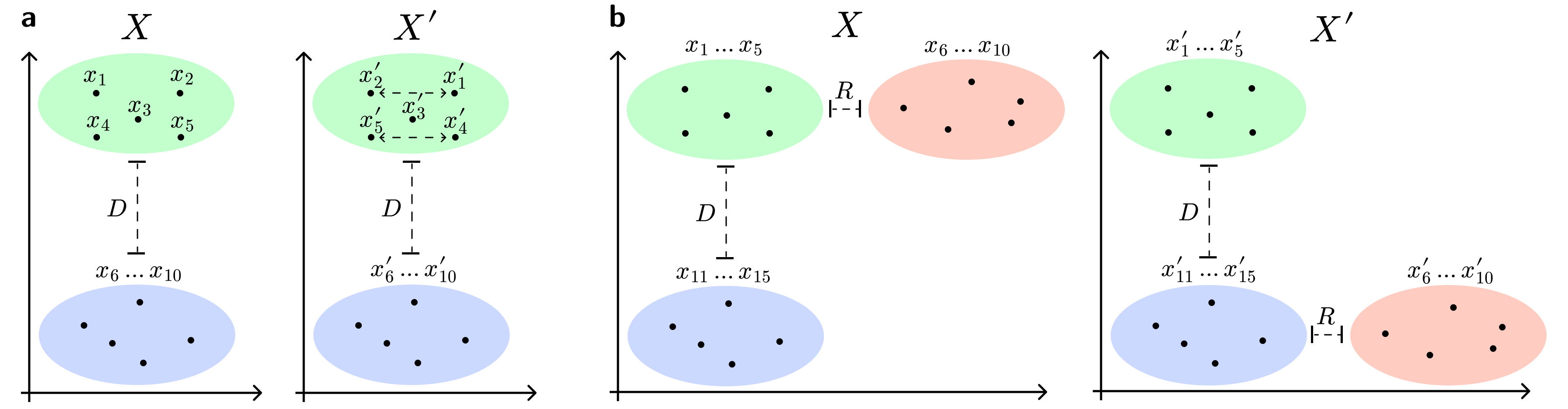}
\vspace{-1.5em}
\caption{\textbf{(a)} Local failure mode of MutualNN and CKA: shuffling points within a cluster (green) destroys local geometric structure. \textbf{(b)} Global failure mode of MutualNN and CKNNA: rigidly translating one cluster (red) drastically alters the macroscopic layout.}
\label{fig:failure-modes}
\vspace{-1em}
\end{figure*}

Beyond computing a similarity metric on a fixed dataset of size $N$, a fundamental question in practice is how reliably this score estimates the true alignment of the underlying data-generating processes (i.e., the similarity as $N \to \infty$). Because TSI and QSI are bounded U-statistics, we prove in \cref{sec:concentration-bounds-of-true-population-similarity} that they exhibit Hoeffding-type concentration bounds, guaranteeing that the probability of deviation from their true population values decays exponentially with $N$. This predictable convergence, a significant property not shown in existing metrics, perfectly complements the fact that, as $N \to \infty$, ordinal constraints uniquely determine representations up to similarity-preserving transformations \citep{kleindessner2014uniqueness}. Because ordinal relationships uniquely dictate geometric structure in the large-sample limit, and our empirical metrics tightly concentrate around their population values, TSI and QSI serve as reliable and robust measurements of alignment between the generating processes of $X$ and $Y$, even for moderate sample sizes. \looseness-1

\section{Illustrative Examples}
\label{sec:illustrative-examples}

We now present illustrative examples that expose concrete failure modes of existing metrics and highlight the complementary nature of TSI and QSI. Each example starts from a representation $X$, applies a structure-disrupting transformation to obtain $X'$, and evaluates whether each metric correctly reports misalignment (i.e., a score $< 1$ between representations $X, X'$). Throughout, all metrics use the Euclidean distance $d_X = \lVert \cdot \rVert_2$. For brevity, we only state the critical insights; formal computations are deferred to \cref{sec:illustrative-examples-complete-description}.

\subsection{Local Failure Mode of MutualNN and CKA}

Metrics relying on neighborhood overlap or centered kernels can be blind to local geometric disruptions (\cref{fig:failure-modes}a), with $\text{MutualNN}_{k-1}(X, X') = 1$ and $\text{CKA}(X, X') \to 1$, while $\text{TSI}(X, X') < 1$ and $\text{QSI}(X, X') < 1$. \textbf{Input} $X \subset \mathbb{R}^d$ consists of $N = 2k$ representations ($k \geq 5$) partitioned into two clusters $C_1$ and $C_2$, each of size $k$, with all intra-cluster pairwise distances unique and inter-cluster distance $D \to +\infty$. The \textbf{transformed} $X'$ is obtained by applying a permutation $\pi$ that shuffles points \emph{within} $C_1$ ($\pi(C_1) = C_1$), leaving at least one point fixed and swapping two pairs. This destroys local geometric structure while preserving cluster assignments.

\subsection{Global Failure Mode of MutualNN and CKNNA}

Local metrics based on $k$-nearest-neighbor structures can be blind to massive global rearrangements (\cref{fig:failure-modes}b), yielding $\text{MutualNN}_{k-1}(X, X') = 1$ and $\text{CKNNA}_{k-1}(X, X') = 1$, while $\text{TSI}(X, X') < 1$ and $\text{QSI}(X, X') < 1$. \textbf{Input} $X \subset \mathbb{R}^d$ consists of $N = 3k$ representations ($k \geq 2$) partitioned into three tightly packed clusters $C_1, C_2, C_3$ with centroids $c_1, c_2, c_3$, inter-cluster distances $\lVert c_1 - c_2 \rVert \approx D$ and $\lVert c_1 - c_3 \rVert \approx R$ with $D \gg R \gg 1$, and $c_1 - c_2 \perp c_1 - c_3$. The \textbf{transformed} $X'$ translates the entire cluster $C_3$ by $c_2 - c_1$, leaving $C_1$ and $C_2$ unchanged, so that $\lVert c_1 - c_3' \rVert \approx \sqrt{D^2 + R^2}$ and $\lVert c_2 - c_3' \rVert \approx R$.

\vspace{-1em}
\begin{figure}[htbp]
\centering
\includegraphics[width=0.65\columnwidth]{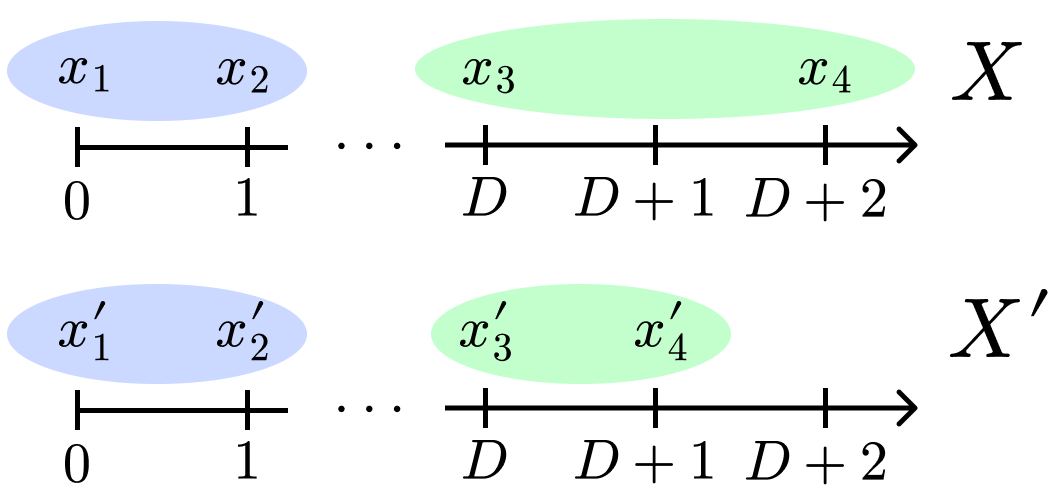}
\caption{Distinguishing TSI and QSI: contracting the upper cluster from gap $2$ to $1$ preserves every point's nearest-neighbor ordering, yet alters the relative scale between disjoint intra-cluster pairs; TSI remains at $1$, while QSI drops to $1/3$.}
\label{fig:distinguishing-tsi-qsi}
\vspace{-1em}
\end{figure}

\subsection{Distinguishing TSI and QSI}

We show that QSI, as opposed to TSI, is sensitive to global geometric changes that leave local neighborhood structure intact (\cref{fig:distinguishing-tsi-qsi}), yielding $\text{QSI}(X, X') = \frac{1}{3}$ (severe misalignment) despite $\text{TSI}(X, X') = 1$ (perfect alignment). \textbf{Input} $X$ consists of $N=4$ points in $\mathbb{R}$: $X = \{0,\, 1,\, D,\, D{+}2\}$ with $D \gg 1$, forming two remote clusters $\{x_1, x_2\}$ and $\{x_3, x_4\}$. The \textbf{transformed} $X'$ contracts the second cluster by mapping $x_4 \mapsto x_4' = D{+}1$, giving $X' = \{0,\, 1,\, D,\, D{+}1\}$.

\section{Experiments}
\label{sec:experiments}

\begin{figure*}[t]
\centering
\includegraphics[width=0.95\textwidth]{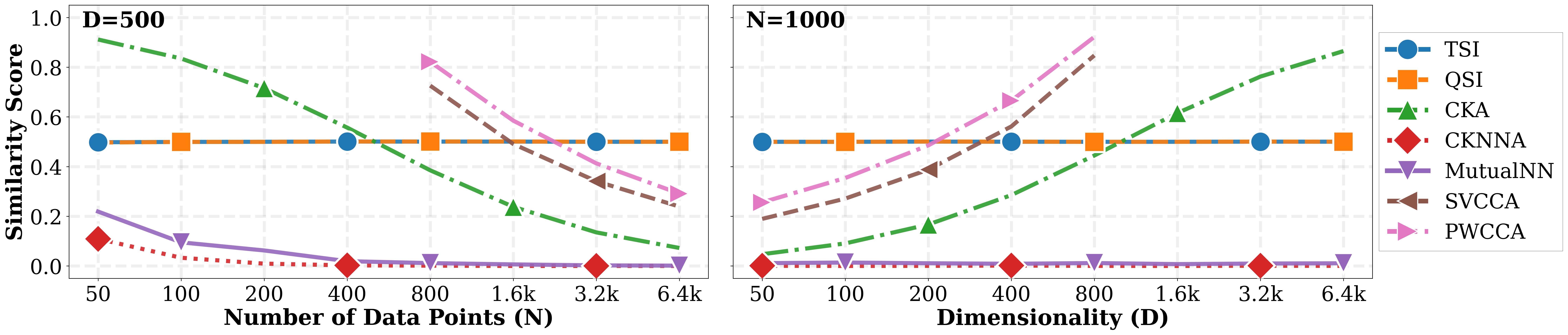}
\vspace{-0.5em}
\caption{Analysis of the output of various similarity metrics in the setting of two statistically independent representation sets composed of $N$ uniformly sampled points from a unitary $D$ dimensional hypercube.}
\label{fig:dissimilar-baselines}
\vspace{-1em}
\end{figure*}

\begin{figure*}[t]
\centering
\includegraphics[width=0.95\textwidth]{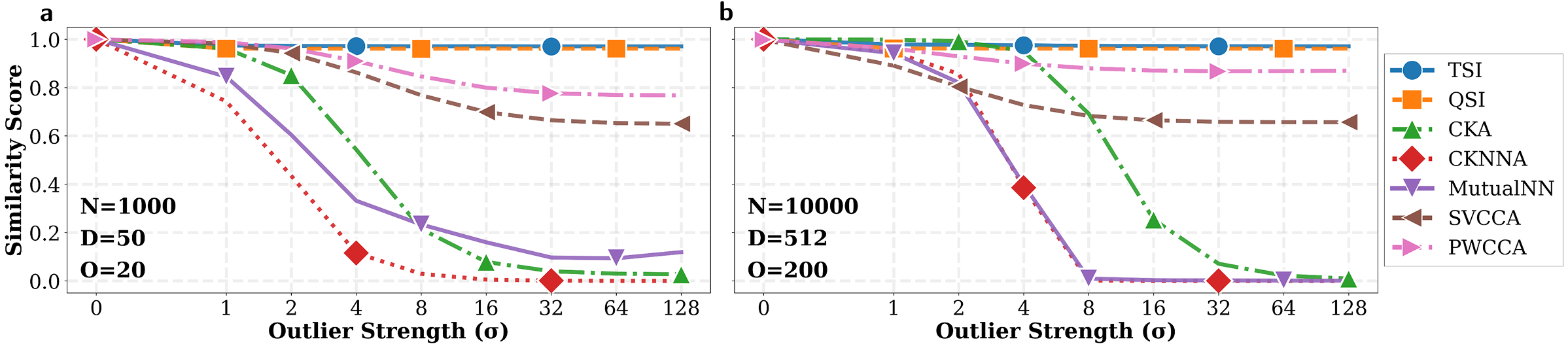}
\vspace{-0.5em}
\caption{Evaluation of similarity metric sensitivity when 2\% of the dataset is perturbed by outliers of increasing strength ($\sigma$). a) Synthetic representations sampled from a $D$-dimensional hypercube, b) Representations from a Vision Transformer trained on CIFAR-10.}
\label{fig:outlier-robustness}
\vspace{-0.5em}
\end{figure*}

In this section, we compare the interpretability, sensitivity to outliers, and approximation quality of ordinal similarity (TSI/QSI) with metrics from previous work. Furthermore, we explore two real-world case studies: analyzing the convergence of representations during training, and evaluating multimodal alignment in CLIP models. We provide the mathematical formulation of these metrics in \cref{sec:similarity-metrics-mathematical-description} and describe the parameter selection used to compute them in \cref{sec:experimental-details}. Note that for the experiments in \cref{sec:experiments-efficient-approximation,sec:experiments-representation-convergence,sec:experiments-clip-alignment}, we do not report SVCCA and PWCCA due to the numerical instability problems highlighted in \cref{sec:computation-instability-svcca-pwcca}.

\subsection{Ordinal Similarity Provides Consistent Baseline for Dissimilar Representations}
\label{sec:experiments-independent-expectation}

We investigate the stability of dissimilarity scores i.e., scores obtained for two statistically independent representation sets, generated by sampling $N$ points from a $D$-dimensional uniform distribution, $\mathcal{U}([0,1]^D)$. The evaluation was performed under two distinct conditions, with results averaged over 20 runs: (i) varying the number of samples $N$ with fixed dimensionality $D=500$, and (ii) varying the dimensionality $D$ with a fixed number of samples $N=1000$.

The results, presented in \cref{fig:dissimilar-baselines}, demonstrate that dissimilarity scores of several existing metrics are highly sensitive to these parameters. The scores of CKA, SVCCA and PWCCA  strongly depend on both sample size and dimensionality, spanning nearly the entire range of possible values; in certain scenarios, they even yield almost perfect alignment for independent representations. While CKNNA and MutualNN appear stable, this is an artifact of their small neighborhood ($k=10$): their sensitivity becomes noticeable at smaller sample sizes ($N \leq 400$) as $N$ approaches $k$; in the limit $k=N-1$, CKNNA corresponds to CKA. In contrast, as predicted by our theoretical analysis in \cref{lem:expectation-over-random-representations}, TSI and QSI provide stable scores for statistical independence ($\approx0.5$), yielding consistent similarity scores regardless of sample size or dimensionality. This stability provides an interpretable baseline, allowing alignment quality to be judged independently of these parameters.

\subsection{Ordinal Similarity Metrics Are Robust to Outliers}
\label{sec:experiments-outlier-robustness}
\begin{figure*}[t]
\centering
\includegraphics[width=0.95\textwidth]{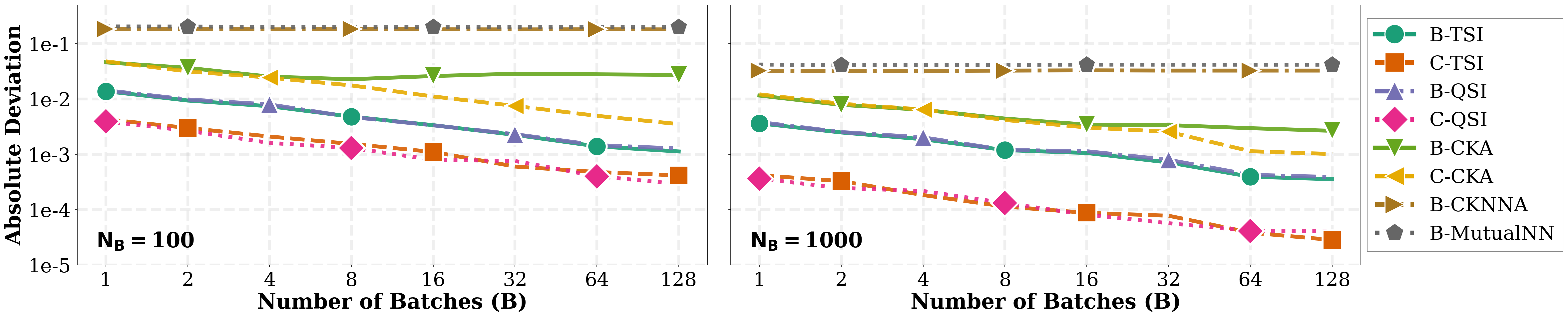}
\vspace{-0.5em}
\caption{Evaluation of the approximation error for various similarity metrics. The plot reports the absolute deviation between the exact and approximate similarity scores as a function of the number of batches ($B$) for varying batch sizes ($N_B$).}
\label{fig:approximation}
\vspace{-1em}
\end{figure*}

\begin{figure*}[t]
\centering
\includegraphics[width=0.95\textwidth]{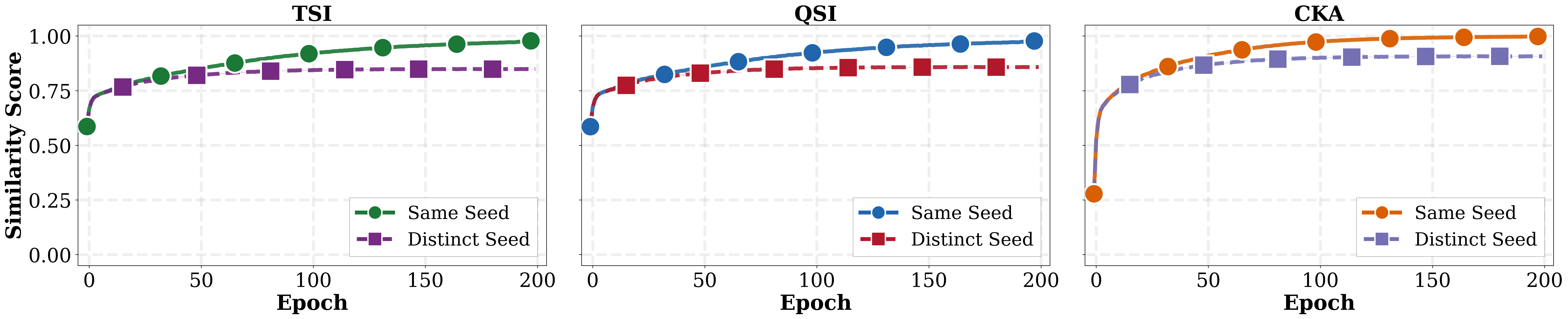}
\vspace{-0.5em}
\caption{Evolution of representation similarity during training on CIFAR-10. We compare the representations of a ViT at epoch $t$ with its final representation (Same Seed) and with the final representation of a separately initialized model (Distinct Seed). }
\label{fig:model-convergence-case-study}
\vspace{-1em}
\end{figure*}

Inspired by the experimental setting in \citet{davari2022reliabilityckasimilaritymeasure}, we assess robustness by comparing an original representation set $X$ to a version $X'$ containing a small fraction of perturbed points. Following \cref{sec:robustness-to-outliers}, these perturbations can be considered as either transforming normal points into outliers or directly injecting noisy corruptions. We evaluate two settings: a) a synthetic dataset of $N=1000$ points sampled from $\mathcal{U}([0,1]^{50})$, and b) $10000$ real-world representations from a Vision Transformer (ViT) \citep{dosovitskiy2021imageworth16x16words} trained on CIFAR-10 \citep{Krizhevsky09} (details in \cref{sec:training-vit-on-cifar}). In both cases, 2\% of the data are designated for perturbation. To standardize the scale, each perturbed point is displaced by a vector $\overline{d}\sigma\mathbf{v}$, where $\overline{d}$ is the average Euclidean pairwise distance of the set, $\sigma$ is a scalar controlling the perturbation strength, and $\mathbf{v}$ is a uniformly sampled random unit vector. We report the change in similarity as a function of $\sigma$, averaged over 20 runs.

Given that 98\% of the data points remain identical between the two representation sets, a reliable metric should output a consistently high similarity score close to $1$. However, the results in \cref{fig:outlier-robustness} show significant deviations for several standard metrics. In both the synthetic and real-world settings (panels a and b respectively), CKA, CKNNA, and MutualNN are highly sensitive to the perturbation magnitude $\sigma$, eventually dropping to near-zero values. This is counter-intuitive for neighborhood-based metrics; In \cref{sec:counter-intuitive-behavior-mutualnn-cknna}, we trace this failure to dot-product induced hubness, where a single outlier globally disrupts neighborhood structures. SVCCA and PWCCA are more resilient but still exhibit a noticeable decline as the outlier strength increases. In contrast, TSI and QSI remain remarkably stable, matching the tight theoretical lower bounds from \cref{lem:sensitivity_subset_transformation} (e.g., $\approx 94\%$ for TSI in panel a with $N=1000$ and $|U|=980$). Since these bounds depend only on the proportion of outliers and are independent of distance function selection, ordinal metrics are immune to failures of other approaches. \looseness=-1

\subsection{Ordinal Similarity Can Be Efficiently Approximated with Negligible Deviation}
\label{sec:experiments-efficient-approximation}

We evaluate the approximation quality of various metrics by measuring the absolute deviation between exact scores and their estimates as the computational budget increases. 
The metrics are applied to compute the similarity between initial and final representations of a ViT trained on CIFAR-10 (\cref{sec:training-vit-on-cifar}). Following \cref{sec:batch-approximations}, we compare standard batched approximations (B- prefix), which average over $B$ batches of size $N_B$, with metric-specific schemes (C- prefix). For CKA, the C- prefix denotes batching of individual HSIC coefficients \citep{nguyen2021widedeepnetworkslearn}, while for TSI and QSI it represents our approximate sampling approach, using $B \times N_B^2$ samples to match the computational cost of quadratic metrics. 

As shown in \cref{fig:approximation}, the metric-specific schemes consistently outperform the standard batched approximations. In the regime of smaller batches ($N_B=100$), standard approximations yield absolute deviations between $10^{-1}$ and $10^{-2}$. Given that similarity scores are bounded within $[0,1]$, errors of this magnitude are significant, potentially obscuring meaningful distinctions in representation alignment. While increasing the batch size to $N_B=1000$ reduces the deviation error for all metrics, standard local metrics (CKNNA, MutualNN) continue to exhibit high approximation errors that stagnate despite increased computation. In contrast, TSI and QSI demonstrate rapid, monotonic convergence, yielding estimates orders of magnitude more precise than competing approaches. These empirical results validate the tight theoretical bounds established in \cref{corol:approximate-tsi-qsi-complexity}, confirming that our ordinal framework offers not just scalability, but a mathematically guaranteed confidence in approximation quality that heuristic methods lack. 

\subsection{TSI and QSI Capture Representation Convergence in Real-World Settings}
\label{sec:experiments-representation-convergence}

To validate TSI and QSI in a practical training scenario, we analyze representation convergence of a ViT when trained on CIFAR-10, see the setup in \cref{sec:training-vit-on-cifar}. We track the alignment between intermediate model checkpoints at epoch $t$ and two reference points: (i) the final representation of the \textit{same} model, and (ii) the final representation of a model trained with a \textit{distinct} random seed. We compare our metrics against CKA; for the sake of conciseness, we opt not to report results for metrics that only capture local structure (MutualNN, CKNNA). To reflect real-world resource constraints, all metrics are approximated rather than exactly computed. These estimations use $B=10$ batches of size $N_B=1000$, utilizing metric-specific approximation schemes where applicable (see \cref{sec:experimental-details} for details). All results are averaged over 5 independent runs.

As shown in \cref{fig:model-convergence-case-study}, TSI and QSI accurately capture the training dynamics, exhibiting behavior consistent with established expectations that similarity increases as training progresses. In the ``Same Seed'' regime, both metrics start near the random dissimilarity baseline ($\approx 0.5$) and monotonically increase to perfect alignment ($1.0$) as the model converges to itself. In the ``Distinct Seed'' regime, the metrics correctly identify that while models trained from different initializations converge to similar representations, they do not reach identical alignment. These results confirm that ordinal metrics effectively measure representation evolution in real-world deep learning workflows, offering the utility of established methods like CKA while maintaining the theoretical robustness and interpretability advantages demonstrated in previous sections.

\subsection{Ordinal Similarity Consistently Tracks Multimodal Alignment Across Model Scales}
\label{sec:experiments-clip-alignment}

\begin{figure}
\centering
\includegraphics[width=1\columnwidth]{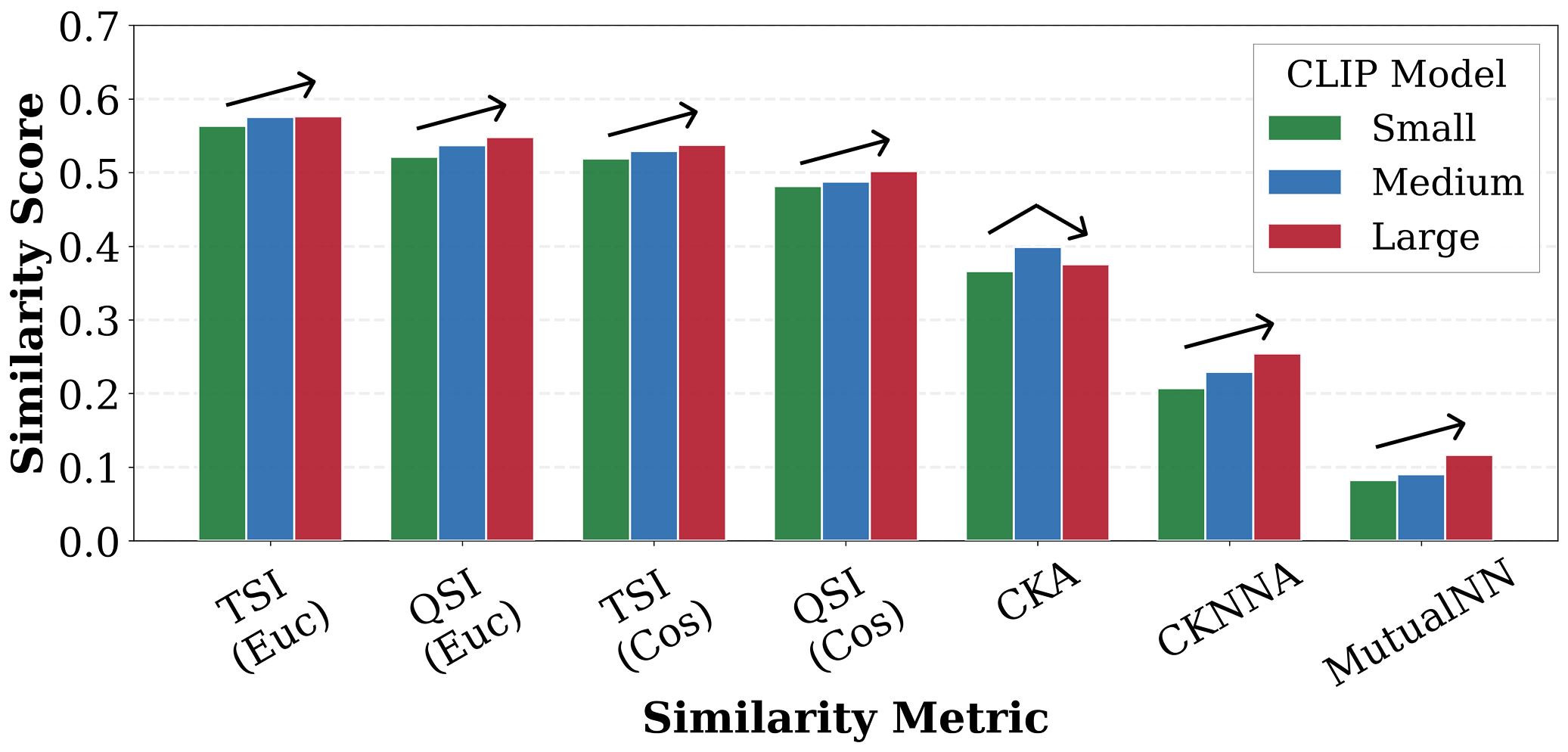}
\vspace{-1.2em}
\caption{Multimodal alignment analysis using CLIP models of increasing scale on the ImageNet validation set.}
\label{fig:clip-experiment}
\vspace{-1.2em}
\end{figure}

A major real-world application of similarity metrics is evaluating multimodal alignment, such as vision-language consistency in CLIP \citep{radford2021learningtransferablevisualmodels}. We analyze three CLIP variants from \texttt{sentence-transformers} \citep{reimers-2019-sentence-bert}---Small (ViT-B/32), Medium (ViT-B/16), and Large (ViT-L/14)---which exhibit increasing ImageNet zero-shot accuracy (63.3\%, 68.1\%, and 75.4\%) on the validation set of ImageNet containing 50k datapoints. Following the original CLIP evaluation, we compare image representations with prompt-based label embeddings (``A photo of \{label\}''). As CLIP optimizes cosine similarity between modalities, larger models with higher accuracy should contain increasingly aligned representations. This is especially true for the alignment measured with dot products or cosine similarity, which is the case for all reported metrics except for TSI and QSI, which by default uses  Euclidean distance. For these reasons, we also report TSI and QSI using negative cosine similarity. While this distance does not satisfy metric axioms and may therefore lack clear practical lower bounds (\cref{sec:practical-lower-bounds}) or exhibit the counter-intuitive behaviors discussed in \cref{sec:counter-intuitive-behavior-mutualnn-cknna}, its direct link to the CLIP objective provides valuable insight into multimodal alignment. To enable realistic evaluation, all metrics are approximated rather than exactly computed. These estimations are computed using $B=10$ batches of size $N_B=1000$, utilizing metric-specific approximation schemes where applicable (see \cref{sec:experimental-details} for details).

The empirical results presented in \cref{fig:clip-experiment} largely confirm these expectations, though they also reveal critical failure modes in established metrics. While TSI and QSI correctly capture the monotonic increase in alignment using both Euclidean distance and negative cosine similarity, CKA exhibits a counter-intuitive behavior: the similarity score drops when transitioning from the Medium to the Large model. This drop coincides with a change in embedding dimensionality, as both the Small and Medium models have 512 dimensions while the Large model has 768, further highlighting CKA's sensitivity to varying dimensions as previously discussed in \cref{sec:experiments-independent-expectation}. While local metrics like CKNNA and MutualNN also capture the alignment increase, they are intrinsically limited as they only evaluate the consistency of very reduced local substructures. In contrast, our ordinal metrics capture the global geometric structure composed of these local substructures, as formally established by the connection between MutualNN and TSI in \cref{sec:validating-ordinal-alignment}. \looseness=-1

This use-case underscores the practical advantages of our approach. Beyond enabling scalable, high-confidence evaluation on large datasets like ImageNet, TSI and QSI offer immediate interpretability: a score of $0.57$, for instance, indicates that $57\%$ of relative distance relationships are preserved across modalities. This provides a clear geometric measure of alignment relative to the $0.5$ random baseline, underscoring the metrics' utility for practitioners seeking interpretable results at scale. 

\section{Conclusion}

In this paper, we introduced TSI and QSI, redefining representational similarity via ordinal relationship consistency. By prioritizing distance ordering over raw values, these metrics provide stable, interpretable baselines and a direct probabilistic interpretation across varying dimensions and sample sizes. We theoretically established the equivalence of ordinal alignment to multi-scale neighborhood consistency and proved robustness to outliers where metrics like CKA fail. Methodologically, we developed efficient algorithms for exact computation and a Monte Carlo sampling scheme with provable approximation guarantees independent of dataset size.

Empirical validation on Vision Transformer training dynamics and multimodal CLIP patterns confirms the practical utility of our approach. TSI and QSI correctly track alignment across varying model scales and training regimes, even as embedding dimensions change, where metrics like CKA often fail. Furthermore, our proposed sampling 
schemes achieve approximation precision orders of magnitude higher than standard batching, enabling high-confidence 
evaluation on massive datasets like ImageNet. By combining mathematical rigor with scalable computation, ordinal 
similarity provides a principled and reliable framework for evaluating representational alignment in modern machine 
learning.


\section*{Impact Statement}
This paper presents work whose goal is to advance the field
of Machine Learning. There are many potential societal
consequences of our work, none which we feel must be
specifically highlighted here.

\section*{Code Availability}
Code is available at \newline
\url{https://github.com/diogosoares22/ordinal-similarity-metrics}.

\section*{Acknowledgments}

This project has received funding from the European Research Council (ERC) under
the European Funding Union’s Horizon 2020 research and innovation programme
(grant agreement No 810115 – DOG-AMP). 

\FloatBarrier

\begin{figure}[htbp]
  \centering
  \includegraphics[width=0.5\linewidth]{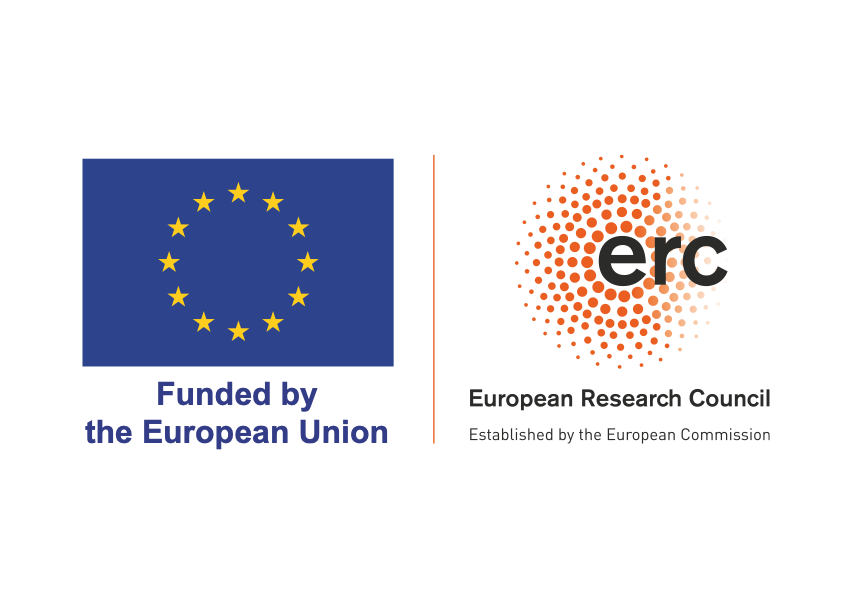}
\end{figure}

Conflict of Interest: Projects in Szczurek lab at the University of Warsaw are cofunded by Merck Healthcare.

\section*{LLM Usage}
This paper was written with the assistance of Gemini2.5~\citep{2025gemini25pushingfrontier}, which was used to enhance language clarity and flow. All content has been reviewed and edited to ensure originality and accuracy.

\bibliography{example_paper}
\bibliographystyle{icml2026}

\newpage
\appendix
\onecolumn

\section*{Appendix}

We organize the Appendix as follows:
\begin{itemize}
    \item \cref{sec:similarity-metrics-mathematical-description}: We provide formal mathematical definitions for all similarity metrics evaluated in our experiments.
    \item \cref{sec:ordinal-similarities-invariances}: We analyze the invariance properties of TSI and QSI under various transformations in inner product spaces.
    \item \cref{sec:tsi-qsi-distance-metric-axioms}: We show that the complementary distances $1-\mathrm{TSI}$ and $1-\mathrm{QSI}$ satisfy the metric axioms.
    \item \cref{sec:why-statistically-independent-representations-dissimilar}: We provide a causal perspective on why statistical independence serves as a natural baseline for representational dissimilarity.
    \item \cref{sec:practical-lower-bounds}: We analyze the occurrence of low ordinal similarity scores for representations in $\mathbb{R}$ and compute practical lower bounds.
    \item \cref{sec:tsi-connection-to-mutual-nearest-neighbors}: We establish a theoretical connection between TSI and Mutual Nearest Neighbors (MNN) by deriving a TSI upper bound consisting of a convex combination of MNN scores.
    \item \cref{sec:connection-to-kendall-tau}: We explore the connection between our ordinal metrics and an adjusted version of the Kendall rank correlation coefficient.
    \item \cref{sec:efficient-computation-appendix}: We describe efficient algorithms for both exact and approximate computation of TSI and QSI.
    \item \cref{sec:concentration-bounds-of-true-population-similarity}: We define population similarity and prove concentration bounds for empirical TSI and QSI as a function of dataset size.
    \item \cref{sec:experimental-details}: We detail the experimental setup, including metric parameters and model training configurations.
    \item \cref{sec:comments-experimental-outcomes-external}: We discuss the impact of distance proxy selection on the sensitivity of local neighborhood-based metrics to outliers; Additionally, we provide intuition for the numerical instability of SVCCA and PWCCA.
    \item \cref{sec:training-vit-on-cifar}: We describe the training procedure of a Vision Transformer used to generate CIFAR-10 latent representations for our experiments.
    \item \cref{sec:illustrative-examples-complete-description}: We present full derivations for the illustrative counterexamples summarized in the main text (local/global failure modes of baseline metrics, and TSI/QSI distinction).
    \item \cref{sec:mathematical-proofs}: We present the complete mathematical proofs for all lemmas and corollaries introduced in this manuscript.
\end{itemize}

\section{Similarity Metrics}
\label{sec:similarity-metrics-mathematical-description}

This section provides a brief mathematical formulation of the similarity metrics considered in this paper. For notation purposes, we assume $X$ and $Y$ are ordered sets of $N$ data points. When these are vectorial representations, they can be viewed as matrices $X \in \mathbb{R}^{N \times L_X}$ and $Y \in \mathbb{R}^{N \times L_Y}$. Similarity Metrics are functions of the form \( \mathcal{M}_{\text{parameters}}(X,Y) \in [0,1]\), where 1 indicates complete similarity and 0 indicates complete dissimilarity. Although, as shown in \cref{sec:experiments-independent-expectation}, it is often the case that the practical baseline dissimilarity is higher than 0.

The CCA, SVCCA, and PWCCA metrics assume the representations are vectorial and operate directly on them, in contrast to other metrics that leverage either distance functions or kernel operators on the underlying representation spaces, denoted as \( d_X \),\( d_Y \), and \( k_X \),\( k_Y \) respectively.

\paragraph{CCA} Canonical correlation finds bases for two sets of representations, $X$ and $Y$, such that, when the original matrices are projected onto these bases, the correlation is maximized. The overall similarity is the average of the correlations of multiple, mutually orthogonal pairs of such projections, known as canonical correlations.

Let $m = \min(L_X, L_Y)$ be the number of canonical components. For each component $i$ from $1$ to $m$, the canonical correlation $\rho_i$ is found by solving:

\begin{align*}
 \rho_i(X,Y) &= \max_{\mathbf{w}_X^i \in \mathbb{R}^{L_X}, \mathbf{w}_Y^i \in \mathbb{R}^{L_Y}} \operatorname{corr}(X\mathbf{w}_X^i, Y\mathbf{w}_Y^i) \\
 &\text{subject to} \quad \forall j<i \;\; \operatorname{cov}(X\mathbf{w}_X^i, X\mathbf{w}_X^j) = 0 \\
 &\qquad \qquad \quad \forall j<i \;\; \operatorname{cov}(Y\mathbf{w}_Y^i, Y\mathbf{w}_Y^j) = 0.
\end{align*}

The final CCA score is the unweighted mean of these correlations:
\[
\text{CCA}(X,Y) = \frac{1}{m} \sum_{i=1}^{m} \rho_i(X,Y)
\]

\paragraph{SVCCA} Singular Vector CCA (SVCCA) improves upon CCA by first performing a dimensionality reduction step. Instead of performing CCA on the raw representations, it first identifies and projects the data onto the most significant principal components, i.e., those that explain most of the variance. 

First, a Singular Value Decomposition (SVD) is performed on the centered data matrices, $X = U_X \Sigma_X V_X^T$ and $Y = U_Y \Sigma_Y V_Y^T$. The number of components to retain, $k_X$ and $k_Y$, is chosen to explain at least 99\% of the variance in each space, based on the singular values ($\sigma_{X,i}$, $\sigma_{Y,i}$):
\[
k_X = \min \left\{ k \mid \frac{\sum_{i=1}^k \sigma_{X,i}^2}{\sum_{i=1}^{L_X} \sigma_{X,i}^2} \ge 0.99 \right\} \quad \text{and} \quad k_Y = \min \left\{ k \mid \frac{\sum_{i=1}^k \sigma_{Y,i}^2}{\sum_{i=1}^{L_Y} \sigma_{Y,i}^2} \ge 0.99 \right\}
\]
The original representations are then projected onto these principal directions:
\[
X' = X V_X^{1:k_X} \quad \text{and} \quad Y' = Y V_Y^{1:k_Y}
\]
where $V^{1:k}$ denotes the matrix containing the first $k$ columns of $V$. SVCCA is then defined as the CCA score between these dimensionality-reduced representations:
\[
\text{SVCCA}(X,Y) = \text{CCA}(X', Y')
\]

\paragraph{PWCCA} Projection Weighted CCA (PWCCA) refines this approach by acknowledging that not all canonical directions equally reflect the underlying similarity of the original data. It computes a weighted average of the canonical correlations, where the weights are proportional to the amount of information explained by each of the CCA projection directions.

The importance weight $\tilde{\alpha}_i$ for each normalized canonical direction $w_X^i$ is calculated as the sum of the absolute projections of the original feature vectors (columns of the data matrix $X$) onto that direction:
\[
\tilde{\alpha}_i(X,Y) = \sum_{j=1}^{L_X} \frac{|\langle Xw_X^i, X_{:,j} \rangle|}{\|{Xw_X^i}\|}
\]
The final PWCCA score is the weighted average of the canonical correlations using normalized weights:
\[
\text{PWCCA}(X,Y) = \sum_{i=1}^m \frac{\tilde{\alpha}_i(X,Y)}{\sum_{k=1}^m \tilde{\alpha}_k(X,Y)} \rho_i(X,Y)
\]
where $m$ is the number of canonical components.

\paragraph{CKA} Centered Kernel Alignment (CKA) is a similarity metric based on the Hilbert-Schmidt Independence Criterion (HSIC) \citep{gretton2005measuring}, which measures the dependence between two kernel matrices derived from the representations. First, pairwise relationship matrices $K_X$ and $K_Y$ are computed using a kernel function, $k$. Common choices include the Linear Kernel ($k(a,b) = a^\top b$) and the RBF Kernel ($k(a,b) = \exp(-\|a-b\|^2/2\sigma^2)$). These functions map pairs of vectors to a similarity score, but they do not satisfy the axioms of a distance metric.

The kernel matrices are defined as:
\[ 
(K_X)_{ij} = k_X(x_i, x_j) \quad \text{and} \quad (K_Y)_{ij} = k_Y(y_i, y_j) 
\]
The HSIC statistic is then computed as $\text{HSIC}(A,B) = \frac{1}{(N-1)^2} \operatorname{tr}(AHBH)$, where $H = I_N - \frac{1}{N}J_N$ is a centering matrix. CKA normalizes the HSIC between the two representations' kernel matrices by the HSIC of each kernel matrix with itself. This process ensures the resulting score is invariant to isotropic scaling and lies within the range $[0, 1]$.
\[
\text{CKA}_{k_X, k_Y}(X,Y) = \frac{\text{HSIC}(K_X,K_Y)}{\sqrt{\text{HSIC}(K_X,K_X) \text{HSIC}(K_Y, K_Y)}}
\]

\paragraph{MutualNN} Mutual Nearest Neighbors (MutualNN) quantifies similarity by assessing the preservation of local neighborhoods. For a chosen hyperparameter $k$, the metric evaluates the overlap between the $k$-nearest neighbors of each point in the two representation spaces. Specifically, the set of shared neighbors for a point $i$ is the intersection of its neighbor sets from space $\mathcal{X}$ and space $\mathcal{Y}$:
\[
\text{SharedNN}_{k, d_X, d_Y}(i) \coloneq \text{KNN}_{k, d_X}(X \setminus \{x_i\}, x_i) \cap \text{KNN}_{k, d_Y}(Y \setminus \{y_i\}, y_i)
\]
The final MutualNN score is the average fraction of shared neighbors across all data points, a value naturally normalized between 0 and 1.
\[ 
\text{MutualNN}_{k, d_X, d_Y}(X, Y) = \frac{1}{N} \sum_{i=1}^N \frac{|\text{SharedNN}_{k, d_X, d_Y}(i)|}{k}
\]
Here, $|\cdot|$ denotes the cardinality of the set, measuring the size of the intersection.

\paragraph{CKNNA} Centered Kernel Nearest-Neighbor Alignment (CKNNA) integrates CKA's global perspective with MutualNN's focus on local structure. It achieves this by calculating a CKA-like score restricted only to the mutual nearest neighbors of each point. The process begins by constructing a binary mask matrix, $M$, based on the shared nearest neighbors (as defined in MutualNN) for a given $k$:
\[
M(k)_{ij} = \mathbb{I}[j \in \text{SharedNN}_{k, d_X, d_Y}(i)]
\]
This mask is then applied element-wise to the kernel matrices, effectively zeroing out relationships between points that are not in the same local neighborhood. The masked HSIC is defined as:
\[
\text{HSIC}_\text{masked}(A,B,M) = \text{HSIC}(A \odot M, B \odot M)
\]
Finally, the CKNNA score uses the standard CKA normalization but with the masked HSIC, thus measuring alignment only within these corresponding local neighborhoods.
\[
\text{CKNNA}_{k, k_X, k_Y, d_X, d_Y}(X,Y) = \frac{\text{HSIC}_\text{masked}(K_X,K_Y, M(k))}{\sqrt{\text{HSIC}_\text{masked}(K_X,K_X, M(k)) \text{HSIC}_\text{masked}(K_Y,K_Y, M(k))}}
\]
This formulation is unique in that it requires both a distance function to construct the nearest-neighbor mask and a kernel function for the subsequent HSIC calculation. While these are treated as independent choices, it is worth noting that a distance function can be derived from a similarity kernel provided certain conditions are met \citep{scholkopf2000kernel}.

\subsection{Batched Approximations}
\label{sec:batch-approximations}

Due to the significant computational complexity of the aforementioned similarity metrics, which often scale quadratically with the number of samples $N$, their exact calculation on large datasets is frequently impractical. Consequently, batched approximation methods are widely used \citep{huh2024platonicrepresentationhypothesis}.

A general approach employed in these works is to estimate similarity by examining the metric's scores over a randomly sampled smaller mini-batch. Here, we expand this approach by averaging over $B$ mini-batches. Let $(X_b, Y_b)_{b=1}^B$ denote the mini-batches sampled from the full datasets, then, the metric $\mathcal{M}$ can be approximated as:
\[
\widehat{\mathcal{M}}_{\text{parameters}}(X,Y) = \frac{1}{B}\sum_{b=1}^B \mathcal{M}_{\text{parameters}}(X_b, Y_b)
\]
This approach can be readily applied to CKA, CKNNA, MutualNN, SVCCA and PWCCA. Although straightforward, directly averaging the final score of these metrics (like MutualNN and CKNNA) can result in biased estimates.

For CKA, a more principled approximation was proposed by \citet{nguyen2021widedeepnetworkslearn}, which avoids this source of bias by separately estimating the numerator and denominator of the CKA formula. This method relies on an unbiased HSIC estimator, denoted $\text{HSIC}_1$, which differs from the biased estimator used in the standard definition. The unbiased estimator is defined as:

\[
\text{HSIC}_1(K, L) = \frac{1}{n(n-3)} \left( \operatorname{tr}(\tilde{K}\tilde{L}) + \frac{\mathbf{1}^\top \tilde{K}\mathbf{1}\mathbf{1}^\top \tilde{L}\mathbf{1}}{(n-1)(n-2)} - \frac{2}{n-2} \mathbf{1}^\top \tilde{K}\tilde{L}\mathbf{1} \right)
\]

With $K_{X_b}$ and $K_{Y_b}$ as the kernel matrices for a mini-batch $b$, the batched CKA estimator is constructed as the ratio of the averaged unbiased HSIC estimates:
\[
\widehat{\text{CKA}}_{k_X, k_Y}(X,Y) = \frac{ \frac{1}{B}\sum_{b=1}^B \text{HSIC}_1(K_{X_b}, K_{Y_b}) }{ \sqrt{ \left( \frac{1}{B}\sum_{b=1}^B\text{HSIC}_1(K_{X_b}, K_{X_b}) \right) \left( \frac{1}{B}\sum_{b=1}^B\text{HSIC}_1(K_{Y_b}, K_{Y_b}) \right) }}
\]

Crucially, \citet{nguyen2021widedeepnetworkslearn} highlight that when sampling with replacement and as $B \to \infty$, the average of the batch-wise $\text{HSIC}_1$ terms converges to the ground-truth $\text{HSIC}_1$ value for the full dataset, ensuring that the overall CKA estimate converges to the true unbiased CKA score.

\FloatBarrier

\section{Invariances in Inner Product Spaces}
\label{sec:ordinal-similarities-invariances}

An important property of similarity metrics is their invariance to specific geometric transformations. In inner product spaces, metrics should be invariant to translations, isotropic scaling, and orthogonal transformations, but not to arbitrary linear transforms \citep{kornblith2019similarityneuralnetworkrepresentations, Klabunde_2025}. In this section, we analyze whether ordinal similarities meet these requirements, starting with the definition of metric invariance in \cref{def:metric-invariance}.

\begin{definition}[Metric Invariance]
\label{def:metric-invariance}
For datasets $X \subset \mathcal{X}$ and $Y \subset \mathcal{Y}$, the metric $\mathcal{M}_{d_X, d_Y}(X, Y)$ is invariant in its first argument to a transformation $T: \mathcal{X} \to \mathcal{X}$ if $\mathcal{M}_{d_X, d_Y}(T(X), Y) = \mathcal{M}_{d_X, d_Y}(X, Y)$, where $T(X) = \{T(x) \mid x \in X\}$. If $\mathcal{M}$ is symmetric, this invariance extends to the second argument. Thus, we simply say that $\mathcal{M}$ is invariant to $T$.
\end{definition}

Following this definition, we consider an inner product space $(\mathcal{X}, \langle \cdot, \cdot \rangle)$ and its induced metric $d_X(x_1, x_2) = \sqrt{\langle x_1 - x_2, x_1 - x_2 \rangle}$. For such metrics, we derive the following invariance results:

\begin{restatable}[Ordinal Similarity Metric Invariances in Inner Product Spaces]{corollary}{TsiQsiInvariance}
\label{cor:tsi_qsi_invariance}
Let $(\mathcal{X}, \langle \cdot, \cdot \rangle)$ be an inner product space with its induced metric $d_X$, then TSI and QSI are invariant to the following transformations:
\begin{enumerate}
    \item \textbf{Translation:} $T(x) = x + c$, for any constant vector $c \in \mathcal{X}$.
    \item \textbf{Isotropic Scaling:} $T(x) = \alpha x$, for any non-zero scalar $\alpha \in \mathbb{R}$.
    \item \textbf{Orthogonal Transformation:} A linear map $T$ that preserves the inner product, i.e., $\langle T(x_1), T(x_2) \rangle = \langle x_1, x_2 \rangle$ for all $x_1, x_2 \in \mathcal{X}$.
\end{enumerate}
Furthermore, TSI and QSI are \textbf{not} generally invariant to:
\begin{enumerate}
    \item \textbf{Invertible Linear Transformations:} A transformation $T(x) = Ax$, where $A$ is a linear operator for which an inverse $A^{-1}$ exists such that $A A^{-1} = A^{-1} A = I$.
\end{enumerate}
\end{restatable}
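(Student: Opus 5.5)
The plan is to reduce all three positive claims to one observation. TSI and QSI depend on $X$ only through the ordinal comparisons $O_{d_X}(x_i,x_j,x_k,x_l)=\sign\big(d_X(x_i,x_j)-d_X(x_k,x_l)\big)$; the anchored comparison $O^{\Delta}_{d_X}$ is the special case $k=i$. Therefore it suffices to show that each transformation $T$ multiplies all pairwise distances by a common positive constant, i.e.\ $d_X(T(a),T(b)) = \lambda\, d_X(a,b)$ with $\lambda>0$ for all $a,b$.

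Given such a $\lambda$, we have $\sign(\lambda u - \lambda v)=\sign(u-v)$. Every indicator inside the expectations defining TSI and QSI is then unchanged, so $\text{TSI}_{d_X,d_Y}(T(X),Y)=\text{TSI}_{d_X,d_Y}(X,Y)$, and likewise for QSI. Because both definitions are symmetric in the roles of $X$ and $Y$, invariance in the second argument follows in the sense of \cref{def:metric-invariance}.

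The three cases are direct computations with the induced norm.
\begin{enumerate}
    \item Translation: $T(a)-T(b)=a-b$, so $\lambda=1$.
    \item Isotropic scaling: $\langle \alpha(a-b),\alpha(a-b)\rangle=\alpha^2\langle a-b,a-b\rangle$, so $\lambda=|\alpha|>0$. This is where $\alpha\neq 0$ is used.
    \item Orthogonal map: by linearity $T(a)-T(b)=T(a-b)$, and preservation of the inner product gives $\lambda=1$.
\end{enumerate}

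For the negative claim, a single counterexample suffices, since the statement only asserts failure of invariance in general. I would take $\mathcal X=\mathbb R^2$ with the Euclidean inner product and $N=3$ points $x_1=(0,0)$, $x_2=(1,0)$, $x_3=(0,2)$. I would set $Y=X$ (same space and metric), so that $\text{TSI}(X,Y)=1$, and take $A=\mathrm{diag}(3,1)$, which is invertible.

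\begin{itemize}
    \item Before transforming, from anchor $x_1$ we have $d(x_1,x_2)=1<2=d(x_1,x_3)$.
    \item After transforming, $d(Ax_1,Ax_2)=3>2=d(Ax_1,Ax_3)$.
    \item Thus the triplets $(1,2,3)$ and $(1,3,2)$ flip sign, giving $\text{TSI}(AX,Y)<1=\text{TSI}(X,Y)$.
\end{itemize}

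For QSI, four points are needed. I would add $x_4$ far away, for example $x_4=(10,10)$, and use the quadruplet $(1,2,1,3)$? That is not allowed, because QSI requires four distinct indices. Instead I would use disjoint pairs with the same flip, for example the pairs $\{x_1,x_2\}$ and $\{x_3,x_4\}$ with $x_3=(5,0)$ and $x_4=(5,2)$. Then $d(x_1,x_2)=1<2=d(x_3,x_4)$, while after applying $A$ we get $3>2$. This gives a disagreeing quadruplet, so $\text{QSI}(AX,X)<1=\text{QSI}(X,X)$.

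The only mildly delicate step is ensuring that the QSI counterexample uses distinct indices and genuinely changes the score. Since $\text{QSI}(X,X)=1$ and at least one quadruplet disagrees after the transformation, the score drops strictly below $1$; no other bookkeeping is needed.
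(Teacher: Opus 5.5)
Your proof is correct. For the three invariances it is the paper's argument in a slightly specialized form. The paper first proves an auxiliary lemma: any $T$ with $d_X(T(a),T(b)) = S(d_X(a,b))$ for some strictly increasing $S$ preserves every $\sign(d_1-d_2)$. It then checks $S(d)=d$, $S(d)=|\alpha| d$ and $S(d)=d$. Your common factor $\lambda>0$ is the linear case of that lemma, which is all these three maps need.

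The negative part is where your route differs. The paper writes $A = UDV^\top$ and discards the orthogonal factors. For \emph{any} pair $\sigma_a \neq \sigma_b$ it then builds configurations containing an exact tie: $\{e_a+e_b, e_b, e_a\}$ for TSI and $\{2e_a, e_a, 2e_b, e_b\}$ for QSI. Under $D$ that tie becomes the strict comparison $\sign(\sigma_a-\sigma_b)\neq 0$. This shows invariance fails for every invertible linear map that is not a scaled isometry. However, the discordance it exhibits is a tie being broken rather than an order reversal.

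You instead fix the single map $\mathrm{diag}(3,1)$ and exhibit a genuine strict reversal ($1<2$ becoming $3>2$). That is all the existential claim ``not generally invariant'' requires, and it does not rely on ties counting as disagreements. Combined with the SVD reduction, your construction also reaches the paper's generality. For $\sigma_a > \sigma_b$, take the points $0$, $e_a$, $r e_b$ with $1 < r < \sigma_a/\sigma_b$; for QSI use the disjoint pairs $\{0, e_a\}$ and $\{5e_a, 5e_a + r e_b\}$.

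One presentational fix: state the QSI dataset $\{(0,0),(1,0),(5,0),(5,2)\}$ as a fresh configuration rather than redefining $x_3$ mid-argument. Also drop the aborted $(1,2,1,3)$ attempt.
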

These results, proven in \cref{sec:invariant-transformations}, have practical implications in common Machine Learning settings. For instance, when evaluating the continuous latent representations of neural networks, this means that rotating, translating, or isotropically scaling these representations does not affect the outcome of ordinal similarity metrics.

\section{TSI/QSI-based distances satisfy the metric axioms}
\label{sec:tsi-qsi-distance-metric-axioms}

The study of metric properties for representational similarity was first introduced by \citet{williams2021generalized}, who motivated the importance of derived distances satisfying the metric axioms to offer theoretically grounded scores that can be reliably used for other applications.

\begin{restatable}[Complementary TSI and QSI Distances Satisfy the Metric Axioms]{corollary}{ReverseTSIQSIMetricAxioms}
\label{cor:reverse_tsi_qsi_satisfy_metric_axioms}
Let $(X, d_X)$, $(Y, d_Y)$, and $(Z, d_Z)$ be sets of $N$ representations equipped with their respective distance functions. For a similarity metric $S \in \{\text{TSI}, \text{QSI}\}$, define the distance function $d_S((X, d_X), (Y, d_Y)) = 1 - S_{d_X, d_Y}(X, Y)$. Then, $d_S$ satisfies the following metric axioms, as originally defined in \citet{williams2021generalized}:
\begin{enumerate}
    \item \textbf{Equivalence:} $d_S((X, d_X), (Y, d_Y)) = 0 \iff (X, d_X) \sim (Y, d_Y)$
    \item \textbf{Symmetry:} $d_S((X, d_X), (Y, d_Y)) = d_S((Y, d_Y), (X, d_X))$
    \item \textbf{Triangle inequality:} $d_S((X, d_X), (Z, d_Z)) \leq d_S((X, d_X), (Y, d_Y)) + d_S((Y, d_Y), (Z, d_Z))$
\end{enumerate}
Therefore, these are proper metrics over the space of representation sets defined by the equivalence relation $S_{d_X, d_Y}(X, Y) = 1$.
\end{restatable}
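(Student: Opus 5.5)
We will prove the corollary by writing $d_S$ as a normalized Hamming distance between ordinal relationship arrays, as announced in \cref{sec:formal-definitions}.

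\textbf{Embedding.} For $S=\text{TSI}$, map $(X,d_X)$ to the array $\Phi(X,d_X)\in\{-1,0,1\}^{\mathcal{T}}$ with entries $\Phi_{(i,j,k)}=O^{\Delta}_{d_X}(x_i,x_j,x_k)$. For $S=\text{QSI}$, map it to the array $\Psi(X,d_X)\in\{-1,0,1\}^{\mathcal{Q}}$ with entries $\Psi_{(i,j,k,l)}=O_{d_X}(x_i,x_j,x_k,x_l)$. Define the normalized Hamming distance between two arrays $u,v$ indexed by a finite set $I$ as
\[
h(u,v)=\frac{1}{|I|}\sum_{t\in I}\mathbb{I}[u_t\neq v_t].
\]
Since TSI and QSI are uniform averages of agreement indicators over $\mathcal{T}$ and $\mathcal{Q}$, we get
\[
d_S((X,d_X),(Y,d_Y)) = 1-S = h\big(\Phi(X,d_X),\Phi(Y,d_Y)\big),
\]
and the same with $\Psi$ for QSI. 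Every axiom then reduces to a property of $h$ evaluated on the image arrays.

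\textbf{Symmetry.} Equality of the two ordinal signs is a symmetric relation, so $h(u,v)=h(v,u)$ termwise. Hence $d_S((X,d_X),(Y,d_Y))=d_S((Y,d_Y),(X,d_X))$.

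\textbf{Triangle inequality.} For each index $t$, if $u_t\neq w_t$ then either $u_t\neq v_t$ or $v_t\neq w_t$. This gives the pointwise bound
\[
\mathbb{I}[u_t\neq w_t]\le \mathbb{I}[u_t\neq v_t]+\mathbb{I}[v_t\neq w_t].
\]
Averaging over $t\in\mathcal{T}$ (respectively $\mathcal{Q}$) yields the inequality for $d_S$. All three sets have the same $N$, so they share the same index set and the same normalization, which is what makes the averaging step valid.

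\textbf{Equivalence.} Define $(X,d_X)\sim(Y,d_Y)$ iff $S_{d_X,d_Y}(X,Y)=1$, as the statement indicates. Under the Hamming formulation this holds iff the two ordinal arrays are identical. Equality of arrays is reflexive, symmetric and transitive, so $\sim$ is a genuine equivalence relation. It also follows directly from the triangle inequality applied to zero distances.

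\textbf{Well-definedness on classes.} We must check that $d_S$ descends to the quotient. If $X\sim X'$, then $X$ and $X'$ have identical arrays, so $h$ evaluated against any third array is unchanged, and $d_S$ is well defined on equivalence classes. By construction, $d_S=0$ iff the two sets lie in the same class. This makes $d_S$ a proper metric on the quotient space, not merely a pseudometric.

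\textbf{Main subtlety.} The only delicate point is the equivalence axiom, namely that $\sim$ as defined is an equivalence relation compatible with $d_S$. Once the Hamming identity is written down, this is immediate. No issue arises from ties, because $0$ is treated as an ordinary symbol in the alphabet $\{-1,0,1\}$.
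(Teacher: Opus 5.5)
Your proposal is correct and takes essentially the same route as the paper. Both encode each representation set as its ordinal profile in $\{-1,0,1\}^{\mathcal{T}}$ (resp.\ $\{-1,0,1\}^{\mathcal{Q}}$), identify $d_S$ with the normalized Hamming distance between profiles, and read off equivalence, symmetry and the triangle inequality from that identity. Your explicit checks that $\sim$ is an equivalence relation and that $d_S$ is well defined on the quotient are small additions that the paper leaves implicit.
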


This result, proven in \cref{sec:reverse-tsi-qsi-proof}, has important practical implications. By establishing that $1 - \text{TSI}$ and $1 - \text{QSI}$ satisfy the metric axioms, we ensure they provide mathematically rigorous notions of distance between representation spaces. Consequently, these distance functions can be reliably employed in downstream applications that rely on formal metric properties, such as clustering, nearest-neighbor retrieval, and topological data analysis.

\section{Why do Statistically Independent Representations represent Dissimilarity?}
\label{sec:why-statistically-independent-representations-dissimilar}

An alternative perspective on similarity metrics can be achieved through the lens of causal analysis. Following the notation from \cref{sec:background-notation}, the sets of representations $X=\{f(z_i)\}_{i=1}^N$ and $Y=\{g(z_i)\}_{i=1}^N$ are generated from a common set of source samples $Z = \{z_i\}_{i=1}^N$. The set $Z$ can typically be interpreted as sampling $N$ times from a probability distribution $\mathcal{D}_Z$. In effect, this means that $X$ and $Y$ can be constructed from a two-step process of first sampling from $\mathcal{D}_Z$ and then transforming the samples with $f$ and $g$ respectively. This process establishes the causal graph shown in \cref{fig:causal-graph}. 

\begin{figure}[htbp]
\centering
\includegraphics[width=0.3\textwidth]{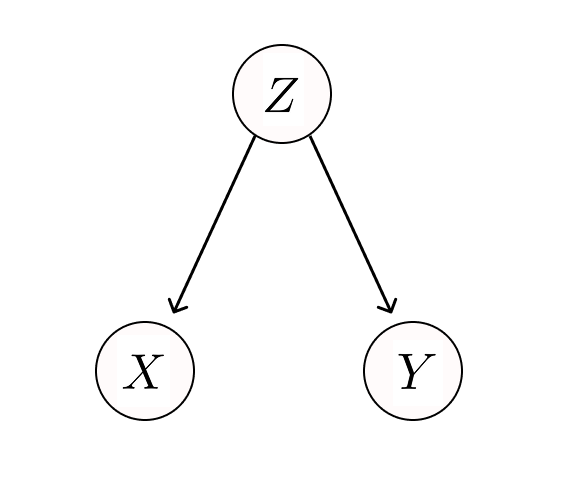}
\vspace{-1.2em}
\caption{Causal graph illustrating that the source concepts, $Z$, are a common cause for the two representation sets, $X$ and $Y$. The similarity between $X$ and $Y$ is induced by this shared origin.}
\label{fig:causal-graph}
\vspace{-0.5em}
\end{figure}

This causal structure reveals that the relationship between $X$ and $Y$ is induced by $Z$ acting as a common cause. This becomes apparent with the cats and dogs example from \cref{fig:hero-figure}. If $z_i$ represents the abstract concept of a "grey cat," its corresponding representations—$x_i$ as text and $y_i$ as an image—can be considered similar precisely because they originate from this shared source.

Conversely, if this causal link were broken, $x_i$ and $y_i$ would become statistically independent. This scenario is equivalent to comparing representations of unrelated concepts, such as the text for an "orange cat" and the image of a "brown dog." Even for well-defined embedding functions f and g, such a comparison between uncorrelated inputs is expected to yield dissimilarity. Therefore, statistical independence serves as a natural baseline for evaluating representational dissimilarity.

\section{Practical Lower Bounds for Ordinal Dissimilarity}
\label{sec:practical-lower-bounds}

Building on the expected similarity for independent representations (\cref{lem:expectation-over-random-representations}), we now investigate whether ordinal similarity can drop significantly below the $\frac{1}{2}$ statistical baseline. While such scores are mathematically possible, we hypothesize that adversarial configurations become increasingly rare as $N$ grows. To ground this intuition, we investigate the 1D Euclidean setting, where geometric constraints prohibit zero-similarity for even small $N$. We conjecture that as $N$ increases, the consistency required for extreme misalignment becomes so restrictive that the range of achievable scores is pulled towards the statistical baseline, though we do not formally prove this convergence here.

\begin{restatable}[Computability of Global Lower Bounds in $\mathbb{R}$]{corollary}{ComputabilityLowerBounds}
\label{corol:computability-lower-bounds}
For any number of considered representations $N$, the global lower bound of the ordinal similarity $S \in \{\text{TSI}, \text{QSI}\}$ over sets of $N$ distinct points in $\mathbb{R}$ is theoretically computable. Specifically, let $s_{\min} = \min_{X, Y \in \mathbb{R}^N \text{ with distinct entries}} S(X, Y)$. Then:
\[
s_{\min} = \min_{X, Y \in \mathcal{X}_{\text{rep}}} \left( \min_{\sigma \in \text{Perm}(N)} S(X, Y^\sigma) \right)
\]
where $\mathcal{X}_{\text{rep}}$ denotes a representative set as defined in \cref{def:representative-set}. Note that $\mathcal{X}_{\text{rep}}$ is finite and we can computationally obtain at least one such set (specifically a Monotonically Ordered Representative Set via \cref{alg:all-X}).
\end{restatable}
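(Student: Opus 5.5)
The plan is to show that $S(X,Y)$, for $X,Y \in \mathbb{R}^N$ with distinct entries, depends only on finitely many combinatorial invariants, and that these invariants are realised by a finite representative set. The key observation is that TSI and QSI depend on $X$ only through the \emph{ordinal pattern} of its pairwise distances. This pattern is the sign vector
\[
\big(\sign(|x_i-x_j|-|x_k-x_l|)\big)_{(i,j,k,l)}.
\]
We call two configurations equivalent when they induce the same sign vector. There are at most $3^{N^4}$ such sign vectors, so there are finitely many equivalence classes. We take the representative set $\mathcal{X}_{\text{rep}}$ of \cref{def:representative-set} to contain at least one point configuration from every class that is realised by distinct reals.

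Step 1: reduce the inner structure to the ordering. Write $Y = (y_{\sigma(1)},\dots)$, where $\sigma$ sorts the entries. The ordinal pattern of $Y$ equals the pattern of a monotonically ordered configuration $\tilde Y$ (so $\tilde y_1<\dots<\tilde y_N$), relabelled by $\sigma$. This gives
\[
S(X,Y)=S(\tilde X^{\tau},\tilde Y^{\sigma})=S(\tilde X,\tilde Y^{\sigma\tau^{-1}}),
\]
where $\tilde X$ is the sorted version of $X$. The second equality holds because simultaneously permuting the indices of both sets leaves the averages over $\mathcal{T}$ and $\mathcal{Q}$ unchanged, since these index sets are permutation invariant.

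Step 2: replace the sorted configurations by representatives. Pick $X_r, Y_r \in \mathcal{X}_{\text{rep}}$ in the same ordinal classes as $\tilde X$ and $\tilde Y$. Because the indicator in the definition of $S$ depends only on the signs, we get $S(\tilde X,\tilde Y^{\pi})=S(X_r,Y_r^{\pi})$. Hence $s_{\min}$ is at least the right-hand side of the claimed formula. The reverse inequality is trivial, because every $X_r$ and every $Y_r^{\sigma}$ is itself a configuration of distinct reals. Together these give equality.

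Step 3 establishes finiteness and computability, and I expect it to be the main obstacle. Finiteness is immediate from the bound on the number of sign vectors. Computability needs an effective way to produce a configuration for each realisable sorted ordinal pattern.

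First, I would normalise a sorted configuration to its gap vector $g\in\mathbb{R}_{>0}^{N-1}$, using translation and scaling invariance (\cref{cor:tsi_qsi_invariance}). In this form every distance $|x_i-x_j|$ with $i<j$ is a linear form $\sum_{m=i}^{j-1} g_m$. Each sign pattern is therefore a cell of a hyperplane arrangement in the positive orthant.

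Next, I would enumerate candidate sign vectors. For each one I decide feasibility by a linear program with strict inequalities, made non-strict by a margin and a normalisation such as $\sum g = 1$. Whenever the program is feasible, it returns a rational witness.

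This enumeration, or equivalently a traversal of the arrangement cells by \cref{alg:all-X}, produces a finite list of monotonically ordered representatives. The final minimum is then a finite minimum over pairs of representatives and the $N!$ permutations, and each term can be evaluated exactly. The delicate point is to make sure every cell is captured, including lower-dimensional tie cells. Since we only require distinct points, tie cells are still admissible and must be included. The LP enumeration handles these cells uniformly, because equality constraints are allowed in it.
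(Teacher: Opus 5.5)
Your proposal is correct and follows essentially the paper's route. $S$ depends only on a finite ordinal invariant; simultaneous relabelling plus sorting reduces every pair $(X,Y)$ to a pair of monotone representatives and one permutation (the paper's covering property together with the MORS construction); and every realisable sorted pattern is witnessed by an LP. The differences are cosmetic: you use the total rank ordering of all distances (the paper's $\Omega$) as the equivalence from the outset rather than $\mathcal{P}_S$, you eliminate $\mathcal{C}_{\triangle}$ by working with gap vectors, you enumerate sign vectors instead of pruning with topological sorts of $\mathcal{C}_{\text{perm}}$, and you spell out the trivial reverse inequality. One caveat: if you impose $\sum g = 1$, maximise the margin rather than fixing it (or drop the normalisation and rely on homogeneity), since a fixed margin under normalisation can miss very thin cells.
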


This result, whose proof is provided in \cref{sec:computability-global-lower-bounds-for-TSI-and-QSI-in-R}, enables us to formally investigate the limits of dissimilarity by reducing the global search to a finite representative set. Crucially, this allows for the exact calculation of lower bounds for small $N$ through a computer-assisted search, which can then be used to extrapolate conclusions for all $N$. In particular, we show in \cref{corol:universal-positivity} that for any $N$ beyond a small constant, complete misalignment in $\mathbb{R}$ is geometrically impossible, ensuring strictly positive similarity for all representation mappings.

\begin{restatable}[Universal Strict Positivity of Representations in $\mathbb{R}$]{corollary}{UniversalPositivity}
\label{corol:universal-positivity}
Let $X, Y \subset \mathbb{R}$ be ordered sets of $N$ scalar representations equipped with the standard Euclidean distance $d_X(a,b) = d_Y(a,b) = |a - b|$. For any configuration where both $X$ and $Y$ consist of distinct points ($x_i \neq x_j$ and $y_i \neq y_j$ for $i \neq j$) and $N \geq 5$, it is impossible to obtain a similarity score of zero. In other words, the similarity is strictly positive for all such mappings:
\[
\forall X,Y \subset \mathbb{R} \text{ with distinct entries}, \quad \text{TSI}_{d_X, d_Y}(X, Y) > 0
\]
The same strict positivity holds for QSI if $N \geq 6$:
\[
\forall X,Y \subset \mathbb{R} \text{ with distinct entries}, \quad \text{QSI}_{d_X, d_Y}(X, Y) > 0
\]
\end{restatable}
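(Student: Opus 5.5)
The plan is to reduce the statement to a finite check at the threshold sizes ($N=5$ for TSI, $N=6$ for QSI) using a hereditary property of zero similarity, and then discharge that finite check with \cref{corol:computability-lower-bounds}.

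\textbf{Step 1 (zero similarity is hereditary).} TSI and QSI are averages of $\{0,1\}$-valued indicators. So $S(X,Y)=0$ holds if and only if \emph{every} triplet (resp.\ quadruplet) of distinct indices has disagreeing ordinal comparisons. Now take any index subset $W\subseteq\{1,\dots,N\}$ with $|W|=M$. The triplets (quadruplets) of distinct indices in $W$ form a subset of $\mathcal{T}$ (resp.\ $\mathcal{Q}$). Hence $S(X,Y)=0$ implies $S(X_W,Y_W)=0$, and the restricted sets still consist of distinct reals. Contrapositively, it is enough to prove that no configuration of $5$ distinct points (TSI) or $6$ distinct points (QSI) in $\mathbb{R}$ attains similarity $0$; the result then propagates to every $N\ge 5$ (resp.\ $N\ge 6$).

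\textbf{Step 2 (finite check at the threshold size).} By \cref{corol:computability-lower-bounds}, the minimum $s_{\min}$ over all pairs of distinct-entry configurations of size $N$ equals the minimum of $S(X,Y^\sigma)$ over $X,Y$ in a finite representative set $\mathcal{X}_{\text{rep}}$ and over $\sigma\in\text{Perm}(N)$. We would:
\begin{enumerate}
\item generate a monotonically ordered representative set with \cref{alg:all-X} for $N=5$ and $N=6$;
\item evaluate $S(X,Y^\sigma)$ exhaustively for all pairs and all $N!$ permutations;
\item verify that the minimum is strictly positive (TSI at $N=5$, QSI at $N=6$).
\end{enumerate}
By Step 1, this certifies the claim for all larger $N$. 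For completeness, we would also record small witnesses with $S=0$ at $N=4$ (TSI) and $N=5$ (QSI), to show the thresholds are the right ones, although the statement does not require this.

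\textbf{Main obstacle.} The hard part is Step 2, namely making sure the representative set genuinely captures every realizable ordinal pattern of pairwise distances in $\mathbb{R}$, including patterns with distance ties. Ties change the comparison value to $0$, and a tie on one side can force a mismatch, so they cannot be ignored. We rely on \cref{def:representative-set} and \cref{alg:all-X} for exactly this coverage, and the check is only as trustworthy as that coverage.

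\textbf{Hand-argument alternative.} If a proof without computer assistance is desired, I would first try the TSI case directly. Take the anchor at the leftmost point of $X$; its distances to the other points are ordered exactly as their positions. Zero TSI forces the $Y$-distances from the corresponding point to be in exactly reversed order. Applying the same reasoning to the two endpoints and to the median point should produce a contradiction once $N\ge5$. I expect the QSI case to be less tractable by hand, so there I would rely on the enumeration.
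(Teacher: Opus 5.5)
Your proposal is correct and matches the paper's proof. The paper also reduces to the base cases $N=5$ (TSI) and $N=6$ (QSI) by observing that any larger configuration contains sub-configurations of positive score, which is your hereditary-zero observation. It then certifies those base cases with the exhaustive computer-assisted search licensed by \cref{corol:computability-lower-bounds}, reporting minima $0.13$ and $0.11$ in \cref{tab:lower-bounds-one-dimension}. Your optional hand argument for TSI also closes once ``exactly reversed'' is weakened to ``weakly reversed'' (ties are allowed on the $Y$ side): with $x_1<\dots<x_5$, anchors $1$ and $5$ force $y_2,y_4$ to be the $Y$-extremes, and the same argument with $X$ and $Y$ swapped forces $y_3$ to be the $Y$-median; each surviving order then places $y_3$ strictly between $y_1$ and $y_4$ or between $y_1$ and $y_5$, contradicting $d_Y(y_1,y_3)\ge d_Y(y_1,y_4)\ge d_Y(y_1,y_5)$ from anchor $1$.
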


\begin{proof}
The result follows by construction; it suffices to verify the base cases $N=5$ for TSI and $N=6$ for QSI. Since these metrics are expectations over triplets and quadruplets respectively, any dataset larger than these base cases necessarily contains sub-configurations with non-zero scores, ensuring the global metric is strictly positive.
To show this statement for a particular $N$, we do so via a computer-assisted proof leveraging the computability of the global lower bound for TSI and QSI as shown in \cref{corol:computability-lower-bounds}. In \cref{tab:lower-bounds-one-dimension}, we present the obtained global lower bounds for TSI and QSI for all $N$ from 3 to 6, and observe that TSI is lower bounded by $0.13$ for $N=5$ and QSI is lower bounded by $0.11$ for $N=6$.
\end{proof}

As shown in \cref{tab:lower-bounds-one-dimension}, the global lower bounds are non-decreasing with $N$, consistent with our hypothesis. For example, the TSI lower bound rises from $0.13$ ($N=5$) to $0.17$ ($N=6$). This behavior suggests that geometric constraints compound as the dataset grows, potentially pulling the minimum achievable similarity toward the statistical baseline. Computations are restricted to $N \leq 6$ due to the rapid expansion of the representative search space.

\begin{table}[htbp]
\caption{Global lower bounds for TSI and QSI for all $X, Y \subset \mathbb{R}$ consisting of $N$ distinct points. As $N$ increases, the geometric constraints of the real line prevent the existence of perfectly misaligned (zero-similarity) configurations.}
\label{tab:lower-bounds-one-dimension}
\begin{center}
\begin{tabular}{ccc}
\toprule
& \multicolumn{2}{c}{\textbf{Lower Bound}} \\
\cmidrule(lr){2-3}
\textbf{Number of Points ($N$)} & \textbf{TSI} & \textbf{QSI} \\
\midrule
3 & 0.00 & -- \\
4 & 0.00 & 0.00 \\
5 & 0.13 & 0.00 \\
6 & 0.17 & 0.11 \\
\bottomrule
\end{tabular}
\end{center}
\end{table}

The increasing difficulty of constructing misaligned representations stems from the rigidity of geometric constraints. In 1D Euclidean space, the triangle identity $d(i, k) = d(i, j) + d(j, k)$ creates a dense network of dependencies; forcing misalignment in one triplet often necessitates alignment in another. As $N$ grows, these interlocking rules make adversarial "zero-similarity" configurations impossible and support our hypothesis that the global minimum is pulled toward the $\frac{1}{2}$ baseline. We expect this pattern to generalize to any distance satisfying the triangle inequality, reinforcing the statistical baseline as a practical lower bound for dissimilarity in real-world applications.

\paragraph{Reproducibility of Computed Practical Lower Bounds}
To compute the values in \cref{tab:lower-bounds-one-dimension}, we implemented and executed the procedures described in \cref{alg:global-lower-bound,alg:all-X}. For the construction of the Monotonically Ordered Representative Sets (MORS), we utilized the linear programming solver from the \texttt{scipy.optimize} module \citep{2020SciPy-NMeth}, with a precision threshold of $\Delta = 10^{-6}$. The computational time varied significantly with $N$: while the evaluation for $N \leq 5$ was completed in less than 2 minutes, the exhaustive search for $N=6$ required approximately 7 days of computation time due to the combinatorial expansion of the representative space. These results provide a concrete validation of our theoretical framework and establish the practical lower bounds for ordinal similarity in one-dimensional spaces.

\section{TSI's connection to Mutual Nearest Neighbors}
\label{sec:tsi-connection-to-mutual-nearest-neighbors}

In this section, we deepen the theoretical connection between TSI and Mutual Nearest Neighbors (MNN). We first clarify that the standard MNN coefficient (\cref{sec:similarity-metrics-mathematical-description}) is fundamentally ill-defined in the presence of ties. Specifically, the set of $k$-nearest neighbors becomes ambiguous when multiple points share the same distance to an anchor, such that the transition from $k$ to $k+1$ is not uniquely determined. To maintain mathematical rigor, both \cref{cor:tsi_mutual_nn_alignment,cor:tsi_mutual_nn_coefficients} assume representations without ties.

While \cref{cor:tsi_mutual_nn_alignment} establishes a formal equivalence between TSI and MNN at perfect alignment (score = 1), it is natural to investigate whether a relationship persists for lower similarity values. In \cref{cor:tsi_mutual_nn_coefficients}, we prove that TSI is upper-bounded by a convex combination of MNN coefficients across all possible scales $k$. Interestingly, the weights in this combination increase with $k$, suggesting that MNN descriptors at larger scales carry more information regarding the global ordinal structure captured by TSI. This result demonstrates that TSI and MNN are deeply intertwined even in regimes of partial alignment, positioning TSI as a robust proxy for multi-scale neighborhood consistency.

\begin{restatable}[TSI alignment is upper-bounded by a convex combination of Mutual Nearest Neighbors coefficients]{corollary}{TsiMutualNnCoefficients}
\label{cor:tsi_mutual_nn_coefficients}
Let $(X, d_X)$ and $(Y, d_Y)$ be two sets of $N$ representations without ties, i.e., $\forall i, j, k \in \{1, \dots, N\}, d_X(x_i, x_j) = d_X(x_i, x_k) \implies j=k$ (same for $Y$). Then, the TSI alignment is upper-bounded by a convex combination of Mutual Nearest Neighbors coefficients:
\[
\text{TSI}_{d_X, d_Y}(X, Y) \leq \frac{1}{\binom{N-1}{2}} \sum_{k=1}^{N-2} k \cdot \text{MutualNN}_{k, d_X, d_Y}(X, Y)
\]
where $\text{MutualNN}_{k, d_X, d_Y}(X, Y)$ is the Mutual Nearest Neighbors score for $k$ neighbors as defined in \cref{sec:similarity-metrics-mathematical-description}.
\end{restatable}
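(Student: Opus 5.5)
The plan is to prove the inequality anchor by anchor and then average over anchors. First I rewrite both sides in terms of per-anchor rank data.

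Fix an anchor $i$. Since there are no ties, the distances $d_X(x_i,\cdot)$ induce a strict ranking $r_X(j)\in\{1,\dots,N-1\}$ of the other indices $j\neq i$, and likewise $r_Y(j)$ from $d_Y(y_i,\cdot)$.

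For a triplet $(i,j,l)$ with $j\neq l$, agreement $O^{\Delta}_{d_X}(x_i,x_j,x_l)=O^{\Delta}_{d_Y}(y_i,y_j,y_l)$ means exactly that the pair $\{j,l\}$ is concordant in the two rankings. Swapping $j$ and $l$ flips both signs, so agreement is symmetric in $j,l$. Hence the ordered-triplet average equals an average over unordered pairs:
\[
\text{TSI}_{d_X,d_Y}(X,Y)=\frac{1}{N}\sum_{i=1}^N \frac{C_i}{\binom{N-1}{2}},
\]
where $C_i$ is the number of concordant unordered pairs for anchor $i$.

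On the MNN side, let $A_k$ and $B_k$ be the $k$-nearest-neighbor sets of anchor $i$ in $X$ and $Y$. Then $k\cdot\text{MutualNN}_k$ averaged over anchors is $\frac1N\sum_i |A_k\cap B_k|$. So it suffices to show, for each anchor,
\[
C_i\le \sum_{k=1}^{N-2}|A_k\cap B_k|.
\]

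Next I evaluate the right-hand side by exchanging the order of summation. An index $j$ lies in $A_k\cap B_k$ iff $m(j):=\max(r_X(j),r_Y(j))\le k$. It therefore contributes to exactly $N-1-m(j)$ values of $k\in\{1,\dots,N-2\}$; this count is $0$ when $m(j)=N-1$. Hence
\[
\sum_{k=1}^{N-2}|A_k\cap B_k|=\sum_{j\neq i}\bigl(N-1-m(j)\bigr).
\]

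For the left-hand side I charge each concordant pair to its element that is closer to the anchor. Concordance means this element is the same in both rankings, so the charge is well defined. The pairs charged to $j$ are those $l$ with $r_X(l)>r_X(j)$ and $r_Y(l)>r_Y(j)$. There are at most $N-1-r_X(j)$ indices of the first kind and at most $N-1-r_Y(j)$ of the second. So at most $N-1-m(j)$ pairs are charged to $j$.

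Summing over $j$ gives $C_i\le\sum_{j}(N-1-m(j))$, which is the per-anchor inequality. Averaging over $i$ and dividing by $\binom{N-1}{2}$ yields the stated bound.

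It remains to note that the combination is convex. The weights $k/\binom{N-1}{2}$ for $k=1,\dots,N-2$ are nonnegative and sum to $\binom{N-1}{2}/\binom{N-1}{2}=1$.

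The main care point is the charging step. One must check that concordance makes the "top element" unambiguous, so that no pair is double-counted, and that the max-rank bound correctly matches the count of $k$ values up to $N-2$. The rest is bookkeeping.
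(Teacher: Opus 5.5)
Your proposal is correct and follows the paper's proof: the same per-anchor reduction, the same rewriting of $\sum_{k=1}^{N-2}|\text{SharedNN}_k(i)|$ as a sum of reversed-rank minima (your $N-1-m(j)$), and the same bound on the concordant pairs whose closer element is $j$ by both the number of points farther in $X$ and the number farther in $Y$. The paper sorts by $\sigma$ and sums over $t>j$, which is exactly your charging scheme. The only cosmetic differences are that you obtain the unordered-pair form of TSI directly from the definition rather than through the anchor-based Kendall-$\tau$ identity, and that you also check explicitly that the weights sum to one.
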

This bound, whose proof is detailed in \cref{sec:tsi-mutual-nn-coefficients-proof}, formally bridges the gap between local compositional properties and global representational similarity, showing that high TSI alignment necessitates a strong preservation of neighborhood structures across multiple scales.

\section{Connection to Kendall Rank Correlation Coefficient}
\label{sec:connection-to-kendall-tau}

TSI and QSI can be intuitively understood through their connection to the Kendall rank correlation coefficient \citep{kendall1938new}, also known as Kendall's $\tau$. The Kendall-$\tau$ coefficient is a non-parametric statistic that measures the ordinal association between two quantities. Let $A = \{a_1, \dots, a_M\}$ and $B = \{b_1, \dots, b_M\}$ be two sequences of paired observations. The coefficient is defined as:
\begin{equation}
\label{eq:kendall_tau}
\text{Kendall-}\tau(A, B) = \frac{1}{M(M-1)}\sum_{j \neq k} \sign(a_j - a_k) \sign (b_j - b_k)
\end{equation}

The product of sign functions captures agreement between non-tied pairs. To create a comprehensive similarity score that is equivalent to our ordinal metrics, we must also account for ties. We introduce an adjustment term, as specified in \cref{eq:adj_ties}, that rewards agreement on ties and penalizes disagreements involving ties.

\begin{equation}
\label{eq:adj_ties}
\begin{split}
\text{Adj-Ties}(A, B) = \frac{1}{M(M-1)}\sum_{j \neq k} \Big(
& \mathbb{I}\big[\sign(a_j - a_k)=\sign(b_j - b_k)=0\big] \\
& -\mathbb{I}\big[\sign(a_j - a_k)=0 \wedge \sign(b_j - b_k) \neq 0\big] \\
& - \mathbb{I}\big[\sign(a_j - a_k)\neq0 \wedge \sign(b_j - b_k) = 0\big]
\Big)
\end{split}
\end{equation}

This allows us to define the Adjusted Kendall Tau Coefficient:

\begin{equation}
\label{eq:adj_kendall_tau}
\text{Adj-Kendall-}\tau(A,B) = \frac{\text{Kendall-}\tau(A,B) + \text{Adj-Ties}(A, B) + 1}{2}
\end{equation}

Besides incorporating ties, this transformation also maps the standard correlation from the range $[-1, 1]$ to $[0, 1]$ to ensure range consistency with similarity metrics. Equipped with this coefficient, we can provide an alternative formulation for TSI. As the following corollary shows, TSI can be interpreted as the average Adjusted Kendall Tau with respect to the distances from each anchor, a connection that grounds our metric in established statistical methods. 

\begin{restatable}[TSI as an Average of Anchor-based Kendall's Tau]{corollary}{TsiKendall}
\label{cor:tsi_kendall}
For each anchor point $i$, let $D_X^i = \{d_X(x_i, x_t) | t \neq i\}$ and $D_Y^i = \{d_Y(y_i, y_t) | t \neq i\}$ denote the collections of distances from that anchor, treated as paired sequences ordered by the index $t$. Then TSI can be expressed as:

\[
\text{TSI}_{d_X, d_Y}(X, Y) = \frac{1}{N}\sum_{i=1}^N \text{Adj-Kendall-}\tau(D_X^i, D_Y^i)
\]
where the Adjusted Kendall Tau is calculated according to \cref{eq:adj_kendall_tau}.
\end{restatable}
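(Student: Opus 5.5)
We prove the identity by rewriting both sides as averages over the same index set and showing that the summands agree term by term. The first step is to fix an anchor $i$ and set $M = N-1$. Index the sequences $D_X^i, D_Y^i$ by $t \in \{1,\dots,N\}\setminus\{i\}$, so that $a_t = d_X(x_i,x_t)$ and $b_t = d_Y(y_i,y_t)$. The ordered pairs $(j,k)$ with $j\neq k$ and $j,k\neq i$ are then exactly the triplets $(i,j,k)\in\mathcal{T}$ with anchor $i$. For each such pair, $\sign(a_j-a_k) = O^{\Delta}_{d_X}(x_i,x_j,x_k)$ and $\sign(b_j-b_k) = O^{\Delta}_{d_Y}(y_i,y_j,y_k)$.

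The core of the argument is a pointwise identity. For any $s,r\in\{-1,0,1\}$,
\[
\frac{sr + \mathbb{I}[s=r=0] - \mathbb{I}[s=0\wedge r\neq 0] - \mathbb{I}[s\neq 0\wedge r=0] + 1}{2} = \mathbb{I}[s=r].
\]
We check this by cases. If both signs are nonzero and equal, the left side is $(1+1)/2 = 1$. If both are nonzero and opposite, it is $(-1+1)/2 = 0$. If both are zero, it is $(0+1+1)/2 = 1$. If exactly one is zero, it is $(0-1+1)/2 = 0$. In every case this matches $\mathbb{I}[s=r]$.

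Next we average this identity. Both Kendall-$\tau$ and Adj-Ties carry the same normalization $\frac{1}{M(M-1)}$ over all ordered pairs, and the constant $1$ equals $\frac{1}{M(M-1)}\sum_{j\neq k}1$. Pulling all three into one sum gives
\[
\text{Adj-Kendall-}\tau(D_X^i,D_Y^i) = \frac{1}{(N-1)(N-2)}\sum_{\substack{j\neq k\\ j,k\neq i}} \mathbb{I}\big[O^{\Delta}_{d_X}(x_i,x_j,x_k)=O^{\Delta}_{d_Y}(y_i,y_j,y_k)\big].
\]

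Finally we average over anchors. Since $|\mathcal{T}| = N(N-1)(N-2)$ and each anchor contributes exactly $(N-1)(N-2)$ triplets, the uniform expectation over $\mathcal{T}$ factors as $\frac{1}{N}\sum_i$ of the per-anchor averages. This yields the claimed formula.

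There is no real obstacle here. The only points needing care are bookkeeping: the sums run over \emph{ordered} pairs, which matches the ordered triplets in $\mathcal{T}$, and the tie cases must be handled correctly in the adjustment term. The four-case check above covers both.
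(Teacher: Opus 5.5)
Your proposal is correct and follows essentially the same route as the paper: the paper first isolates your four-case check as a separate lemma (the summand of $\text{Kendall-}\tau + \text{Adj-Ties}$ equals $2\mathbb{I}[s_a=s_b]-1$, so $\text{Adj-Kendall-}\tau$ is the fraction of agreeing ordered pairs). It then reindexes the per-anchor sum over $t \neq i$ and averages over anchors to recover the uniform expectation over $\mathcal{T}$, exactly as you do.
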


We provide a formal proof for this statement in \cref{sec:tsi-kendall-tau-formulation}. Additionally, this perspective can be expanded to reveal a deeper relationship encompassing both TSI and QSI. The next corollary demonstrates that, assuming symmetric distance functions, the Adjusted Kendall Tau over all unique pairwise distances can be decomposed into a convex combination of TSI and QSI, directly reinforcing the connection between our proposed metrics and conventional statistical methods.

\begin{restatable}[Unified Kendall's Tau Formulation for TSI and QSI]{corollary}{JointKendall}
\label{cor:joint_kendall}
Let $D_X^{\text{all}} = \{d_X(x_i, x_t) | i < t \}$ and $D_Y^{\text{all}} = \{d_Y(y_i, y_t) | i < t \}$ denote the collections of all unique pairwise distances, treated as paired sequences ordered lexicographically by $(i,t)$. Then, for distance functions $d_X$ and $d_Y$ that satisfy the symmetry property, i.e., $d(a,b) = d(b,a)$, a weighted sum of TSI and QSI can be expressed as a function of Kendall's Tau coefficient:
\[
\frac{4}{N+1} \text{TSI}_{d_X, d_Y}(X, Y) + \frac{N-3}{N+1} \text{QSI}_{d_X, d_Y}(X, Y) = \text{Adj-Kendall-}\tau(D_X^\text{all}, D_Y^\text{all})
\]
where the Adjusted Kendall Tau is calculated according to \cref{eq:adj_kendall_tau}.
\end{restatable}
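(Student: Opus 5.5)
The plan is to rewrite $\text{Adj-Kendall-}\tau$ as an average of agreement indicators, and then split the ordered pairs of distances into those that share an index and those that do not. Write $M = \binom{N}{2}$ and index the entries of $D_X^{\text{all}}, D_Y^{\text{all}}$ by unordered pairs $p=\{i,t\}$. By \cref{eq:kendall_tau,eq:adj_ties,eq:adj_kendall_tau}, $\text{Adj-Kendall-}\tau$ is an average over the $M(M-1)$ ordered pairs $(p,q)$ with $p\neq q$. For each such pair the summand is $\frac{1}{2}\big(s_a s_b + \text{adj} + 1\big)$, where $s_a = \sign(a_p - a_q)$, $s_b = \sign(b_p - b_q)$ and $\text{adj}$ is the tie term. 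A four-case check shows this summand equals $\mathbb{I}[s_a = s_b]$:
\begin{itemize}
\item both signs nonzero and equal gives $(1+0+1)/2 = 1$;
\item both signs nonzero and opposite gives $(-1+0+1)/2 = 0$;
\item both signs zero gives $(0+1+1)/2 = 1$;
\item exactly one sign zero gives $(0-1+1)/2 = 0$.
\end{itemize}
Hence $\text{Adj-Kendall-}\tau(D_X^{\text{all}}, D_Y^{\text{all}})$ is the fraction of ordered pairs of distinct unordered index pairs on which $\sign(d_X(p)-d_X(q)) = \sign(d_Y(p)-d_Y(q))$. This is the same argument as for \cref{cor:tsi_kendall}.

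Two distinct unordered pairs $p,q$ either share exactly one index or are disjoint, and I handle the two classes separately.

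\emph{Shared index.} Consider ordered pairs $(\{i,j\},\{i,k\})$ with $j\neq k$. The map $(\{i,j\},\{i,k\}) \mapsto (i,j,k)$ is a bijection onto $\mathcal{T}$, because the shared index $i$ is uniquely determined. By symmetry of $d_X,d_Y$, the comparison equals $O^{\Delta}_{d}(x_i,x_j,x_k)$. So this class contributes $(N)_3\cdot \text{TSI}$ agreements.

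\emph{Disjoint.} There are $(N)_4/4$ ordered pairs $(\{i,j\},\{k,l\})$ of disjoint pairs. Each corresponds to exactly four tuples of $\mathcal{Q}$, obtained by ordering the indices inside each pair. By symmetry all four tuples have the same $O_d$ value, hence the same indicator. So this class contributes $\frac{1}{4}(N)_4\cdot\text{QSI}$ agreements.

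Finally I normalize the total count of agreements. The denominator is
\[
M(M-1) = \frac{N(N-1)}{2}\cdot\frac{(N-2)(N+1)}{2} = \frac{(N)_3 (N+1)}{4}.
\]
Dividing gives weight $\frac{4}{N+1}$ on TSI and $\frac{(N)_4/4}{(N)_3(N+1)/4} = \frac{N-3}{N+1}$ on QSI, which is the claimed identity. As a consistency check, the weights sum to one. The only delicate step is the use of symmetry, which justifies both the bijection with $\mathcal{T}$ and the factor-4 collapse onto $\mathcal{Q}$; the rest is bookkeeping.
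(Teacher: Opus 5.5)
Your proof is correct and takes essentially the paper's route: you rewrite $\text{Adj-Kendall-}\tau$ as the fraction of agreeing ordered pairs (the paper's \cref{lem:adjusted-kendall-identity}), split the distance pairs into shared-index ones ($(N)_3\cdot\text{TSI}$) and disjoint ones ($\tfrac14 (N)_4\cdot\text{QSI}$), and normalize by $M(M-1) = (N+1)_4/4$. The only difference is bookkeeping: the paper first uses symmetry to lift the sum to ordered index tuples, which multiplies the count by $4$, and then treats four symmetric shared-index cases, whereas you stay with unordered pairs, using a direct bijection onto $\mathcal{T}$ and a $4$-to-$1$ collapse from $\mathcal{Q}$, which is slightly cleaner.
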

Beyond the proof provided in \cref{sec:unified-kendall-tau-formulation}, we note to the advanced reader that the term $\text{Adj-Kendall-}\tau(D_X^\text{all}, D_Y^\text{all})$ is conceptually related to Representational Similarity Analysis (RSA) \citep{kriegeskorte2008representational} when computed using Kendall's $\tau$ correlation between distance matrices. Since RSA provides a versatile framework for measuring similarity, of which the above formulation represents a specific instantiation, our analysis of the Kendall's $\tau$ connection inherently captures these relationships.

Additionally, as we show in the following sections, these alternative formulations provide additional benefits, as they lay the foundation for the efficient computation of TSI and QSI.

\section{Efficient Computation of Ordinal Similarity Metrics}
\label{sec:efficient-computation-appendix}

In this section, we explore algorithms for the efficient computation of TSI and QSI, analyzing their respective asymptotic time and space complexities. We divide our analysis into two key approaches: exact computation, for scenarios where the precise value of the metric is required, and approximate computation, where a degree of precision is relinquished in exchange of significant gains in computational speed, making the metrics applicable to large-scale applications.

\subsection{Exact Computation}
\label{sec:exact-computation}

Building upon the connection to the Kendall Tau Correlation coefficient explored in \cref{sec:connection-to-kendall-tau}, we can devise efficient algorithms for our metrics. The foundation of this approach is the efficient computation of the \text{Adj-Kendall-}$\tau$ coefficient. Inspired by algorithms for Kendall's Tau \citep{knight1966computer}, we define a fast computation procedure in \cref{alg:adjusted-kendall-tau}, with its corresponding asymptotic complexity analysis provided in \cref{corol:adjusted-kendall-complexity}.

\begin{algorithm}[H]
\caption{Adjusted Kendall Tau Correlation Coefficient}
\label{alg:adjusted-kendall-tau}
\begin{algorithmic}[1]
\REQUIRE Two sequences of paired observations $A, B$
\STATE Let $\pi_A, \pi_B$ be the rank orderings of points based on $A, B$.
\STATE $T_{A} \gets \text{CountTies}(\pi_A)$
\STATE $T_{B} \gets \text{CountTies}(\pi_B)$
\STATE $T_{AB} \gets \text{CountJointTies}(\pi_A, \pi_B)$
\STATE $I \gets \text{CountInversions}(\pi_A, \pi_B)$
\STATE $C \gets \frac{|A|\times (|A| - 1)}{2} - I - T_{A} - T_{B} + T_{AB}$
\STATE $J \gets 2 \times (C + T_{AB})$
\RETURN $J / (|A|\times (|A| - 1))$
\end{algorithmic}
\end{algorithm}

\begin{restatable}[Computational Complexity of Adj-Kendall-$\tau$]{corollary}{AdjustedKendallComplexity}
Let $A = \{a_1, \dots, a_M\}$ and $B = \{b_1, \dots, b_M\}$ be two sequences of paired observations. The Adj-Kendall-$\tau(A,B)$, described in \cref{eq:adj_kendall_tau}, can be computed in $\mathcal{O}(M \log M)$ time and $\mathcal{O}(M)$ space complexity.
\label{corol:adjusted-kendall-complexity}
\end{restatable}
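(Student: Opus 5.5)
The plan is to justify \cref{alg:adjusted-kendall-tau} and then bound the cost of each of its steps. \emph{Correctness} comes first. Rewrite \cref{eq:adj_kendall_tau} as a count over unordered pairs $\{j,k\}$. A pair contributes $+1$ to the numerator of Kendall-$\tau$ plus Adj-Ties when it is concordant (both signs nonzero and equal) or jointly tied (both signs zero). It contributes $-1$ when it is discordant, or tied in exactly one sequence. Let $P=\binom{M}{2}$, let $G$ be the number of agreeing pairs (concordant plus joint ties), and let $P-G$ be the number of disagreeing pairs. Then
\[
\text{Adj-Kendall-}\tau(A,B)=\frac{(G-(P-G))/P+1}{2}=\frac{G}{P}=\frac{2G}{M(M-1)}.
\]
This matches the returned value $J/(M(M-1))$ with $J=2G$, provided $C$ counts the concordant pairs.

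Next I verify the formula for $C$ by inclusion--exclusion. The pairs tied in $A$ or tied in $B$ number $T_A+T_B-T_{AB}$. Among the remaining pairs, those strictly ordered in both sequences, each is either discordant (an inversion $I$) or concordant. This gives $C=P-I-T_A-T_B+T_{AB}$. For this to hold, CountInversions must count only pairs strictly ordered in both $A$ and $B$. To ensure that, I sort lexicographically by $(a,b)$ and count strict inversions in the resulting $b$-sequence, as in Knight's algorithm. With this ordering, pairs tied in $A$ appear in nondecreasing $b$, so they are never counted as inversions. Pairs with equal $b$ are not counted either, because only strict inversions are counted.

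\emph{Complexity.} The lexicographic sort costs $\mathcal{O}(M\log M)$. $T_A$ and $T_{AB}$ come from one linear scan over runs of equal $a$, and of equal $(a,b)$, in the sorted order. Each run of length $r$ adds $\binom{r}{2}$. $T_B$ requires a separate sort by $b$ followed by a linear scan, again $\mathcal{O}(M\log M)$. The strict inversion count uses merge sort on the $b$-sequence. Each merge step adds the number of remaining left-half elements strictly greater than the current right element. The recursion $T(M)=2T(M/2)+\mathcal{O}(M)$ then yields $\mathcal{O}(M\log M)$ time with an $\mathcal{O}(M)$ auxiliary buffer. All other steps are constant-time arithmetic on these counts. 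The total is therefore $\mathcal{O}(M\log M)$ time and $\mathcal{O}(M)$ space.

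The main obstacle is the tie bookkeeping in the inversion count, since a naive count would double-subtract pairs. I would settle it by showing that, with lexicographic presorting and strict comparisons in the merge, the inversion count equals exactly the number of discordant pairs with nonzero signs in both sequences.
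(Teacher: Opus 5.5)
Your proposal is correct and takes essentially the same route as the paper. You reduce Adj-Kendall-$\tau$ to the fraction of agreeing pairs and recover the concordant count as $C=\binom{M}{2}-I-T_A-T_B+T_{AB}$. You then bound the sorting, tie-scanning and merge-sort inversion steps by $\mathcal{O}(M\log M)$ time and $\mathcal{O}(M)$ space. Your lexicographic presort with strict inversion counting also makes explicit a point the paper leaves implicit: $I$ must count only pairs strictly ordered in both sequences for the inclusion--exclusion to be exact.
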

Equipped with this efficient subroutine (proof in \cref{sec:computational-complexities-adjusted-kendall-tau}), we leverage the anchor-based formulation from \cref{cor:tsi_kendall} to construct \cref{alg:efficient-TSI}. The overall asymptotic complexity analysis for TSI, as summarized in \cref{corol:tsi-qsi-complexity}, is derived by applying the $\mathcal{O}(M \log M)$ procedure for each of the $N$ anchor points ($M=N-1$), resulting in a total time complexity of $\mathcal{O}(N^2 \log N)$ and a space complexity of $\mathcal{O}(N)$.

\begin{algorithm}[H]
\caption{Efficient TSI}
\label{alg:efficient-TSI}
\begin{algorithmic}[1]
\REQUIRE Two sets of $N$ representations $X, Y$, distance functions $d_X, d_Y$
\STATE $A_\text{total} \gets 0$
\FOR{$i = 1 \to N$}
\STATE Let $D^i_X \gets \{d_X(x_i, x_j)\}_{j\neq i}$ and $D^i_Y \gets \{d_Y(y_i, y_j)\}_{j\neq i}$
\STATE $A_\text{total} \gets A_\text{total} + \text{Adj-Kendall-}\tau(D^i_X, D^i_Y)$
\ENDFOR
\RETURN $A_\text{total}/N$
\end{algorithmic}
\end{algorithm}

A similar approach, based on the joint formulation in \cref{cor:joint_kendall}, yields \cref{alg:efficient-QSI} for the exact computation of QSI. This method, which requires a symmetric distance function, computes \text{Adj-Kendall-}$\tau$ over all $M = \binom{N}{2}$ unique pairwise distances. As formalized in the asymptotic complexity analysis in \cref{corol:tsi-qsi-complexity}, this leads to a time complexity of $\mathcal{O}(M \log M) = \mathcal{O}(N^2 \log N)$ and a space complexity of $\mathcal{O}(M) = \mathcal{O}(N^2)$. While QSI calculation also requires TSI, the latter does not affect the overall asymptotic complexity of the combined procedure.

\begin{algorithm}[H]
\caption{Efficient QSI (for symmetric distances)}
\label{alg:efficient-QSI}
\begin{algorithmic}[1]
\REQUIRE Two sets of $N$ representations $X, Y$, distance functions $d_X, d_Y$
\STATE Let $D_X^{\text{all}} = \{d_X(x_i, x_t) | i < t \}$ and $D_Y^{\text{all}} = \{d_Y(y_i, y_t) | i < t \}$
\RETURN $\big((N+1) \times \text{Adj-Kendall-}\tau(D^\text{all}_X, D^\text{all}_Y) - 4 \times \text{TSI}_{d_X, d_Y}(X, Y)\big) / (N-3)$
\end{algorithmic}
\end{algorithm}

\TsiQsiComplexity*
\begin{proof}
\cref{alg:efficient-TSI,alg:efficient-QSI} achieve the stated complexities by leveraging the $\mathcal{O}(M \log M)$ complexity of the Adjusted Kendall-$\tau$ subroutine (\cref{corol:adjusted-kendall-complexity}). For TSI, the subroutine is called $N$ times with $M=N-1$, yielding $\mathcal{O}(N^2 \log N)$. For QSI, it is called once with $M=\binom{N}{2} \approx N^2/2$, yielding $\mathcal{O}(N^2 \log (N^2)) = \mathcal{O}(N^2 \log N)$.
\end{proof}

\subsection{Approximate Computation}
\label{sec:approximate-computation}

To ensure scalability for modern large-scale datasets, we leverage the statistical structure of our metrics to enable fast approximation. Since TSI and QSI can be formulated as bounded U-statistics, they admit efficient approximation via Monte Carlo methods. By independently sampling a sufficient number of triplets or quadruplets, we can obtain an estimate that is arbitrarily close to the true value with high probability. This is formalized in \cref{lem:approximation_bounds}.

\begin{restatable}[Approximation Bounds for Ordinal Similarity Metrics]{lemma}{ApproximationBounds}
\label{lem:approximation_bounds}
Let \(\widehat{\text{TSI}}_n\) be the estimator for TSI obtained by sampling \(n\) triplets uniformly at random with replacement from the set \(\mathcal{T}\) and calculating the fraction of samples where the ordinal relationship is preserved. Similarly, let \(\widehat{\text{QSI}}_n\) be the analogous estimator for QSI from \(n\) random quadruplets. For any error tolerance \(\epsilon > 0\), the estimators are bounded as follows:
\[
P\left(\left|\widehat{\text{TSI}}_n - \text{TSI}_{d_X, d_Y}(X, Y)\right| \ge \epsilon\right) \le 2\exp\left(-2n\epsilon^2\right)
\]
\[
P\left(\left|\widehat{\text{QSI}}_n - \text{QSI}_{d_X, d_Y}(X, Y)\right| \ge \epsilon\right) \le 2\exp\left(-2n\epsilon^2\right)
\]
\end{restatable}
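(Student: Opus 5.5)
The plan is a direct application of Hoeffding's inequality to i.i.d. bounded indicator variables; no U-statistic machinery is needed, because sampling is with replacement from the fixed finite set $\mathcal{T}$ (resp. $\mathcal{Q}$). I treat the datasets $(X, d_X)$ and $(Y, d_Y)$ as fixed, so the only randomness comes from the sampling of index tuples.

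First, I write the estimator explicitly. Let $t_1, \dots, t_n$ be drawn i.i.d. uniformly from $\mathcal{T}$, with $t_m = (i_m, j_m, k_m)$. Define
\[
Z_m = \mathbb{I}\big[O^{\Delta}_{d_X}(x_{i_m}, x_{j_m}, x_{k_m}) = O^{\Delta}_{d_Y}(y_{i_m}, y_{j_m}, y_{k_m})\big],
\]
so that $\widehat{\text{TSI}}_n = \frac{1}{n}\sum_{m=1}^n Z_m$. Because sampling is with replacement, the $Z_m$ are i.i.d. Bernoulli variables taking values in $[0,1]$. By the definition of TSI as the uniform average over $\mathcal{T}$, equivalently the probabilistic form in \cref{eq:tsi_probabilistic}, we have $\E[Z_m] = \text{TSI}_{d_X, d_Y}(X, Y)$. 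Hence $\widehat{\text{TSI}}_n$ is an unbiased estimator.

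Second, I apply the two-sided Hoeffding inequality. For independent $Z_m \in [a_m, b_m]$ it states
\[
P\Big(\big|\tfrac{1}{n}\textstyle\sum_m Z_m - \E[\cdot]\big| \ge \epsilon\Big) \le 2\exp\Big(-2n^2\epsilon^2 / \textstyle\sum_m (b_m - a_m)^2\Big).
\]
Here $b_m - a_m = 1$ for every $m$, so the sum in the denominator equals $n$, and the right-hand side becomes exactly $2\exp(-2n\epsilon^2)$. This is the claimed bound for TSI.

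Third, the QSI case is identical. I replace $\mathcal{T}$ by $\mathcal{Q}$ and $O^{\Delta}$ by $O$, and use \cref{eq:qsi_probabilistic} to identify the mean of each indicator with $\text{QSI}_{d_X, d_Y}(X, Y)$.

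No step here is a real obstacle. The only point needing care is to state clearly that the probability is over the sampling of index tuples with the data held fixed, and that sampling with replacement is what gives independence. Under sampling without replacement, one would instead appeal to Hoeffding's result that the same bound still holds, but that case is not needed for the statement as given.
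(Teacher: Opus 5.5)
Your proposal is correct and follows essentially the same route as the paper. Both arguments define the per-sample agreement indicator as a Bernoulli variable whose mean is the exact TSI (resp.\ QSI) on the fixed data, use sampling with replacement to get i.i.d.\ draws, and apply Hoeffding's inequality with range $[0,1]$ to obtain $2\exp(-2n\epsilon^2)$; your explicit remark that the probability is over the index sampling with the data held fixed is a clarification the paper leaves implicit.
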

This lemma, whose proof is in \cref{sec:approximation-bounds}, has a direct practical consequence: by inverting the bound, we can determine the number of samples $n$ required to guarantee an estimate with additive error $\epsilon$ with probability at least $1-\delta$. This calculation, $n = \lceil \frac{1}{2\epsilon^2}\log(\frac{2}{\delta}) \rceil$, is precisely what underpins the principled approximation method in \cref{alg:approximate_ordinal_similarity}. 

\begin{algorithm}[H]
\caption{Approximate Ordinal Similarity (TSI or QSI)}
\label{alg:approximate_ordinal_similarity}
\begin{algorithmic}[1]
\REQUIRE Two sets of representations $X, Y$, distance functions $d_X, d_Y$, metric type $S \in \{\text{TSI}, \text{QSI}\}$, maximum additive error $\epsilon > 0$, and failure probability $\delta \in (0, 1)$.
\STATE $n \gets \lceil \frac{1}{2\epsilon^2}\log(\frac{2}{\delta}) \rceil$ 
\STATE $A_\text{total} \gets 0$
\FOR{$i = 1 \to n$}
    \IF{$S = \text{TSI}$}
        \STATE Sample a triplet of distinct indices $(i,j,k)$ uniformly at random from $\mathcal{T}$.
        \STATE $A_\text{total} \gets A_\text{total} + \mathbb{I}\big[ O^{\Delta}_{d_X}(x_i, x_j, x_k) = O^{\Delta}_{d_Y}(y_i, y_j, y_k)]$
    \ELSE
        \STATE Sample a quadruplet of distinct indices $(i,j,k,l)$ uniformly at random from $\mathcal{Q}$.
        \STATE $A_\text{total} \gets A_\text{total} + \mathbb{I}\big[ O_{d_X}(x_i, x_j, x_k, x_l) = O_{d_Y}(y_i, y_j, y_k, y_l) \big]$
    \ENDIF
\ENDFOR
\RETURN $A_\text{total} / n$
\end{algorithmic}
\end{algorithm}

It is straightforward to see that computing this algorithm requires $\mathcal{O}\left(\frac{1}{\epsilon^2}\log\left(\frac{1}{\delta}\right)\right)$ time complexity and $\mathcal{O}(1)$ auxiliary space complexity.

\ApproximateComplexity*
\begin{proof}
Direct application of \cref{lem:approximation_bounds} by setting the number of samples $n = \lceil \frac{1}{2\epsilon^2}\log(\frac{2}{\delta}) \rceil$. The time complexity follows from evaluating the ordinal predicate for $n$ samples, and space complexity is $\mathcal{O}(1)$ as only the running sum needs to be stored.
\end{proof}

Finally, it is possible to combine both the exact and approximate approaches by averaging the exact computation of TSI and QSI over a predetermined set of randomly selected batches. This approach can sometimes work well in practice, as it evaluates a much larger number of triplets/quadruplets for the same unit of computation. However, the theoretical guarantees are not as tight as the aforementioned approximation approaches due to the statistical dependence between the evaluated triplets/quadruplets. Therefore, we do not explore this in detail.
 
\section{Concentration Bounds of True Population Similarity as a Function of Dataset Size}
\label{sec:concentration-bounds-of-true-population-similarity}

As introduced in \cref{sec:background-notation}, representation sets $X$ and $Y$ are generated by applying mappings $f$ and $g$ to a common source set $Z$. In most practical applications, this set $Z$ can be viewed as being sampled from a source distribution $\mathcal{D}_Z$. Ultimately, the goal is not merely to measure the similarity of specific finite sets, but to assess the alignment of the underlying data-generating process itself. We can formalize this by treating the pair generation as sampling from a joint distribution $\mathcal{D}_{XY}$. This motivates the concept of \textit{population similarity}: the expected metric value as the sample size grows to infinity, which directly quantifies the true alignment of the underlying generating processes.

\begin{definition}[Population Similarity]
\label{def:population-similarity}
Let $\mathcal{D}_{XY}$ be a joint probability distribution over pairs of representations $(x, y)$ in spaces equipped with distance functions $d_X$ and $d_Y$. For a similarity metric $S$ evaluated on finite datasets of size $N$, we define the \textit{population similarity} (or true similarity) as the expected value of the metric as the sample size approaches infinity:
\[
S^*_{d_X, d_Y}(\mathcal{D}_{XY}) = \lim_{N \to +\infty} \mathbb{E}_{(X,Y) \sim \mathcal{D}_{XY}^N} \left[ S_{d_X, d_Y}(X, Y) \right]
\]
\end{definition}

A fundamental question in practice is how reliably a similarity score computed on finite representation sets $X$ and $Y$ estimates this true population similarity. Strikingly, the mathematical structure of TSI and QSI allows us to derive tight concentration bounds that depend solely on the dataset size.

\begin{restatable}[Concentration Bounds for Empirical TSI and QSI]{lemma}{TSIQSIHoeffdingBound}
\label{lem:empirical_tsi_qsi_hoeffding_bound}
Let $(X, Y) \sim \mathcal{D}_{XY}^N$ be a finite dataset of $N$ independent pairs sampled from the joint distribution $\mathcal{D}_{XY}$. Because $\text{TSI}$ and $\text{QSI}$ are U-Statistics with bounded indicator kernels in $[0, 1]$ (of degrees $m=3$ and $m=4$, respectively), their empirical estimations tightly concentrate around their true population similarities $S^* \in \{\text{TSI}^*, \text{QSI}^*\}$. For any $\epsilon > 0$, the probabilities of deviation are bounded by:
\[
\mathbb{P} \left[ \left| \text{TSI}_{d_X, d_Y}(X, Y) - \text{TSI}^*_{d_X, d_Y}(\mathcal{D}_{XY}) \right| \geq \epsilon \right] \leq 2 \exp\left(-2 \lfloor \tfrac{N}{3}\rfloor \epsilon^2\right)
\]
\[
\mathbb{P} \left[ \left| \text{QSI}_{d_X, d_Y}(X, Y) - \text{QSI}^*_{d_X, d_Y}(\mathcal{D}_{XY}) \right| \geq \epsilon \right] \leq 2 \exp\left(-2 \lfloor \tfrac{N}{4}\rfloor \epsilon^2\right)
\]
\end{restatable}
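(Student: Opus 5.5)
The plan is to apply Hoeffding's inequality for U-statistics, via his classical decomposition of a U-statistic into an average of averages of independent blocks. Three steps come first.

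\textbf{Kernel.} Write the empirical TSI as a U-statistic of degree $m=3$. Let
\[
h\big((x_i,y_i),(x_j,y_j),(x_k,y_k)\big) = \mathbb{I}\big[O^{\Delta}_{d_X}(x_i,x_j,x_k) = O^{\Delta}_{d_Y}(y_i,y_j,y_k)\big] \in [0,1].
\]
Then $\text{TSI}_{d_X,d_Y}(X,Y) = \frac{1}{(N)_3}\sum_{(i,j,k)\in\mathcal{T}} h(\cdot)$. The kernel is not symmetric, but averaging over ordered distinct triplets equals averaging the symmetrized kernel $\bar h$ (the mean of $h$ over the $3!$ permutations) over unordered triplets. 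So it is a standard U-statistic with kernel values in $[0,1]$. QSI is handled the same way with $m=4$ and the set $\mathcal{Q}$.

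\textbf{Mean.} Identify the mean. Because the pairs $(x_i,y_i)$ are i.i.d.\ from $\mathcal{D}_{XY}$, every term has the same expectation $\theta = \mathbb{E}[h]$ over three (resp.\ four) independent draws. Hence $\mathbb{E}[\text{TSI}(X,Y)] = \theta$ for every $N \ge 3$. The limit in \cref{def:population-similarity} is therefore trivially $\text{TSI}^* = \theta$, so the statement reduces to concentration of an unbiased U-statistic around its mean.

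\textbf{Decomposition.} Let $k=\lfloor N/m\rfloor$. For a permutation $\sigma$ of $\{1,\dots,N\}$, define
\[
W_\sigma = \frac{1}{k}\sum_{r=0}^{k-1} \bar h\big(z_{\sigma(rm+1)},\dots,z_{\sigma(rm+m)}\big),
\]
where $z_i=(x_i,y_i)$. By symmetry, $U = \frac{1}{N!}\sum_\sigma W_\sigma$. Each $W_\sigma$ is an average of $k$ independent $[0,1]$-valued variables with mean $\theta$.

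\textbf{Chernoff bound and conclusion.} For $s>0$, convexity of $\exp$ (Jensen over the uniform average in $\sigma$) gives
\[
\mathbb{E}\, e^{s(U-\theta)} \le \frac{1}{N!}\sum_\sigma \mathbb{E}\, e^{s(W_\sigma-\theta)} \le \exp\!\big(s^2/(8k)\big),
\]
where the last step is Hoeffding's lemma applied to each of the $k$ independent terms scaled by $1/k$. Markov's inequality with $s=4k\epsilon$ gives $P(U-\theta\ge\epsilon)\le e^{-2k\epsilon^2}$. The same argument for the lower tail and a union bound give the factor $2$. This yields the claim with $k=\lfloor N/3\rfloor$ for TSI and $k=\lfloor N/4\rfloor$ for QSI.

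The main obstacle is making the symmetrization in the first step rigorous, since the ordered-tuple sums use non-symmetric kernels (TSI has a distinguished anchor, and QSI compares the pair $(i,j)$ against $(k,l)$). I will check that the permutation-average representation $U=\frac{1}{N!}\sum_\sigma W_\sigma$ holds: each ordered $m$-tuple appears equally often as a block across permutations, so it already holds with $h$ in place of $\bar h$ as long as blocks are read in order. This avoids symmetrization altogether. Everything else is standard.
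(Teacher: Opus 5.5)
Your proposal is correct and takes the same route as the paper: you write TSI/QSI as U-statistics of degree $3$/$4$ with $[0,1]$-valued kernels, use unbiasedness so that $S^*$ equals the single-kernel expectation, and apply Hoeffding's U-statistic inequality with $\lfloor N/m\rfloor$ blocks. The differences are minor: the paper cites that inequality as a black box, while you re-derive it (permutation averaging of independent blocks, Jensen, Hoeffding's lemma, Chernoff with $s=4k\epsilon$). You also correctly note that reading the ordered blocks with the non-symmetric kernel makes the paper's symmetrization step unnecessary.
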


This lemma has significant practical implications: it guarantees that empirical TSI and QSI scores predictably and rapidly converge to their true population values as the dataset grows, providing researchers with mathematically bounded confidence in their measurements. While a comprehensive convergence analysis of all existing similarity metrics is beyond the scope of this work, we anticipate that other metrics either exhibit slower convergence rates or possess bounds that depend on the specific geometry of the underlying distributions, in contrast to the distribution-free, purely sample-dependent bounds of TSI and QSI.

\section{Experimental Setting}
\label{sec:experimental-details}

In this section, we describe the parameters used to define each of the similarity metrics in our experiments. This parameter choice reflects the implementation in the corresponding original work, as summarized in \cref{tab:hyperparameters}.

\begin{table}[htbp]
\caption{Parameter settings for the similarity metrics used in the experiments. Settings are chosen to align with the original implementations. The specified distance and kernel functions are the same for both representation spaces (i.e., $d_X = d_Y$ and $k_X = k_Y$).}
\label{tab:hyperparameters}
\begin{center}
\resizebox{0.8\textwidth}{!}{%
\begin{tabular}{ccccc}
\multicolumn{1}{c}{\bf Metric} & \multicolumn{1}{c}{\bf Distance Func.} & \multicolumn{1}{c}{\bf Kernel Func.} & \multicolumn{1}{c}{\bf Other Params} & \multicolumn{1}{c}{\bf Original Work} \\
\hline \\
SVCCA & \multicolumn{1}{c}{-} & \multicolumn{1}{c}{-} & Var. explained $\geq 0.99$ & \citep{raghu2017svccasingularvectorcanonical} \\
PWCCA & \multicolumn{1}{c}{-} & \multicolumn{1}{c}{-} & \multicolumn{1}{c}{-} & \citep{morcos2018insightsrepresentationalsimilarityneural} \\
CKA & \multicolumn{1}{c}{-} & $z_1^\top z_2$ & \multicolumn{1}{c}{-} & \citep{kornblith2019similarityneuralnetworkrepresentations} \\
MutualNN & $-z_1^\top z_2$ & \multicolumn{1}{c}{-} & $k=10$ & \citep{huh2024platonicrepresentationhypothesis} \\
CKNNA & $-z_1^\top z_2$ & $z_1^\top z_2$ & $k=10$ & \citep{huh2024platonicrepresentationhypothesis} \\
\hline \\
TSI (Ours) & $\|z_1 - z_2\|_2$ & \multicolumn{1}{c}{-} & \multicolumn{1}{c}{-} & This work \\
QSI (Ours) & $\|z_1 - z_2\|_2$ & \multicolumn{1}{c}{-} & \multicolumn{1}{c}{-} & This work \\
\end{tabular}
}
\end{center}
\end{table}

To run the external metrics, we utilized the code from their respective original work. This was achieved by utilizing the code repository proposed by \citet{cloos2024framework}, as this repository centralizes existing implementations of similarity metrics to facilitate comparisons across studies.

For the approximate computations of these metrics, following the parameter selection of other works \citep{huh2024platonicrepresentationhypothesis}, we use a batch size of $N_B = 1000$ for all external metrics, and sample $B$ batches with replacement (typically $B=10$ unless specified otherwise). Specifically, when computing similarity metrics in the experiments of \cref{sec:experiments-representation-convergence,sec:experiments-clip-alignment}, we utilized the custom CKA principled approximation from \citet{nguyen2021widedeepnetworkslearn} (see \cref{sec:batch-approximations}), whereas for the other external metrics we simply use a default batch approach. For TSI and QSI, we do not use batches but rather sample $B \times N_B^2$ triplets/quadruplets and directly leverage \cref{alg:approximate_ordinal_similarity}. This ensures a computationally comparable approximation to other similarity metrics that have quadratic time/space complexity.

\FloatBarrier

\section{Comments on Experimental Outcomes of External Metrics}
\label{sec:comments-experimental-outcomes-external}

\subsection{Instability of SVCCA and PWCCA Computation}
\label{sec:computation-instability-svcca-pwcca}

SVCCA and PWCCA require solving a CCA problem between two matrices, which involves estimating batch covariances and computing inverses (or inverse square root) via eigendecomposition/SVD of the sample covariances. In our setting, this step is particularly fragile in the batched approximation regime used for scalability evaluation. For $N_B$ high-dimensional representations with dimension $d$, the covariance estimates can be rank-deficient if $d > N_B$ (noting that after centering, the rank is at most $N_B-1$), therefore SVCCA and PWCCA are not computed in this setting. However, even with $N_B \geq d$, strong collinearity and low effective rank can lead to challenges where small perturbations in the batch produce disproportionate numerical effects when performing CCA analysis, a phenomenon most frequent when $d$ is similar to $N_B$. Empirically, this shows up as non-deterministic behavior across random batches/seeds and occasional numerical failures (e.g., "SVD did not converge", "NaNs/Inf"), which prevents reliable averaging across batches and consistent approximation-quality evaluation. We therefore do not offer comparisons with SVCCA and PWCCA in \cref{sec:experiments-efficient-approximation,sec:experiments-representation-convergence,sec:experiments-clip-alignment}.

\subsection{Counter-Intuitive Sensitivity of MutualNN and CKNNA}
\label{sec:counter-intuitive-behavior-mutualnn-cknna}

The zero-like, low similarity scores of MutualNN and CKNNA reported in the outlier experiment (\cref{sec:experiments-outlier-robustness}) may initially seem counter-intuitive. Since these metrics quantify alignment based on local neighborhood overlaps, perturbing a single point $x_i$ should theoretically have a bounded impact. Intuitively, this perturbation should only alter the neighborhood calculations for a constant number of points: $x_i$ itself, the set of points that lost $x_i$ as a neighbor, and the set of points that newly acquired $x_i$ as a neighbor. Typically, this is expected to constitute a constant number of points, resulting in a negligible expected score deviation of $\frac{1}{N} \times \mathcal{O}(1)$.

However, this intuition fails because standard implementations of these metrics typically utilize the negative dot product as a proxy for distance. Unlike Euclidean distance, the negative dot product does not satisfy the triangle inequality and does not have a lower bound. When an outlier is introduced with a large magnitude (as defined in our robustness protocol), it produces extreme dot product values across the entire dataset that in the limit, for large magnitude values, either converge to $-\infty$, $0$ or $+\infty$. Consequently, this single outlier can potentially create a Hubness effect, i.e., it invades the top-$k$ nearest neighbor lists of a vast number of points (approaching $\mathcal{O}(N)$), effectively displacing their true neighbors globally. This systemic disruption causes the similarity score to degrade disproportionately, resulting in a $\frac{1}{N} \times \mathcal{O}(N)$ impact.

This observation is empirically validated in \cref{fig:outliers-local-metrics-distance-ablation}, where we utilize the same experimental setup as in the original outlier experiment (see \cref{sec:experiments-outlier-robustness}) to compare the robustness of CKNNA and MutualNN using both dot product and Euclidean distance. As the outlier strength $\sigma$ increases, the similarity scores for the variants using dot product (standard in many implementations) drop sharply towards zero, reflecting the hubness-driven systemic disruption of local neighborhoods. In contrast, the versions of these metrics utilizing Euclidean distance remain robust, maintaining alignment scores near 1.0 even under severe perturbations. This confirms the aforementioned limitations associated with selecting distance proxies that do not satisfy the triangle inequality.

While replacing the negative dot product addresses the issue in certain practical settings, it does not provide theoretical guarantees of robustness. There are still scenarios where MutualNN and CKNNA remain dramatically affected by outliers. Furthermore, even with proper metrics, these methods suffer from fundamental structural limitations: they are blind to the relative ordering of neighbors within the $k$-neighborhood, and they are inherently limited to a single neighborhood scale ($k$). In contrast, unlike local metrics, the theoretical robustness bounds of TSI and QSI presented in \cref{corol:outlier-sensitivity} are applicable to all distance functions, even including the negative dot product, making them theoretically robust.

\begin{figure}
\centering
\includegraphics[width=0.8\textwidth]{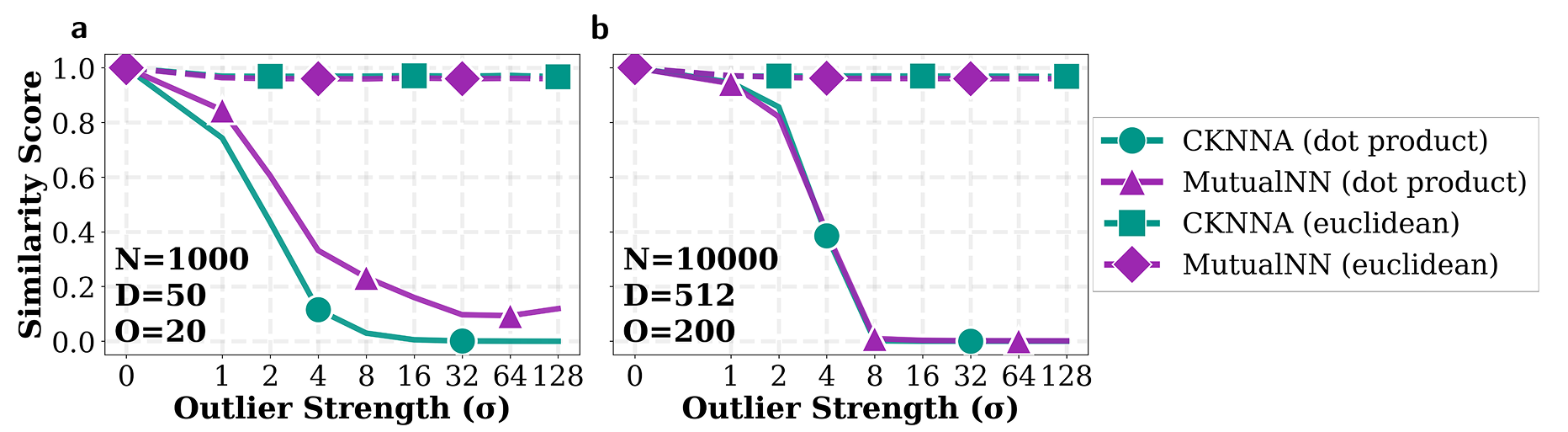}
\vspace{-0.5em}
\caption{Robustness of local metrics to outliers when using different distance proxies. a) Synthetic representations ($N=1000, D=50, K=20$), b) Vision Transformer representations on CIFAR-10 ($N=10000, D=512, K=200$). While CKNNA and MutualNN implemented with the dot product (solid lines) drop to near-zero scores as outlier strength $\sigma$ increases, their Euclidean counterparts (dashed lines) remain robust to these perturbations.}
\label{fig:outliers-local-metrics-distance-ablation}
\vspace{-0.5em}
\end{figure}

\FloatBarrier

\section{Training Details of Vision Transformer on CIFAR-10}
\label{sec:training-vit-on-cifar}

To evaluate the robustness and approximation qualities of the similarity metrics in a realistic setting, we utilized representations derived from a Vision Transformer (ViT) \citep{dosovitskiy2021imageworth16x16words} trained on the CIFAR-10 dataset \citep{Krizhevsky09}. All similarity evaluations described in \cref{sec:experiments-outlier-robustness,sec:experiments-efficient-approximation,sec:experiments-representation-convergence} were computed using the representations of the 10,000 images comprising the official CIFAR-10 validation set.

The model architecture was adapted to accommodate the lower resolution of CIFAR-10 inputs ($32 \times 32$) by reducing the patch size. The specific architectural hyperparameters used for the experiments are detailed in \cref{tab:vit-hyperparameters}.

\begin{table}[h]
\caption{Architectural hyperparameters for the Vision Transformer trained on CIFAR-10.}
\label{tab:vit-hyperparameters}
\begin{center}
\begin{tabular}{lc}
\toprule
\textbf{Parameter} & \textbf{Value} \\
\midrule
Input Resolution & $32 \times 32 \times 3$ \\
Patch Size & $4 \times 4$ \\
Latent Dimension ($D$) & 512 \\
Depth (Transformer Layers) & 6 \\
Attention Heads & 8 \\
MLP Dimension & 512 \\
Dropout Rate & 0.1 \\
Output Classes & 10 \\
\bottomrule
\end{tabular}
\end{center}
\end{table}

The model was trained for 200 epochs on a Tesla V100 GPU, requiring approximately 2 hours to complete. \Cref{fig:validation-accuracy-cifar-10} illustrates the validation accuracy progression over epochs, confirming that the representations used for analysis correspond to a well-optimized model capable of extracting semantic features.

\begin{figure}[htbp]
\centering
\includegraphics[width=0.6\textwidth]{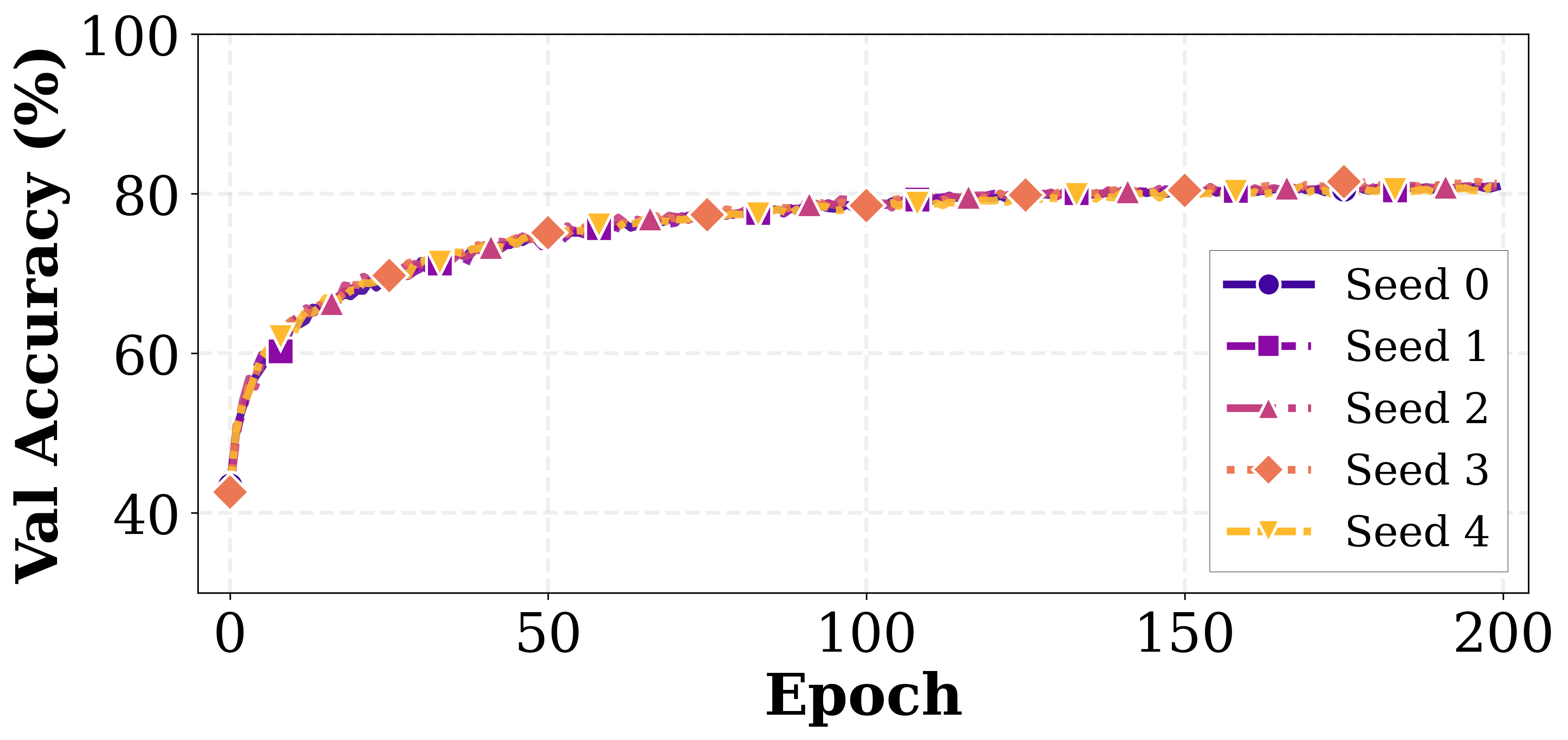}
\vspace{-1em}
\caption{Epoch-wise Validation Accuracy of the ViT model trained on CIFAR-10.}
\label{fig:validation-accuracy-cifar-10}
\vspace{-0.5em}
\end{figure}

\FloatBarrier

\subsection{Details of Representations Used for Each Experiment}

In the following, we explicitly define the source of the representations ($X$ and $Y$) used for the similarity computations in \cref{sec:experiments-outlier-robustness,sec:experiments-efficient-approximation,sec:experiments-representation-convergence}. Crucially, for all experiments listed below, the representation sets are constructed by extracting the latent features, i.e., the corresponding [CLS] token representations, of the 10,000 images comprising the CIFAR-10 validation set.

\begin{table}[htbp]
\caption{Configuration of representation sets used for each experimental evaluation.}
\label{tab:experimental-representations}
\begin{center}
\resizebox{\textwidth}{!}{%
\begin{tabular}{lll}
\toprule
\textbf{Experiment Section} & \textbf{Representation Set X} & \textbf{Representation Set Y} \\
\midrule
Outlier Robustness (\cref{sec:experiments-outlier-robustness}) & Fully trained ViT (final epoch, Seed 0) & Fully trained ViT (final epoch, Seed 0) with synthetic outliers \\
Efficient Approximation (\cref{sec:experiments-efficient-approximation}) & Untrained ViT (initial weights, Seed 0) & Fully trained ViT (final epoch, Seed 0) \\
Representation Convergence (\cref{sec:experiments-representation-convergence}) & Intermediate trained ViT (all epochs, Seed $\{0, 1, 2, 3, 4\}$) & Fully trained ViT (final epoch, Seed $\{0, 1, 2, 3, 4\}$) \\
\bottomrule
\end{tabular}
}
\end{center}
\end{table}

\FloatBarrier

\section{Illustrative Examples}
\label{sec:illustrative-examples-complete-description}

\paragraph{Local Transformation Failure Mode of MutualNN and CKA Addressed by TSI and QSI}

To illustrate a setting where TSI and QSI successfully capture misalignment while MutualNN and CKA fail, we construct the following counterexample.

Let $X \subset \mathbb{R}^d$ be a dataset of $N=2k$ representations equipped with the standard Euclidean distance ($d_X = \lVert \cdot \rVert_2$), partitioned into two distinct clusters $C_1$ and $C_2$, each containing $k \geq 5$ points. We assume that all pairwise distances within each cluster are unique (no ties). Furthermore, let the distance between the two clusters be arbitrarily large; that is, the inter-cluster distance $D = \min_{x \in C_1, y \in C_2} d(x, y)$ satisfies $D \to +\infty$.

Now, consider a transformed representation space $X'$ constructed by applying a permutation $\pi$ to the elements of cluster $C_1$ (i.e., $\pi(C_1) = C_1$ with points locally shuffled). We make the technical assumption that for this cluster, $\pi$ leaves at least one point unpermuted, i.e. $\pi(i) = i$, and permutes two pairs of points between them $\pi(j_1) = k_1 \wedge \pi(k_1) = j_1$ and $\pi(j_2) = k_2 \wedge \pi(k_2) = j_2$. This transformation introduces a severe local structural deformation while perfectly preserving the global cluster assignments.

We evaluate the impact of this local deformation on the respective metrics:
\begin{itemize}
    \item \textbf{Mutual Nearest Neighbors ($\text{MutualNN}_{k-1, d_X, d_X}$):} Because the inter-cluster distance $D \to +\infty$, the $k-1$ nearest neighbors of any point $x \in C_i$ are precisely the other $k-1$ points in its own cluster $C_i$. Since the permutation $\pi$ only shuffles points within $C_1$, the set of intra-cluster nearest neighbors for each point is perfectly preserved. Therefore, the neighborhood overlap remains identical, yielding $\text{MutualNN}_{k-1}(X, X) = \text{MutualNN}_{k-1}(X, X') = 1$.
    \item \textbf{Centered Kernel Alignment ($\text{CKA}_{d_X, d_X}$):} Following the definition from \cref{sec:similarity-metrics-mathematical-description}, CKA first computes the pairwise distance matrices $K_X$ with $(K_X)_{ij} = d_X(x_i, x_j)$ and $K_{X'}$ analogously. It then computes the Hilbert-Schmidt Independence Criterion (HSIC) on the centered pairwise distance matrices: $\text{HSIC}(K_X, K_{X'}) = \frac{1}{(N-1)^2} \operatorname{tr}(K_X H K_{X'} H)$, where $H = I_N - \frac{1}{N}J_N$ is the centering matrix. We show that as $D \to +\infty$, $\text{HSIC}(K_X, K_{X'}) \to \text{HSIC}(K_X, K_X)$ and $\text{HSIC}(K_{X'}, K_{X'}) \to \text{HSIC}(K_X, K_X)$, which implies $\text{CKA}(X, X') \to \text{CKA}(X, X) = 1$. 

    To see this, consider the centered matrix $K_X H$. The right-centering operation effectively subtracts the row means from each entry. For any point, there are $k$ intra-cluster distances $d_{\text{intra}}$ such that $0 \le d_{\text{intra}} \le \epsilon$ for some small $\epsilon > 0$, and $k$ inter-cluster distances $d_{\text{inter}}$ such that $D \le d_{\text{inter}} \le D + \epsilon$. Thus, the mean distance $\mu$ from any point to all other points satisfies $D/2 \le \mu \le D/2 + \epsilon$. Consequently, after centering, the centered intra-cluster distances $d_{\text{intra}} - \mu$ satisfy $-D/2 - \epsilon \le d_{\text{intra}} - \mu \le -D/2 + \epsilon$, and the centered inter-cluster distances $d_{\text{inter}} - \mu$ satisfy $D/2 - \epsilon \le d_{\text{inter}} - \mu \le D/2 + \epsilon$. Because this asymptotic behavior depends exclusively on the global cluster assignments, which are invariant under the intra-cluster permutation $\pi$, the entries of the centered distance matrices $K_X H$ and $K_{X'} H$ are identical up to some entry deviation of at most $2\epsilon$. Let us define the deviation matrix $\Delta$ such that $K_{X'} H = K_X H + \Delta$, where the entries of $\Delta$ are bounded by $2\epsilon$. Using the linearity of the trace operator, we can decompose $\text{HSIC}(K_X, K_{X'})$ and $\text{HSIC}(K_{X'}, K_{X'})$ in terms of $\text{HSIC}(K_X, K_X)$:
    \begin{align*}
        (N-1)^2 \text{HSIC}(K_X, K_{X'}) &= \operatorname{tr}(K_X H K_{X'} H) \\
        &= \operatorname{tr}(K_X H (K_X H + \Delta)) \\
        &= \operatorname{tr}(K_X H K_X H) + \operatorname{tr}(K_X H \Delta) \\
        &= (N-1)^2 \text{HSIC}(K_X, K_X) + \operatorname{tr}(K_X H \Delta)
    \end{align*}
    Similarly, since the right centered matrix can be written as $K_{X'} H = K_X H + \Delta$, we have:
    \begin{align*}
        (N-1)^2 \text{HSIC}(K_{X'}, K_{X'}) &= \operatorname{tr}((K_{X'} H)^2) \\
        &= \operatorname{tr}((K_X H + \Delta)^2) \\
        &= \operatorname{tr}((K_X H)^2) + 2 \operatorname{tr}(K_X H \Delta) + \operatorname{tr}(\Delta^2) \\
        &= (N-1)^2 \text{HSIC}(K_X, K_X) + 2 \operatorname{tr}(K_X H \Delta) + \operatorname{tr}(\Delta^2)
    \end{align*}
    Since the entries of the centered distance matrix $K_X H$ grow linearly with the inter-cluster distance $D$, the term $\text{HSIC}(K_X, K_X)$ is of the order $\mathcal{O}(D^2)$. In contrast, because the entries of $\Delta$ are bounded by $2\epsilon = \mathcal{O}(1)$, the cross-term $\operatorname{tr}(K_X H \Delta)$ is $\mathcal{O}(D)$, and the residual term $\operatorname{tr}(\Delta^2)$ is $\mathcal{O}(1)$. Therefore, as $D \to +\infty$, the $\mathcal{O}(D^2)$ terms dominate, causing both $\text{HSIC}(K_X, K_{X'})$ and $\text{HSIC}(K_{X'}, K_{X'})$ to converge to $\text{HSIC}(K_X, K_X)$.
    \item \textbf{Triplet Similarity Index ($\text{TSI}_{d_X, d_X}$):} Under our technical assumptions, we select the unpermuted anchor index $i$ (i.e., $\pi(i) = i$) and the permuted pair $j_1$ and $k_1$. Because all intra-cluster distances are unique (no ties), the original ordinal relationship between $j_1$ and $k_1$ evaluates to either $1$ or $-1$ (i.e., $O^\Delta_{d_X}(x_i, x_{j_1}, x_{k_1}) \in \{1, -1\}$). Since the permutation swaps the points at indices $j_1$ and $k_1$, their distances from the unpermuted anchor $i$ are strictly exchanged: $d_X(x'_i, x'_{j_1}) = d_X(x_i, x_{k_1})$ and $d_X(x'_i, x'_{k_1}) = d_X(x_i, x_{j_1})$. This means the relative distance ordering from the anchor is inverted, so the new ordinal relationship must evaluate to the opposite sign. Because the ordinal value is necessarily different, we have $O^\Delta_{d_X}(x_i, x_{j_1}, x_{k_1}) \neq O^\Delta_{d_X}(x'_i, x'_{j_1}, x'_{k_1})$. The presence of this strictly discordant triplet guarantees that the expected agreement across all possible triplets is not perfect, thus $\text{TSI}_{d_X, d_X}(X, X') < 1$.
    \item \textbf{Quadruplet Similarity Index ($\text{QSI}_{d_X, d_X}$):} To show a discordant quadruplet, we select the four indices involved in the two swapped pairs: $j_1, j_2, k_1, k_2$. In the original space, the disjoint pairs $(j_1, j_2)$ and $(k_1, k_2)$ have unique distances, so their ordinal relationship evaluates to $O_{d_X}(x_{j_1}, x_{j_2}, x_{k_1}, x_{k_2}) \in \{1, -1\}$. After applying the permutation, the points at these indices are swapped: $x'_{j_1} = x_{k_1}$, $x'_{j_2} = x_{k_2}$, $x'_{k_1} = x_{j_1}$, and $x'_{k_2} = x_{j_2}$. Consequently, the new distance between indices $j_1$ and $j_2$ becomes the original distance between $k_1$ and $k_2$ (i.e., $d_X(x'_{j_1}, x'_{j_2}) = d_X(x_{k_1}, x_{k_2})$), and vice versa. This strictly flips the distance ordering between the two disjoint pairs, yielding a necessarily different opposite sign: $O_{d_X}(x_{j_1}, x_{j_2}, x_{k_1}, x_{k_2}) \neq O_{d_X}(x'_{j_1}, x'_{j_2}, x'_{k_1}, x'_{k_2})$. This discordant quadruplet ensures that $\text{QSI}_{d_X, d_X}(X, X') < 1$.
\end{itemize}

This example demonstrates a critical failure mode: MutualNN and CKA can be completely insensitive to intra-cluster permutations. Consequently, points within a cluster can be arbitrarily shuffled, destroying local alignment without affecting these metrics. In contrast, both TSI and QSI successfully detect that the representations are no longer locally aligned.

\paragraph{Global Transformation Failure Mode of MutualNN and CKNNA Addressed by TSI and QSI}

To illustrate a setting where TSI and QSI successfully capture misalignment while MutualNN and CKNNA fail to detect a massive global transformation, we construct the following counterexample.

Let $X \subset \mathbb{R}^d$ be a dataset of $N=3k$ representations equipped with the standard Euclidean distance ($d_X = \lVert \cdot \rVert_2$), partitioned into three distinct clusters $C_1$, $C_2$, and $C_3$, each containing $k \geq 2$ points. Let $c_1, c_2$, and $c_3$ denote the respective centroids of these clusters. We assume the clusters are tightly packed such that intra-cluster distances are arbitrarily small compared to inter-cluster distances. We define the inter-cluster distances as follows: $\lVert c_1 - c_2 \rVert_2 \approx D$ and $\lVert c_1 - c_3 \rVert_2 \approx R$, with $D \gg R \gg 1$. Furthermore, we assume that the vector $c_1 - c_2$ is orthogonal to $c_1 - c_3$, which implies that $\lVert c_2 - c_3 \rVert_2 \approx \sqrt{D^2 + R^2}$.

Now, consider a transformed representation space $X'$ constructed by simply translating the entire third cluster $C_3$ by the vector $c_2 - c_1$, while leaving $C_1$ and $C_2$ unchanged. The new centroid for the third cluster becomes $c'_3 = c_3 + (c_2 - c_1)$. Consequently, the new inter-cluster distances become $\lVert c'_1 - c'_3 \rVert_2 = \lVert c_1 - (c_3 + c_2 - c_1) \rVert_2 = \lVert (c_1 - c_3) + (c_1 - c_2) \rVert_2 \approx \sqrt{D^2 + R^2}$ (due to orthogonality), and $\lVert c'_2 - c'_3 \rVert_2 = \lVert c_2 - (c_3 + c_2 - c_1) \rVert_2 = \lVert c_1 - c_3 \rVert_2 \approx R$. The distance between $C_1$ and $C_2$ remains $\approx D$. This transformation represents a severe global structural change, significantly altering the macroscopic layout of the clusters.

We evaluate the impact of this global transformation on the respective metrics:
\begin{itemize}
    \item \textbf{Mutual Nearest Neighbors ($\text{MutualNN}_{k-1, d_X, d_X}$):} Because $D \gg R \gg 1$ and the clusters are tightly packed, $k-1$ nearest neighbors of any point $x \in C_i$ are exclusively other points within its own cluster $C_i$. Since the transformation only translates the entire cluster $C_3$ rigidly and preserves the large inter-cluster separations ($R \gg 1$), the intra-cluster nearest neighbors for all points remain perfectly intact. Thus, the neighborhood overlap is identical, yielding $\text{MutualNN}_{k-1}(X, X') = 1$.
    \item \textbf{Centered Kernel Nearest-Neighbor Alignmen ($\text{CKNNA}_{k-1, d_X, d_X}$):} CKNNA relies on the adjacency matrices of the $k$-nearest neighbor graphs. As established above, the $k-1$-nearest neighbor graph connects only points within the same cluster. This local connectivity structure is completely invariant to the rigid translation of $C_3$. Consequently, the local adjacency matrices for $X$ and $X'$ are identical, leading to $\text{CKNNA}_{k-1}(X, X') = 1$ (analogous to the self-similarity $\text{CKA}(Z, Z) = 1$ when the underlying distance matrices are identical).
    \item \textbf{Triplet Similarity Index ($\text{TSI}_{d_X, d_X}$):} We select an anchor point $x_1 \in C_1$, and two target points $x_2 \in C_2$ and $x_3 \in C_3$. In the original space $X$, the distances from the anchor are $d_X(x_1, x_2) \approx D$ and $d_X(x_1, x_3) \approx R$. Since $D > R$, the ordinal relationship evaluates to $O^\Delta_{d_X}(x_1, x_2, x_3) = 1$. In the transformed space $X'$, the points map to $x'_1, x'_2, x'_3$. The new distances from the anchor become $d_X(x'_1, x'_2) \approx D$ and $d_X(x'_1, x'_3) \approx \sqrt{D^2 + R^2}$. Because $D < \sqrt{D^2 + R^2}$, the relative distance ordering is inverted, yielding the opposite sign: $O^\Delta_{d_X}(x'_1, x'_2, x'_3) = -1$. The existence of this strictly discordant triplet implies that the expected agreement across all triplets is not perfect, hence $\text{TSI}_{d_X, d_X}(X, X') < 1$.
    \item \textbf{Quadruplet Similarity Index ($\text{QSI}_{d_X, d_X}$):} We select two distinct points $x_{1a}, x_{1b} \in C_1$, one point $x_2 \in C_2$, and one point $x_3 \in C_3$. In the original space $X$, the distance between the disjoint pairs $(x_{1a}, x_2)$ and $(x_{1b}, x_3)$ are roughly $D$ and $R$, respectively. Since $D > R$, the ordinal relationship is $O_{d_X}(x_{1a}, x_2, x_{1b}, x_3) = 1$. In the transformed space $X'$, the distances become approximately $D$ and $\sqrt{D^2 + R^2}$. Since $D < \sqrt{D^2 + R^2}$, the distance ordering between the two pairs is flipped, yielding $O_{d_X}(x'_{1a}, x'_2, x'_{1b}, x'_3) = -1$. This discordant quadruplet ensures that $\text{QSI}_{d_X, d_X}(X, X') < 1$.
\end{itemize}

This example illustrates a fundamental limitation: metrics strictly based on $k$-nearest neighbor graphs, such as MutualNN and CKNNA, can be completely blind to massive global transformations as long as the local intra-cluster neighborhoods remain intact. In contrast, TSI and QSI evaluate distances at all scales and successfully identify the macroscopic misalignment.

\paragraph{Distinguishing the Scope of TSI and QSI}
\label{sec:tsi_vs_qsi_example}
To illustrate the difference in sensitivity between TSI and QSI, consider a 1-dimensional dataset containing $N=4$ representations forming two distinct clusters: $X = \{x_1, x_2, x_3, x_4\} = \{0, 1, D, D + 2\}$, equipped with the standard Euclidean distance. Let $D \gg 1$ be an arbitrarily large positive constant. The representations can be conceptually grouped into two remote clusters: $\{x_1, x_2\}$ and $\{x_3, x_4\}$.

Now, consider a transformed dataset $X'$ obtained by applying a local contraction to the second cluster, specifically mapping $x_4 \to x_4' = D + 1$. Thus, $X' = \{0, 1, D, D + 1\}$.

We can evaluate the impact of this local structural deformation on both metrics:
\begin{itemize}
    \item \textbf{TSI:} Because TSI relies strictly on triplets (a shared anchor point), it only measures distance rankings relative to each individual point. Since the inter-cluster distance (${\approx} D$) dominates any intra-cluster changes, the relative distance ordering from any single anchor point remains entirely unaffected. Therefore, $\text{TSI}_{d_X, d_X}(X, X') = 1$.
    \item \textbf{QSI:} QSI evaluates disjoint pairs of distances. In the original space $X$, the intra-cluster distance of the first cluster is smaller than the intra-cluster distance of the second cluster: $d_X(x_1, x_2) = 1 < 2 = d_X(x_3, x_4)$. However, in the contracted space $X'$, this relationship turns equal: $d_X(x'_1, x'_2) = 1 = d_X(x'_3, x'_4)$. Because QSI captures this global relative scale variation between disjoint pairs, the metric penalizes the transformation, yielding $\text{QSI}_{d_X, d_X}(X, X') = \frac{1}{3}$ (the ordinal sign changes for $(1,2,3,4)$ and $(1,3,2,4)$ due to the contraction, but is preserved for $(1,4,2,3)$, where each tuple $(i,j,k,l)$ denotes the disjoint-pair comparison $d(x_i,x_j)$ vs. $d(x_k,x_l)$).
\end{itemize}

This minimal example highlights that while TSI is highly robust to localized, non-isotropic scale variations provided the local ranking structure is preserved, QSI strictly enforces the preservation of global relative scale across disjoint regions of the representation space.

\section{Proofs}
\label{sec:mathematical-proofs}

\subsection{Expected Similarity for Independent Representations}
\label{sec:independent-expectation}

This proof derives the expected value of TSI and QSI for independently sampled representations (\cref{lem:expectation-over-random-representations}). The key insight is that for i.i.d. samples, the probability of one distance being larger than another is, by symmetry, equal to the probability of it being smaller.

\ExpectedSimilarityIndependent*
\begin{proof}
The proof for TSI is presented here; the argument for QSI is identical, simply replacing triplets with quadruplets and $O^\Delta_d$ with $O_d$.

From its probabilistic interpretation (\cref{eq:tsi_probabilistic}), the TSI over two representation sets is the probability that the ordinal comparison agrees for a single random triplet drawn from the underlying distributions. Let $O_X = O^{\Delta}_{d_X}(x_i, x_j, x_k)$ and $O_Y = O^{\Delta}_{d_Y}(y_i, y_j, y_k)$ for randomly drawn points. The TSI is $P(O_X = O_Y)$.

We can decompose this probability by conditioning on the possible outcomes $\{-1, 0, 1\}$ for the sign comparison in each space:
\begin{align*}
P(O_X = O_Y) = \sum_{v \in \{-1, 0, 1\}} P(O_X = v \wedge O_Y = v)
\end{align*}
Since the datasets $X$ and $Y$ are drawn from independent distributions, the outcomes of their ordinal comparisons are independent. Thus, we can factor the joint probabilities:
\[
P(O_X = O_Y) = P(O_X = -1)P(O_Y = -1) + P(O_X = 0)P(O_Y = 0) + P(O_X = 1)P(O_Y = 1)
\]
Let's analyze the probabilities for a single space, say $X$. We denote $p_X^- = P(O_X = -1)$, $p_X^0 = P(O_X = 0)$, and $p_X^+ = P(O_X = 1)$. By definition, $p_X^0$ is the probability of a tie, as defined in the lemma.

Now, w.l.o.g consider the event $O_X = -1$, which corresponds to $d_X(x_i, x_j) < d_X(x_i, x_k)$. Since the points $x_j$ and $x_k$ are drawn i.i.d. from the same distribution $\mathcal{D}_X$, their labels are interchangeable. Swapping the roles of $x_j$ and $x_k$ gives the event $d_X(x_i, x_k) < d_X(x_i, x_j)$, which corresponds to $O_X = 1$. This means that for any arbitrary triplet, there exists an equally probable permutation that transforms $O_X = -1$ to $O_X = 1$, therefore, the probabilities of these two events must be equal:
\[
p_X^- = P\big(d_X(x_i, x_j) < d_X(x_i, x_k)\big) = P\big(d_X(x_i, x_k) < d_X(x_i, x_j)\big) = p_X^+
\]
Since the probabilities of all outcomes must sum to 1, we have $p_X^- + p_X^0 + p_X^+ = 1$. Substituting $p_X^- = p_X^+$, we get $2p_X^+ + p_X^0 = 1$, which implies:
\[
p_X^+ = p_X^- = \frac{1 - p_X^0}{2}
\]
The same logic applies to space $Y$, so $p_Y^+ = p_Y^- = \frac{1 - p_Y^0}{2}$.

Finally, we substitute these back into the expression for the expected TSI:
\begin{align*}
\mathbb{E}[\text{TSI}] &= (p_X^-)(p_Y^-) + (p_X^0)(p_Y^0) + (p_X^+)(p_Y^+) \\
&= \left(\frac{1 - p_X^0}{2}\right)\left(\frac{1 - p_Y^0}{2}\right) + p_X^0 p_Y^0 + \left(\frac{1 - p_X^0}{2}\right)\left(\frac{1 - p_Y^0}{2}\right) \\
&= 2 \cdot \frac{(1 - p_X^0)(1 - p_Y^0)}{4} + p_X^0 p_Y^0 \\
&= \frac{(1 - p_X^0)(1 - p_Y^0)}{2} + p_X^0 p_Y^0
\end{align*}
This completes the proof.
\end{proof}

\subsection{Connection to Neighborhood Consistency}
\label{sec:connection-to-neighborhood-consistency}

This section provides the formal proofs for the relationship between TSI and local neighborhood structures. We first establish the equivalence between perfect TSI alignment and joint Mutual Nearest Neighbors (MNN) consistency across all scales (\cref{cor:tsi_mutual_nn_alignment}). Subsequently, we prove that the TSI alignment score is upper-bounded by a convex combination of individual MNN coefficients (\cref{cor:tsi_mutual_nn_coefficients}). The proof strategy relies on reducing these global relationships to independent point-wise comparisons for each anchor point, leveraging the shared ordinal structure between TSI and neighborhood-based similarity metrics.

\subsubsection{Perfect TSI alignment is equivalent to joint Mutual Nearest Neighbors alignment}
\label{sec:tsi-mutual-nn-alignment-proof}

\TsiMutualNnAlignment*
\begin{proof}
The ($\implies$) direction is straightforward: $\text{TSI}_{d_X, d_Y}(X, Y) = 1$ implies that for every anchor point $i$, the relative ordering of distances to all other points is identical in both spaces. This ensures that $\text{KNN}_{k, d_X}(X \setminus \{x_i\}, x_i) = \text{KNN}_{k, d_Y}(Y \setminus \{y_i\}, y_i)$ for any $k$, yielding $\text{MutualNN}_{k, d_X, d_Y}(X, Y) = 1$.

For the ($\impliedby$) direction, if $\text{MutualNN}_{k, d_X, d_Y}(X, Y) = 1$ for all $k \in \{1, \dots, N-2\}$, then $|\text{SharedNN}_{k, d_X, d_Y}(i)| = k$ for all $i$ and $k$, which implies $\text{KNN}_{k, d_X}(X \setminus \{x_i\}, x_i) = \text{KNN}_{k, d_Y}(Y \setminus \{y_i\}, y_i)$. Let $v_k^{d_X}(i)$ and $v_k^{d_Y}(i)$ be the unique $k$-th nearest neighbors of point $i$ in each space. Since $\{v_k^{d_X}(i)\} = \text{KNN}_{k, d_X}(X \setminus \{x_i\}, x_i) \setminus \text{KNN}_{k-1, d_X}(X \setminus \{x_i\}, x_i)$ (and similarly for $Y$), it follows that $v_k^{d_X}(i) = v_k^{d_Y}(i)$ for all $k \in \{1, \dots, N-1\}$. Thus, the distance rank-ordering from any anchor is preserved, which is equivalent to $\text{TSI}_{d_X, d_Y}(X, Y) = 1$.
\end{proof}

\subsubsection{TSI alignment is upper-bounded by convex combination of Mutual Nearest Neighbors coefficients}
\label{sec:tsi-mutual-nn-coefficients-proof}

\TsiMutualNnCoefficients*
\begin{proof}
The proof proceeds by showing that the global inequality,
\begin{equation}
\label{eq:global_reduction}
\text{TSI}_{d_X, d_Y}(X, Y) \leq \frac{1}{\binom{N-1}{2}} \sum_{k=1}^{N-2} k \cdot \text{MutualNN}_{k, d_X, d_Y}(X, Y),
\end{equation}
can be reduced to a simpler point-wise comparison. We first express both sides as averages over individual data points $i \in \{1, \dots, N\}$. Using the anchor-based formulation of TSI (\cref{cor:tsi_kendall}) and the definition of Mutual Nearest Neighbors (\cref{sec:similarity-metrics-mathematical-description}), the desired inequality is equivalent to:
\begin{align*}
\frac{1}{N}\sum_{i=1}^N \text{Adj-Kendall-}\tau(D_X^i, D_Y^i) &\leq \frac{1}{\binom{N-1}{2}} \sum_{k=1}^{N-2} k \cdot \left( \frac{1}{N}\sum_{i=1}^N \frac{|\text{SharedNN}_{k, d_X, d_Y}(i)|}{k} \right) \\
\iff \frac{1}{N}\sum_{i=1}^N \text{Adj-Kendall-}\tau(D_X^i, D_Y^i) &\leq \frac{1}{N}\sum_{i=1}^N \left( \frac{1}{\binom{N-1}{2}} \sum_{k=1}^{N-2} |\text{SharedNN}_{k, d_X, d_Y}(i)| \right).
\end{align*}
By linearity of summation, this global condition holds if the following point-wise inequality is satisfied for every $i$:
\begin{equation}
\label{eq:pointwise_reduction}
\text{Adj-Kendall-}\tau(D_X^i, D_Y^i) \stackrel{?}{\leq} \frac{1}{\binom{N-1}{2}} \sum_{k=1}^{N-2} |\text{SharedNN}_{k, d_X, d_Y}(i)|.
\end{equation}
Establishing this point-wise comparison thus completes the reduction.

To establish this, let $\sigma$ be the permutation that sorts the distances in $D_X^i$ increasingly, such that $d_X(x_i, x_{\sigma(1)}) < d_X(x_i, x_{\sigma(2)}) < \dots < d_X(x_i, x_{\sigma(N-1)})$. The existence of a unique such permutation is guaranteed by the no-ties assumption. We define the ordinal rank of a point $j$ relative to $i$ as $r(j, D^i) = N - 1 - t$, where $j$ is the $t$-th closest neighbor to $i$ with respect to distances $D^i$. By construction, $r(\sigma(j), D_X^i) = N - 1 - j$. 

With these definitions, and leveraging the no-ties assumption to ensure the sets $\text{SharedNN}_k$ are well-defined as having exactly $k$ elements, we can rewrite the weighted sum of shared neighbors as:
\begin{align*}
\sum_{k=1}^{N-2} |\text{SharedNN}_{k, d_X, d_Y}(i)| &= \sum_{k=1}^{N-2} \sum_{j=1}^{N-1} \mathbb{I}[j \in \text{SharedNN}_{k, d_X, d_Y}(i)] \\
&= \sum_{j=1}^{N-1} \min(r(j, D_X^i), r(j, D_Y^i)) \\
&= \sum_{j=1}^{N-1} \min(r(\sigma(j), D_X^i), r(\sigma(j), D_Y^i)) \\
&= \sum_{j=1}^{N-2} \min(r(\sigma(j), D_X^i), r(\sigma(j), D_Y^i))
\end{align*}
Observe that for $j = N-1$, we have $r(\sigma(N-1), D_X^i) = 0$, so the $(N-1)$-th term in this sum is always zero.

Next, we simplify the Adjusted Kendall-$\tau$ using the identity from \cref{lem:adjusted-kendall-identity}. By indexing the sum over the permutation $\sigma$ and leveraging the symmetry of the $\sign$ function, we obtain:
\begin{align*}
\text{Adj-Kendall-}\tau(D_X^i, D_Y^i) &= \frac{1}{(N-1)(N-2)} \sum_{j \neq t} \mathbb{I}\big[\sign(d_X(x_i, x_j) - d_X(x_i, x_t))=\sign(d_Y(y_i, y_j) - d_Y(y_i, y_t)) \big] \\
&= \frac{1}{(N-1)(N-2)} \sum_{j \neq t} \mathbb{I}\big[\sign(d_X(x_i, x_{\sigma(j)}) - d_X(x_i, x_{\sigma(t)}))=\sign(d_Y(y_i, y_{\sigma(j)}) - d_Y(y_i, y_{\sigma(t)})) \big] \\
&= \frac{1}{\binom{N-1}{2}} \sum_{j = 1}^{N-2} \sum_{t = j + 1}^{N-1} \mathbb{I}\big[\sign(d_X(x_i, x_{\sigma(j)}) - d_X(x_i, x_{\sigma(t)}))=\sign(d_Y(y_i, y_{\sigma(j)}) - d_Y(y_i, y_{\sigma(t)})) \big] \\
&= \frac{1}{\binom{N-1}{2}} \sum_{j = 1}^{N-2} \sum_{t = j + 1}^{N-1} \mathbb{I}\big[d_Y(y_i, y_{\sigma(j)}) < d_Y(y_i, y_{\sigma(t)}) \big],
\end{align*}
where the last equality follows because $d_X(x_i, x_{\sigma(j)}) < d_X(x_i, x_{\sigma(t)})$ for all $j < t$ by the definition of $\sigma$ and the no-ties assumption. Substituting these reformulated terms into our point-wise comparison, we want to show that:
\begin{equation}
\label{eq:reformulated_pointwise}
\frac{1}{\binom{N-1}{2}} \sum_{j=1}^{N-2} \sum_{t = j + 1}^{N-1} \mathbb{I}\big[d_Y(y_i, y_{\sigma(j)}) < d_Y(y_i, y_{\sigma(t)}) \big] \stackrel{?}{\leq} \frac{1}{\binom{N-1}{2}} \sum_{j=1}^{N-2} \min(r(\sigma(j), D_X^i), r(\sigma(j), D_Y^i)).
\end{equation}
Therefore, to prove the corollary, it suffices to show that for each $j \in \{1, \dots, N-2\}$:
\begin{equation}
\label{eq:termwise_reduction}
\sum_{t = j + 1}^{N-1} \mathbb{I}\big[d_Y(y_i, y_{\sigma(j)}) < d_Y(y_i, y_{\sigma(t)}) \big] \stackrel{?}{\leq} \min(r(\sigma(j), D_X^i), r(\sigma(j), D_Y^i)).
\end{equation}
To do this, we establish two upper bounds on the sum. First, by the definition of the sum, we have:
\begin{equation*}
\sum_{t = j + 1}^{N-1} \mathbb{I}\big[d_Y(y_i, y_{\sigma(j)}) < d_Y(y_i, y_{\sigma(t)}) \big] \leq \sum_{t = j + 1}^{N-1} 1 = N - j - 1 = r(\sigma(j), D_X^i).
\end{equation*}
Second, we observe that the sum is also bounded by the total number of points $y_{\sigma(t)}$ that are further from $y_i$ than $y_{\sigma(j)}$:
\begin{equation*}
\sum_{t = j + 1}^{N-1} \mathbb{I}\big[d_Y(y_i, y_{\sigma(j)}) < d_Y(y_i, y_{\sigma(t)}) \big] \leq \sum_{t=1, t \neq j}^{N-1} \mathbb{I}\big[d_Y(y_i, y_{\sigma(j)}) < d_Y(y_i, y_{\sigma(t)}) \big] = r(\sigma(j), D_Y^i).
\end{equation*}
Since both bounds hold, the term-wise inequality in \cref{eq:termwise_reduction} is satisfied for all $j \in \{1, \dots, N-2\}$. By summing over $j$, this directly implies \cref{eq:reformulated_pointwise}, which establishes the point-wise comparison in \cref{eq:pointwise_reduction}. Finally, by averaging over all anchor points $i$, we obtain the global inequality in \cref{eq:global_reduction}, completing the proof.

\end{proof}

\subsection{Outlier Robustness}
\label{sec:outlier-robustness}

This subsection establishes the theoretical guarantees for the robustness of TSI and QSI to both representation corruptions and outliers. We first derive tight bounds for ordinal similarity under partial transformations (\cref{lem:sensitivity_subset_transformation}) using a combinatorial approach that partitions comparisons based on their involvement with transformed points. We then leverage these bounds to formally prove the robustness of our metrics (\cref{corol:outlier-sensitivity}), demonstrating that their sensitivity to arbitrary modifications or sparse outliers is inherently bounded.

\subsubsection{Bounds for Ordinal Similarity Metrics under Partial Transformations}
\label{sec:bounds-partial-transformation}

\SensitivitySubsetTransformation*
\begin{proof}
Let's first present the proof for TSI, as the argument for QSI follows the exact same structure. By definition, the TSI for the transformed dataset $X'$ is:
\[
\text{TSI}_{d_X, d_Y}(X',Y) = \frac{1}{(N)_3} \sum_{(i,j,k) \in \mathcal{T}} \mathbb{I}\Big[ O^{\Delta}_{d_X}(x'_i, x'_j, x'_k) = O^{\Delta}_{d_Y}(y_i, y_j, y_k) \Big]
\]
We partition the sum based on whether all indices $\{i,j,k\}$ belong to the untransformed set $U$. Let $\mathcal{T}_U$ be the set of triplets where all indices are in $U$.
\[
(N)_3 \cdot \text{TSI}_{d_X, d_Y}(X',Y) = \sum_{(i,j,k) \in \mathcal{T}_U} \mathbb{I}\Big[ \dots \Big] + \sum_{(i,j,k) \in \mathcal{T} \setminus \mathcal{T}_U} \mathbb{I}\Big[ \dots \Big]
\]
For any triplet $(i,j,k) \in \mathcal{T}_U$, the points are unchanged ($x'_i = x_i$, $x'_j = x_j$, $x'_k = x_k$). Therefore, the ordinal comparison is identical to that of the original dataset:
\[
O^{\Delta}_{d_X}(x'_i, x'_j, x'_k) = O^{\Delta}_{d_X}(x_i, x_j, x_k)
\]
The first sum is thus the number of aligned triplets within the subset $U$, which by definition is:
\[
\sum_{(i,j,k) \in \mathcal{T}_U} \mathbb{I}\Big[ O^{\Delta}_{d_X}(x_i, x_j, x_k) = O^{\Delta}_{d_Y}(y_i, y_j, y_k) \Big] = (|U|)_3 \cdot \text{TSI}_{d_X, d_Y}(X_U, Y_U)
\]
For the second sum over triplets with at least one transformed point, we can establish bounds. Since the indicator function $\mathbb{I}[\cdot]$ is always non-negative, a lower bound for this sum is 0. This gives us:
\[
(N)_3 \cdot \text{TSI}_{d_X, d_Y}(X',Y) \geq (|U|)_3 \cdot \text{TSI}_{d_X, d_Y}(X_U, Y_U)
\]
Dividing by $(N)_3$ gives the lower bound of the lemma.

For the upper bound, we use the fact that any indicator is at most 1. The second sum is therefore bounded by the number of triplets in its domain, which is $|\mathcal{T} \setminus \mathcal{T}_U| = (N)_3 - (|U|)_3$.
\begin{align*}
(N)_3 \cdot \text{TSI}_{d_X, d_Y}(X',Y) &\leq (|U|)_3 \cdot \text{TSI}_{d_X, d_Y}(X_U, Y_U) + \left( (N)_3 - (|U|)_3 \right)
\end{align*}
Dividing by $(N)_3$ yields the upper bound:
\begin{align*}
\text{TSI}_{d_X, d_Y}(X',Y) &\leq \frac{(|U|)_3}{(N)_3} \text{TSI}_{d_X, d_Y}(X_U,Y_U) + 1 - \frac{(|U|)_3}{(N)_3} \\
&= c_{\text{TSI}} \cdot \text{TSI}_{d_X, d_Y}(X_U,Y_U) + (1 - c_{\text{TSI}})
\end{align*}
The same derivation holds for QSI by replacing the set of triplets $\mathcal{T}$ with the set of quadruplets $\mathcal{Q}$, the falling factorial $(n)_3$ with $(n)_4$ for both $n=N$ and $n=|U|$, and the triplet comparison function $O^\Delta_d$ with the quadruplet comparison function $O_d$. This yields the stated bounds for QSI and completes the proof.
\end{proof}

\subsubsection{TSI and QSI are Robust to Corruptions and Outliers}
\label{sec:tsi-qsi-robustness-to-outliers}

\OutlierRobustness*
\begin{proof}
Both inequalities follow from the bounds established in \cref{lem:sensitivity_subset_transformation}. Let $I$ be the interval $[c_S S(X_U, Y_U), c_S S(X_U, Y_U) + (1 - c_S)]$, which has length $\ell(I) = 1 - c_S$.

For the \textbf{Corruptions} bound, we observe that both the original dataset $X$ and the corrupted dataset $X'$ result from a transformation of the subset $V$ (where $X$ uses the identity transformation). By \cref{lem:sensitivity_subset_transformation}, both scores must lie within $I$:
\begin{align*}
    S(X, Y) &\in [c_S S(X_U, Y_U), c_S S(X_U, Y_U) + (1 - c_S)] \\
    S(X', Y) &\in [c_S S(X_U, Y_U), c_S S(X_U, Y_U) + (1 - c_S)]
\end{align*}
Since both values belong to the same interval of length $1 - c_S$, their absolute difference is bounded by the interval length:
\[ |S(X, Y) - S(X', Y)| \leq 1 - c_S. \]

For the \textbf{Outliers} bound, subtracting $S(X_U, Y_U)$ from the bounds in \cref{lem:sensitivity_subset_transformation} yields:
\[ (c_S - 1) S(X_U, Y_U) \leq S(X', Y) - S(X_U, Y_U) \leq (c_S - 1) S(X_U, Y_U) + (1 - c_S). \]
Since $0 \leq S(X_U, Y_U) \leq 1$ and $(c_S - 1) \leq 0$, we have $(c_S - 1) S(X_U, Y_U) \geq c_S - 1 = -(1 - c_S)$ for the lower bound, and $(c_S - 1) S(X_U, Y_U) + (1 - c_S) \leq 1 - c_S$ for the upper bound. This leads to:
\[ -(1 - c_S) \leq S(X', Y) - S(X_U, Y_U) \leq 1 - c_S. \]
As the deviation is contained within $[-(1-c_S), 1-c_S]$, it follows that $|S(X_U, Y_U) - S(X', Y)| \leq 1 - c_S$.
\end{proof}

\subsection{Approximation Bounds for Ordinal Similarity Metrics}
\label{sec:approximation-bounds}

This proof establishes the approximation bounds presented in \cref{lem:approximation_bounds} for both TSI and QSI. The key observation is that evaluating the predicate of TSI/QSI for a randomly sampled triplet/quadruplet is equivalent to sampling directly from a Bernoulli distribution with the underlying expectation equating to the true TSI and QSI outputs for the considered representations.

\ApproximationBounds*
\begin{proof}
The proof for TSI is presented here; the argument for QSI is identical. The TSI is defined as the expectation of an indicator function over all possible triplets, and our estimator, \(\widehat{\text{TSI}}_n\), is the sample mean of \(n\) independent random draws from this set.

To formalize this, we define a Bernoulli random variable \(Z\) for a single triplet \((i, j, k)\) sampled uniformly from \(\mathcal{T}\). Let \(Z=1\) if the ordinal comparison agrees between the two spaces and \(Z=0\) otherwise:
\[
Z = \mathbb{I}\big[ O^{\Delta}_{d_X}(x_i, x_j, x_k) = O^{\Delta}_{d_Y}(y_i, y_j, y_k) \big]
\]
By definition, \(Z\) is a Bernoulli random variable. Its expected value is the probability of success, which is precisely the true TSI:
\[
\mathbb{E}[Z] = P(Z=1) = P_{(i,j,k) \in \mathcal{T}}\Big( O^{\Delta}_{d_X}(x_i, x_j, x_k) = O^{\Delta}_{d_Y}(y_i, y_j, y_k) \Big) = \text{TSI}_{d_X, d_Y}(X, Y)
\]
Our estimator \(\widehat{\text{TSI}}_n\) is the sample mean of \(n\) independent and identically distributed random variables \(Z_1, \dots, Z_n\) drawn from the distribution of \(Z\):
\[
\widehat{\text{TSI}}_n = \frac{1}{n}\sum_{t=1}^n Z_t
\]
The expected value of the estimator is the true TSI: \(\mathbb{E}[\widehat{\text{TSI}}_n] = \frac{1}{n}\sum_{t=1}^n\mathbb{E}[Z_t] = \mathbb{E}[Z] = \text{TSI}_{d_X, d_Y}(X, Y)\).

We can now apply Hoeffding's inequality \citep{hoeffding1963probability}, which provides a bound on the probability that a sum of bounded independent random variables deviates from its expected value. For a sample mean \(\bar{S}_n = \frac{1}{n}\sum Z_t\) where \(Z_t \in [a, b]\), the inequality states:
\[
P(|\bar{S}_n - \mathbb{E}[\bar{S}_n]| \ge \epsilon) \le 2\exp\left(-\frac{2n\epsilon^2}{(b-a)^2}\right)
\]
In our case, the random variables are bounded between \(a=0\) and \(b=1\), which means \((b-a)^2 = 1\). Substituting our estimator and its expected value into the inequality gives the desired result:
\[
P\left(\left|\widehat{\text{TSI}}_n - \text{TSI}_{d_X, d_Y}(X, Y)\right| \ge \epsilon\right) \le 2\exp\left(-2n\epsilon^2\right)
\]
This completes the proof.
\end{proof}

\subsection{Invariant Transformations for Ordinal Similarity Metrics}
\label{sec:invariant-transformations}

This proof establishes the invariance properties of ordinal similarity metrics, as stated in \cref{cor:tsi_qsi_invariance}. The invariances for translation, isotropic scaling, and orthogonal transformations are straightforward. The non-invariance to arbitrary invertible linear transforms is proven by a counterexample where a non-isotropic scaling alters an existing ordinal relationship.

\TsiQsiInvariance*
\begin{proof}

We first establish an auxiliary lemma that provides sufficient conditions for a transformation to leave our ordinal similarity metrics unchanged.

\begin{lemma}[Sufficient Conditions for Ordinal Similarity Invariance]
Let $T:\mathcal{X} \to \mathcal{X}$ be a transformation. If there exists a strictly increasing function $S: \mathbb{R}^+_0 \to \mathbb{R}^+_0$ such that for all $x_i, x_j \in X$, the transformed distance is given by $d_X(T(x_i), T(x_j)) = S(d_X(x_i, x_j))$, then TSI and QSI are invariant to $T$.
\label{lem:alignment_invariance_improved}
\end{lemma}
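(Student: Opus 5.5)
The plan is to show that applying $T$ leaves every individual ordinal comparison unchanged. Then every indicator in the definitions of TSI and QSI is unchanged, and hence so are the averages.

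Fix any indices $a,b,c,e$ with $a\neq b$ and $c\neq e$. By hypothesis,
\[
d_X(T(x_a),T(x_b)) - d_X(T(x_c),T(x_e)) = S\big(d_X(x_a,x_b)\big) - S\big(d_X(x_c,x_e)\big).
\]
Since $S$ is strictly increasing on $\mathbb{R}^+_0$, for nonnegative reals $u,v$ we have $u<v \iff S(u)<S(v)$, $u=v \iff S(u)=S(v)$ and $u>v \iff S(u)>S(v)$. Consequently $\sign(S(u)-S(v))=\sign(u-v)$. Taking $u=d_X(x_a,x_b)$ and $v=d_X(x_c,x_e)$, which are nonnegative because $d_X$ is a distance, gives
\[
O_{d_X}(T(x_a),T(x_b),T(x_c),T(x_e)) = O_{d_X}(x_a,x_b,x_c,x_e).
\]
The anchor-based case $O^{\Delta}_{d_X}$ follows by specializing to $c=a$, since $O^{\Delta}_d(a,b,c)=O_d(a,b,a,c)$.

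Next, I substitute into the definitions. For each $(i,j,k)\in\mathcal{T}$, the indicator $\mathbb{I}[O^{\Delta}_{d_X}(T(x_i),T(x_j),T(x_k)) = O^{\Delta}_{d_Y}(y_i,y_j,y_k)]$ equals the untransformed indicator. Averaging over $\mathcal{T}$ then gives $\text{TSI}_{d_X,d_Y}(T(X),Y)=\text{TSI}_{d_X,d_Y}(X,Y)$. The same argument over $\mathcal{Q}$ gives the QSI statement, so both metrics are invariant in the sense of \cref{def:metric-invariance}.

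The only step needing care is the bookkeeping for $T(X)$. I will read $T(X)$ as the indexed dataset $\{T(x_i)\}_{i=1}^N$ paired with $y_i$, not as an unordered image set. With that reading, collisions $T(x_i)=T(x_j)$ cause no issue: the hypothesis still applies pairwise, since strictness of $S$ forces $d_X(x_i,x_j)=S^{-1}(0)$-type consistency and in any case only sign equality is used.

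Beyond that, there is no real obstacle. The whole argument rests on the sign-preservation property of strictly monotone maps, which must hold for ties as well as strict inequalities; strict monotonicity is exactly what guarantees this.
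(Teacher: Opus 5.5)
Your proposal is correct and follows essentially the same route as the paper: strict monotonicity of $S$ gives $\sign(S(u)-S(v))=\sign(u-v)$, ties included, so every indicator in the TSI and QSI averages is unchanged and the averages are therefore equal. Your aside about collisions $T(x_i)=T(x_j)$ is unnecessary, and the ``$S^{-1}(0)$-type consistency'' phrasing is muddled, but it is harmless because, as you note, only the pairwise sign equality is used.
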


\begin{proof}
The calculation of both TSI and QSI depends on the sign of a difference between two distances, i.e., $\sign(d_1 - d_2)$. We show that this term remains unchanged after applying the transformation $T$.
Since $S$ is strictly increasing, for any $a, b \in \mathbb{R}^+_0$, we have $a > b \iff S(a) > S(b)$. This implies that the sign of the difference is preserved:
\[
\sign(a - b) = \sign(S(a) - S(b))
\]
For TSI, the comparison is $\sign\big(d_X(x_i, x_j) - d_X(x_i, x_k)\big)$. For QSI, it is $\sign\big(d_X(x_i, x_j) - d_X(x_k, x_l)\big)$. In both cases, applying the transformation $T$ yields a comparison of the form $\sign\big(S(d_1) - S(d_2)\big)$, which is equal to the original sign, $\sign(d_1 - d_2)$.

Since the predicate inside the expectation of the TSI and QSI definitions remains unchanged for every triplet or quadruplet, the expectation itself is unchanged. Thus, the metrics are invariant to $T$.
\end{proof}

We now use \cref{lem:alignment_invariance_improved} to prove the specific invariances. For each transformation, we will identify the corresponding function $S$ and show that it is strictly increasing.

\paragraph{Translation}
Let $T(x) = x + c$ for a constant vector $c \in \mathcal{X}$. The transformed distance is:
\begin{align*}
d_X(T(x_i), T(x_j)) &= \sqrt{\langle (x_i + c) - (x_j + c), (x_i + c) - (x_j + c) \rangle} \\
&= \sqrt{\langle x_i - x_j, x_i - x_j \rangle} \\
&= d_X(x_i, x_j)
\end{align*}
Here, $S(d) = d$, which is the identity function and is strictly increasing. Therefore, by \cref{lem:alignment_invariance_improved}, TSI and QSI are invariant to translations.

\paragraph{Isotropic Scaling}
Let $T(x) = \alpha x$ for a non-zero scalar $\alpha \in \mathbb{R}$. The transformed distance is:
\begin{align*}
d_X(T(x_i), T(x_j)) &= \sqrt{\langle \alpha x_i - \alpha x_j, \alpha x_i - \alpha x_j \rangle} \\
&= \sqrt{\alpha^2 \langle x_i - x_j, x_i - x_j \rangle} \\
&= |\alpha| d_X(x_i, x_j)
\end{align*}
Here, $S(d) = |\alpha|d$. Since $\alpha \neq 0$, $|\alpha|$ is a positive constant, making $S(d)$ a strictly increasing function. Therefore, by \cref{lem:alignment_invariance_improved}, TSI and QSI are invariant to isotropic scaling.

\paragraph{Orthogonal Transformation}
Let $T$ be a linear map that preserves the inner product. The transformed distance is:
\begin{align*}
d_X(T(x_i), T(x_j)) &= \sqrt{\langle T(x_i) - T(x_j), T(x_i) - T(x_j) \rangle} \\
&= \sqrt{\langle T(x_i - x_j), T(x_i - x_j) \rangle} \quad \text{(by linearity of T)} \\
&= \sqrt{\langle x_i - x_j, x_i - x_j \rangle}  \\
&= d_X(x_i, x_j)
\end{align*}
Again, $S(d) = d$, which is strictly increasing. Therefore, by \cref{lem:alignment_invariance_improved}, TSI and QSI are invariant to orthogonal transformations.

\paragraph{Non-Invariance to Invertible Linear Transformations}
Let $T$ be an invertible linear transformation, $T(x) = Ax$. Using the Singular Value Decomposition (SVD), $A$ can be written as $A = U D V^T$, where $U$ and $V$ are orthogonal transformations and $D = \text{diag}(\sigma_1, \dots, \sigma_n)$ is a diagonal matrix with positive singular values $\sigma_i > 0$. Since TSI and QSI are invariant to orthogonal transformations (as shown previously), their invariance to $A$ is equivalent to their invariance to the diagonal scaling $D$. If the scaling is isotropic, i.e., $D = \alpha I$ for some scalar $\alpha > 0$, then the metrics are invariant.

We now show by counterexample that if the scaling is \textbf{non-isotropic}, TSI and QSI are not invariant. We will do this by constructing a dataset $X$ and a non-isotropic scaling $D$ such that the self-similarity score is no longer perfect. While it is trivially true that $\text{TSI}_{d_X, d_X}(X, X) = 1$, we will show that $\text{TSI}_{d_X, d_X}(D(X), X) < 1$. This inequality directly proves that the metric has changed, i.e., $\text{TSI}_{d_X, d_X}(D(X), X) \neq \text{TSI}_{d_X, d_X}(X, X)$.

By definition, a non-isotropic scaling requires at least two singular values to be different. Let $\sigma_a \neq \sigma_b$, with corresponding orthonormal basis vectors $\{e_a, e_b\}$.

\begin{itemize}
    \item \textbf{For TSI}, consider the triplet $X = \{x_i, x_j, x_k\} = \{e_a + e_b, e_b, e_a\}$.
    In the original space, the distances from the anchor $x_i$ are:
    \begin{align*}
        d_X(x_i, x_j) &= \|(e_a + e_b) - e_b\| = \|e_a\| = 1 \\
        d_X(x_i, x_k) &= \|(e_a + e_b) - e_a\| = \|e_b\| = 1
    \end{align*}
    The ordinal comparison in the original space is thus $O^{\Delta}_{d_X}(x_i, x_j, x_k) = \sign(1-1) = 0$. After applying the scaling $D$, the transformed points are $D(x_i) = \sigma_a e_a + \sigma_b e_b$, $D(x_j) = \sigma_b e_b$, and $D(x_k) = \sigma_a e_a$. The new distances in the transformed space are:
    \begin{align*}
        d_X(D(x_i), D(x_j)) &= \|(\sigma_a e_a + \sigma_b e_b) - \sigma_b e_b\| = \|\sigma_a e_a\| = \sigma_a \\
        d_X(D(x_i), D(x_k)) &= \|(\sigma_a e_a + \sigma_b e_b) - \sigma_a e_a\| = \|\sigma_b e_b\| = \sigma_b
    \end{align*}
    Since $\sigma_a \neq \sigma_b$, the ordinal comparison in the transformed space is $O^{\Delta}_{d_X}(D(x_i), D(x_j), D(x_k)) = \sign(\sigma_a - \sigma_b) \neq 0$.
    
    For this triplet, the ordinal relationship in the original space ($0$) does not match the one in the transformed space ($\neq 0$). Therefore, the indicator function for this triplet in the calculation of $\text{TSI}_{d_X, d_X}(D(X), X)$ is 0. Since at least one term in the expectation is 0, the final score must be less than 1.

    \item \textbf{For QSI}, a similar argument holds. Consider the quadruplet $X = \{x_i, x_j, x_k, x_l\} = \{2e_a, e_a, 2e_b, e_b\}$. The initial distances are:
    \begin{align*}
        d_X(x_i, x_j) &= \|2e_a - e_a\| = \|e_a\| = 1 \\
        d_X(x_k, x_l) &= \|2e_b - e_b\| = \|e_b\| = 1
    \end{align*}
    This results in an ordinal comparison of $O_{d_X}(x_i, x_j, x_k, x_l) = \sign(1-1) = 0$. After applying $D$, the points become $D(x_i) = 2\sigma_a e_a$, $D(x_j) = \sigma_a e_a$, $D(x_k) = 2\sigma_b e_b$, and $D(x_l) = \sigma_b e_b$. The new distances are:
    \begin{align*}
        d_X(D(x_i), D(x_j)) &= \|2\sigma_a e_a - \sigma_a e_a\| = \|\sigma_a e_a\| = \sigma_a \\
        d_X(D(x_k), D(x_l)) &= \|2\sigma_b e_b - \sigma_b e_b\| = \|\sigma_b e_b\| = \sigma_b
    \end{align*}
    The new ordinal comparison is $O_{d_X}(D(x_i), D(x_j), D(x_k), D(x_l)) = \sign(\sigma_a - \sigma_b) \neq 0$.
\end{itemize}
In both cases, the ordinal relationship has changed for the chosen set of points. This guarantees that $\text{TSI}_{d_X, d_X}(D(X), X) < 1$ and $\text{QSI}_{d_X, d_X}(D(X), X) < 1$, proving that the metrics are not generally invariant to non-isotropic scaling and thus not generally invariant to invertible linear transformations.
\end{proof}

\subsection{TSI/QSI-based distances satisfy the metric axioms}
\label{sec:reverse-tsi-qsi-proof}

\ReverseTSIQSIMetricAxioms*

\begin{proof}
We will provide the proof for TSI ($S = \text{TSI}$). The proof for QSI follows identically by replacing the set of triplets $\mathcal{T}$ with the set of quadruplets $\mathcal{Q}$, and the triplet ordinal function $O^{\Delta}_{d}$ with the quadruplet ordinal function $O_{d}$.

Recall that $\text{TSI}_{d_X, d_Y}(X,Y)$ is defined as the expected agreement of the ordinal functions over all triplets:
\[
\text{TSI}_{d_X, d_Y}(X, Y) = \frac{1}{(N)_3} \sum_{(i,j,k) \in \mathcal{T}} \mathbb{I}\big[ O^{\Delta}_{d_X}(x_i, x_j, x_k) = O^{\Delta}_{d_Y}(y_i, y_j, y_k) \big]
\]

To simplify notation, let the ordinal profile of a dataset $(X, d_X)$ be denoted by the vector $\mathcal{P}_X \in \{-1, 0, 1\}^{(N)_3}$, where the elements correspond to the ordinal outcomes $O^{\Delta}_{d_X}(x_i, x_j, x_k)$ for all $(i,j,k) \in \mathcal{T}$.

We can now prove each of the three metric axioms for the distance function $d_\text{TSI}((X, d_X), (Y, d_Y)) = 1 - \text{TSI}_{d_X, d_Y}(X, Y)$:

\textbf{1. Equivalence:}
Two representation sets are equivalent in their corresponding spaces if their ordinal profiles match perfectly, meaning $\mathcal{P}_X = \mathcal{P}_Y$.
\begin{align*}
d_\text{TSI}((X, d_X), (Y, d_Y)) = 0 &\iff \text{TSI}_{d_X, d_Y}(X, Y) = 1 \\
&\iff \forall (i,j,k) \in \mathcal{T}: O^{\Delta}_{d_X}(x_i, x_j, x_k) = O^{\Delta}_{d_Y}(y_i, y_j, y_k) \\
&\iff \mathcal{P}_X = \mathcal{P}_Y
\end{align*}

\textbf{2. Symmetry:}
Since the indicator function representing equality is commutative:
\begin{align*}
\mathbb{I}\big[ O^{\Delta}_{d_X}(x_i, x_j, x_k) = O^{\Delta}_{d_Y}(y_i, y_j, y_k) \big] &= \mathbb{I}\big[ O^{\Delta}_{d_Y}(y_i, y_j, y_k) = O^{\Delta}_{d_X}(x_i, x_j, x_k) \big] \iff \\
\text{TSI}_{d_X, d_Y}(X, Y) &= \text{TSI}_{d_Y, d_X}(Y, X) \iff \\
1 - \text{TSI}_{d_X, d_Y}(X, Y) &= 1 - \text{TSI}_{d_Y, d_X}(Y, X) \iff \\
d_\text{TSI}((X, d_X), (Y, d_Y)) &= d_\text{TSI}((Y, d_Y), (X, d_X))
\end{align*}

\textbf{3. Triangle inequality:}
Notice that the disagreement in the ordinal functions can be interpreted as the normalized Hamming distance between the ordinal profiles:
\begin{align*}
d_\text{TSI}((X, d_X), (Y, d_Y)) &= 1 - \frac{1}{(N)_3} \sum_{(i,j,k) \in \mathcal{T}} \mathbb{I}\big[ O^{\Delta}_{d_X}(x_i, x_j, x_k) = O^{\Delta}_{d_Y}(y_i, y_j, y_k) \big] \\
&= \frac{1}{(N)_3} \sum_{(i,j,k) \in \mathcal{T}} \mathbb{I}\big[ O^{\Delta}_{d_X}(x_i, x_j, x_k) \neq O^{\Delta}_{d_Y}(y_i, y_j, y_k) \big] \\
&= \frac{1}{(N)_3} d_\text{hamming}(\mathcal{P}_X, \mathcal{P}_Y)
\end{align*}
Since the Hamming distance over vector spaces satisfies the triangle inequality, it directly follows that:
\begin{align*}
d_\text{hamming}(\mathcal{P}_X, \mathcal{P}_Z) &\leq d_\text{hamming}(\mathcal{P}_X, \mathcal{P}_Y) + d_\text{hamming}(\mathcal{P}_Y, \mathcal{P}_Z) \iff \\
\frac{1}{(N)_3} d_\text{hamming}(\mathcal{P}_X, \mathcal{P}_Z) &\leq \frac{1}{(N)_3} d_\text{hamming}(\mathcal{P}_X, \mathcal{P}_Y) + \frac{1}{(N)_3} d_\text{hamming}(\mathcal{P}_Y, \mathcal{P}_Z) \iff \\
d_\text{TSI}((X, d_X), (Z, d_Z)) &\leq d_\text{TSI}((X, d_X), (Y, d_Y)) + d_\text{TSI}((Y, d_Y), (Z, d_Z))
\end{align*}
This concludes the proof.
\end{proof}

\subsection{Kendall Tau Based Formulations}
\label{sec:kendall-tau-formulations}

This subsection derives the connection between ordinal similarity metrics and Kendall's Tau correlation coefficient, as described in \cref{cor:tsi_kendall} and \cref{cor:joint_kendall}. To prove these statements, we first show that the Adjusted Kendall Tau coefficient described in \cref{eq:adj_kendall_tau} is equivalent to a formulation that leverages a similar ordinal predicate to TSI and QSI. Next, we reformulate TSI as an average of anchor-based Adjusted Kendall's Tau correlations. Finally, we establish that the TSI and QSI can be jointly expressed via a global Adjusted Kendall Tau Coefficient.

\subsubsection{Alternative Formulation of Adjusted Kendall Tau}
\label{sec:auxiliary-lemma-adjusted-kendall-tau}

\begin{lemma}[Adjusted Kendall Tau Equivalent Formulation]
\label{lem:adjusted-kendall-identity}
Let $A = \{a_1, \dots, a_M\}$ and $B = \{b_1, \dots, b_M\}$ be two sequences of paired observations and let $\text{Adj-Kendall-}\tau(A,B)$ denote the Adjusted Kendall Tau coefficient defined in \cref{eq:adj_kendall_tau}. Then, the following identity holds:
\[
\text{Adj-Kendall-}\tau(A,B) = \frac{1}{M(M-1)} \sum_{j \neq k} \mathbb{I}\big[\sign(a_j - a_k)=\sign(b_j - b_k) \big]
\]
\end{lemma}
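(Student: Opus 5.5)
The plan is a pointwise case analysis. We show that for every ordered pair $(j,k)$ with $j \neq k$, the summand contributed jointly by \cref{eq:kendall_tau} and \cref{eq:adj_ties} to $\text{Kendall-}\tau(A,B) + \text{Adj-Ties}(A,B) + 1$ equals $2\,\mathbb{I}[\sign(a_j-a_k)=\sign(b_j-b_k)]$. Both coefficients carry the same normalization $\frac{1}{M(M-1)}$ and range over the same index set of $M(M-1)$ ordered pairs. We may therefore write the constant $1$ as $\frac{1}{M(M-1)}\sum_{j\neq k} 1$ and merge everything into a single sum
\[
\text{Kendall-}\tau(A,B) + \text{Adj-Ties}(A,B) + 1 = \frac{1}{M(M-1)} \sum_{j \neq k} h(s_{jk}, t_{jk}),
\]
where $s_{jk} = \sign(a_j - a_k)$, $t_{jk} = \sign(b_j - b_k)$, and
\[
h(s,t) = st + \mathbb{I}[s=t=0] - \mathbb{I}[s=0 \wedge t\neq 0] - \mathbb{I}[s\neq 0 \wedge t=0] + 1 .
\]
Dividing by $2$ as in \cref{eq:adj_kendall_tau}, the lemma follows once we verify $h(s,t) = 2\,\mathbb{I}[s=t]$ for all $(s,t)\in\{-1,0,1\}^2$.

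That verification is a finite case split, organized by how many of $s,t$ vanish.

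\textbf{Both nonzero.} Here $s,t \in \{\pm1\}$ and all three tie indicators vanish, so $h = st + 1$. This equals $2$ when $s=t$ (since then $st=1$) and $0$ when $s=-t$.

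\textbf{Exactly one zero.} Then $st = 0$, the joint-tie indicator is $0$, and exactly one of the two penalty indicators is $1$. Hence $h = 0 - 1 + 1 = 0$, which matches $s\neq t$.

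\textbf{Both zero.} Then $st=0$, the joint-tie indicator is $1$, and both penalties vanish, so $h = 0 + 1 + 1 = 2$, which matches $s=t$.

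This exhausts all nine combinations.

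There is no genuine obstacle here. The only points requiring care are bookkeeping. First, the additive constant $1$ must be absorbed into the sum with the correct normalization, which works precisely because there are exactly $M(M-1)$ ordered pairs. Second, the two penalty terms in \cref{eq:adj_ties} are mutually exclusive, so at most one of them fires in any case. With these checked, linearity of the sum gives the identity directly.
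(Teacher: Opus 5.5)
Your proposal is correct and takes essentially the same route as the paper: a case-by-case check that the combined summand of $\text{Kendall-}\tau$ and $\text{Adj-Ties}$ equals $2\mathbb{I}[s=t]-1$, followed by absorbing the constant $1$ via the common $M(M-1)$ normalization. The only difference is cosmetic: you fold the $+1$ into the summand $h$ before the case split, whereas the paper adds it after summing.
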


\begin{proof}
Let $s_a = \sign(a_j - a_k)$ and $s_b = \sign(b_j - b_k)$ for any pair of indices $j,k$. The expression $2\mathbb{I}[s_a=s_b] - 1$ evaluates to $1$ if the ordinal relationship is preserved ($s_a = s_b$) and $-1$ otherwise. We can verify by cases that the term inside the summation of $\text{Kendall-}\tau(A,B) + \text{Adj-Ties}(A,B)$, which is $s_a s_b + \mathbb{I}[s_a=s_b=0] - \mathbb{I}[s_a=0 \land s_b\neq0] - \mathbb{I}[s_a\neq0 \land s_b=0]$, also evaluates to $1$ for agreement (concordant non-ties and joint ties) and $-1$ for disagreement (discordant non-ties, and ties in only one sequence). This implies that:
\[
\text{Kendall-}\tau(A, B) + \text{Adj-Ties}(A, B) = \frac{1}{M(M-1)}\sum_{j \neq k} \left(2\mathbb{I}\big[\sign(a_j - a_k)=\sign(b_j - b_k) \big] - 1\right)
\]

Thereby, enabling us to simplify the expression within the main summation of $\text{Adj-Kendall-}\tau(A,B)$:
\begin{align*}
\text{Adj-Kendall-}\tau(A,B) &= 
\frac{\text{Kendall-}\tau(A, B) + \text{Adj-Ties}(A, B) + 1}{2} \\
&= \frac{1}{2} \left[ \frac{1}{M(M-1)}\sum_{j \neq k} \left(2\mathbb{I}[\dots] - 1\right) + 1 \right] \\
&= \frac{1}{2} \left[ \frac{2}{M(M-1)}\sum_{j \neq k} \mathbb{I}[\dots] - \frac{M(M-1)}{M(M-1)} + 1 \right] \\
&= \frac{1}{M(M-1)} \sum_{j \neq k} \mathbb{I}\big[\sign(a_j - a_k)=\sign(b_j - b_k) \big]
\end{align*}
\end{proof}

\subsubsection{TSI as an Average of Anchor-based Kendall's Tau}
\label{sec:tsi-kendall-tau-formulation}

\TsiKendall*
\begin{proof}
The proof proceeds by reformulating the right-hand side (RHS) of the corollary's equation. Let $A = D_X^i$ and $B = D_Y^i$ be the sequences of distances from an anchor $i$, each of length $M=N-1$.

First, we leverage \cref{lem:adjusted-kendall-identity} to simplify the Adjusted Kendall's $\tau$ coefficient. Notably, we see that:

\[
\text{Adj-Kendall-}\tau(A,B) = \frac{1}{M(M-1)} \sum_{j \neq k} \mathbb{I}\big[\sign(a_j - a_k)=\sign(b_j - b_k) \big]
\]

Now, we substitute this result back into the RHS of the corollary, with $M = N-1$. Furthermore, note that \(D_X^i = \{d_X(x_i, x_t) | t \neq i\}\) is ordered by letting $t$ range from $1$ to $N$ (excluding i). Therefore, for $j' \neq k',j',k' \in \{1, \dots, N-1\}$, we can form $j \neq k,j,k \in \{1, \dots, N\} \setminus \{i\}$ such that $(D_X^i)_{j'} = d_X(x_i, x_j)$ and $(D_Y^i)_{j'} = d_Y(y_i, y_j)$. Therefore, we can rewrite the RHS of the equation into:

\begin{align*}
\text{RHS} &= \frac{1}{N}\sum_{i=1}^N \left( \frac{1}{(N-1)(N-2)}\sum_{j'\neq k'} \mathbb{I}\big[\sign((D_X^i)_{j'} - (D_X^i)_{k'})=\sign((D_Y^i)_{j'} - (D_Y^i)_{k'}) \big] \right) \\
&= \frac{1}{N}\sum_{i=1}^N \left( \frac{1}{(N-1)(N-2)}\sum_{\substack{j, k \neq i \\ j \neq k}} \mathbb{I}\big[\sign(d_X(x_i, x_j) - d_X(x_i, x_k))=\sign(d_Y(y_i, y_j) - d_Y(y_i, y_k)) \big] \right) \\
&= \frac{1}{N(N-1)(N-2)}\sum_{i=1}^N \sum_{\substack{j, k \neq i \\ j \neq k}} \mathbb{I}\big[\dots\big]
\end{align*}
The double summation iterates over all triplets of distinct indices $(i,j,k)$, of which there are $N(N-1)(N-2) = |\mathcal{T}|$. The expression is therefore the expectation over all such triplets, which is the definition of TSI:
\begin{align*}
\text{RHS} &= \E_{(i,j,k) \in \mathcal{T}} \Big[ \mathbb{I}\big[ \sign(d_X(x_i, x_j) - d_X(x_i, x_k))=\sign(d_Y(y_i, y_j) - d_Y(y_i, y_k)) \big] \Big] \\
&= \E_{(i,j,k) \in \mathcal{T}} \Big[ \mathbb{I}\big[ O^{\Delta}_{d_X}(x_i, x_j, x_k) = O^{\Delta}_{d_Y}(y_i, y_j, y_k) \big] \Big] = \text{TSI}_{d_X, d_Y}(X, Y) \qedhere
\end{align*}
\end{proof}

\subsubsection{Unified Kendall's Tau Formulation for TSI and QSI}
\label{sec:unified-kendall-tau-formulation}

\JointKendall*
\begin{proof}

The proof proceeds by reformulating the right-hand side (RHS) of the corollary's equation. Firstly, we leverage the results from \cref{lem:adjusted-kendall-identity} to simplify the RHS of the corollary to:

\begin{align*}
\text{Adj-Kendall-}\tau(D_X^\text{all}, D_Y^\text{all}) &= \\
\frac{1}{M(M-1)} \sum_{j \neq k} \mathbb{I}\big[\sign((D_X^{\text{all}})_j - (D_X^{\text{all}})_k)=\sign((D_Y^{\text{all}})_j - (D_Y^{\text{all}})_k) \big]
\end{align*}

Furthermore, by noting that \( D_X^{\text{all}} = \{d_X(x_i, x_t) | i < t \}\) and \( D_Y^{\text{all}} = \{d_Y(y_i, y_t) | i < t \}\), we know that $M = \frac{N(N-1)}{2}$. Additionally, we can also restructure the summation in terms of $i$ and $t$:
\begin{align*}
\frac{1}{M(M-1)} \sum_{j' \neq k'} \mathbb{I}\big[\sign((D_X^{\text{all}})_{j'} - (D_X^{\text{all}})_{k'})=\sign((D_Y^{\text{all}})_{j'} - (D_Y^{\text{all}})_{k'}) &= \\ 
\frac{1}{(\frac{N(N-1)}{2})_2} \sum_{\substack{i=1 \\ i < t}}^N \sum_{\substack{j=1 \\ j < k \\ (i,t) \neq (j,k)}}^N \mathbb{I}\big[\sign(d_X(x_i, x_t) - d_X(x_j, x_k))=\sign(d_Y(y_i, y_t) - d_Y(y_j, y_k))
\big] &= \\
\frac{4}{(N+1)_4} \sum_{\substack{i=1 \\ i < t}}^N \sum_{\substack{j=1 \\ j < k \\ (i,t) \neq (j,k)}}^N \mathbb{I}\big[\sign(d_X(x_i, x_t) - d_X(x_j, x_k))=\sign(d_Y(y_i, y_t) - d_Y(y_j, y_k))
\big]
\end{align*}

Here $(n)_k = n \times (n-1) \times \dots \times (n-k+1)$ denotes the falling factorial. Further, since $d_X$ is symmetric, we have that $\sign(d_X(x_i, x_t) - d_X(x_j, x_k))$ is invariant to the permutation of $i$ and $t$, as well as to the permutation of $j$ and $k$. The same holds for $d_Y$. Therefore, we can equivalently consider all of the aforementioned permutations if we adjust our sum by a factor of 4, resulting in the following equation:

\begin{align*}
\frac{1}{(N+1)_4} \sum_{\substack{i=1 \\ i\neq t}}^N \sum_{\substack{j=1 \\ j\neq k \\ (i,t) \neq (j,k) \\ (t,i) \neq (j,k)}}^N \mathbb{I}\big[\sign(d_X(x_i, x_t) - d_X(x_j, x_k))=\sign(d_Y(y_i, y_t) - d_Y(y_j, y_k))
\big]
\end{align*}

Now, for clarity, we define the current indices of the summation as a set $\mathcal{S}_O \coloneq \{(i,t,j,k) | (i,t,j,k) \in \{1, \dots N\}^4: i \neq t \wedge j \neq k \wedge (i,t) \neq (j,k) \wedge (t,i) \neq (j,k)\}$. This set can be decomposed into two disjoint sets $\mathcal{S}_\mathcal{T} \coloneq \{(i,t,j,k) | (i,t,j,k) \in \mathcal{S}_O: i = j \vee i = k \vee t = j \vee t=k\}$ and $\mathcal{S}_\mathcal{Q} \coloneq \{(i,t,j,k) | (i,t,j,k) \in \mathcal{S}_O: i \neq j \wedge i \neq k \wedge t \neq j \wedge t \neq k\}$. This is trivially true, as one set's predicate is the negation of the other. 

Further analyzing these sets, we can alternatively described them as: The set $\mathcal{S}_\mathcal{Q}$ corresponding to all possible quadruplets for which all elements are distinct and in $\{1, \dots N\}$, and the set $\mathcal{S}_\mathcal{T}$ corresponding to all possible quadruplets for which exactly two elements are identical and in $\{1, \dots N\}$.

Therefore, we directly conclude that:
\begin{align*}
\sum_{\substack{i=1 \\ t\neq i}}^N \sum_{\substack{j=1 \\ k\neq j \\ (i,t) \neq (j,k) \\ (t,i) \neq (j,k)}}^N \mathbb{I}\big[\dots\big] = \sum_{(i,t,j,k) \in \mathcal{S}_O} \mathbb{I}\big[\dots\big] = \sum_{(i,t,j,k) \in \mathcal{S}_\mathcal{T}} \mathbb{I}\big[\dots\big] + \sum_{(i,t,j,k) \in \mathcal{S}_\mathcal{Q}} \mathbb{I}\big[\dots\big] 
\end{align*}
We now analyze the sums over the two disjoint sets $\mathcal{S}_\mathcal{Q}$ (where all four indices are distinct) and $\mathcal{S}_\mathcal{T}$ (where exactly two indices are identical) separately.

\paragraph{Analysis of Quadruplet Terms ($S_Q$)}
The set $\mathcal{S}_\mathcal{Q}$ consists of all ordered quadruplets $(i,t,j,k)$ with four distinct indices. This set is equivalent to $\mathcal{Q}$, the set of quadruplets over which QSI is defined, and contains $|\mathcal{Q}| = (N)_4$ elements. The sum over this set corresponds to the QSI metric:
\begin{align*}
    \sum_{(i,t,j,k) \in \mathcal{S}_\mathcal{Q}} &\mathbb{I}\big[\sign(d_X(x_i, x_t) - d_X(x_j, x_k))=\sign(d_Y(y_i, y_t) - d_Y(y_j, y_k))\big] \\
    &= \sum_{(i,t,j,k) \in \mathcal{Q}} \mathbb{I}\big[ O_{d_X}(x_i, x_t, x_j, x_k) = O_{d_Y}(y_i, y_t, x_j, x_k) \big] \\
    &= (N)_4 \cdot \mathbb{E}_{(i,t,j,k) \in \mathcal{Q}} \Big[\mathbb{I}\big[ O_{d_X}(x_i, x_t, x_j, x_k) = O_{d_Y}(y_i, y_t, x_j, x_k) \big] \Big] \\
    &= (N)_4 \cdot \text{QSI}_{d_X, d_Y}(X,Y)
\end{align*}

\paragraph{Analysis of Triplet Terms ($S_T$)}
The set $\mathcal{S}_\mathcal{T}$ consists of all quadruplets where exactly two indices are identical. This occurs in four disjoint and symmetric cases: (1) $i=j$, (2) $i=k$, (3) $t=j$, and (4) $t=k$. Each case contributes an equal amount to the sum. We analyze the case where $i=j$: the predicate becomes $\mathbb{I}\big[\sign(d_X(x_i, x_t) - d_X(x_i, x_k))=\sign(d_Y(y_i, y_t) - d_Y(y_i, y_k))\big]$, where $i, t, k$ are distinct. This is the predicate for TSI over the triplet $(i,t,k)$. The sum for this case is over all $(N)_3$ distinct triplets:
\begin{align*}
    \sum_{\substack{i,t \neq k \\ i \neq t}} &\mathbb{I}\big[\sign(d_X(x_i, x_t) - d_X(x_i, x_k))=\sign(d_Y(y_i, y_t) - d_Y(y_i, y_k))\big] \\
    &= \sum_{(i,t,k) \in \mathcal{T}} \mathbb{I}\big[O^{\Delta}_{d_X}(x_i, x_t, x_k) = O^{\Delta}_{d_Y}(y_i, y_t, y_k)\big] \\
    &= (N)_3 \cdot \text{TSI}_{d_X, d_Y}(X,Y)
\end{align*}
Since there are four such symmetric cases, the total sum over $\mathcal{S}_\mathcal{T}$ is four times this value:
\[
 \sum_{(i,t,j,k) \in \mathcal{S}_\mathcal{T}} \mathbb{I}\big[\dots\big] = 4(N)_3 \cdot \text{TSI}_{d_X, d_Y}(X,Y)
\]

We combine the results for the two partitions and substitute them back into the expression for the RHS of the corollary:
\begin{align*}
    \text{RHS} &= \frac{1}{(N+1)_4} \left( \sum_{(i,t,j,k) \in \mathcal{S}_\mathcal{T}} \mathbb{I}\big[\dots\big] + \sum_{(i,t,j,k) \in \mathcal{S}_\mathcal{Q}} \mathbb{I}\big[\dots\big] \right) \\
    &= \frac{1}{(N+1)_4} \left( 4(N)_3 \cdot \text{TSI}_{d_X, d_Y}(X,Y) + (N)_4 \cdot \text{QSI}_{d_X, d_Y}(X,Y) \right) \\
    &= \frac{4(N)_3}{(N+1)(N)_3} \text{TSI}_{d_X, d_Y}(X,Y) + \frac{(N)_4}{(N+1)(N)_3} \text{QSI}_{d_X, d_Y}(X,Y) \\
    &= \frac{4}{N+1} \text{TSI}_{d_X, d_Y}(X,Y) + \frac{N-3}{N+1}\text{QSI}_{d_X, d_Y}(X,Y) \qedhere
\end{align*}     
\end{proof}

\subsection{Computational Complexity of Adjusted Kendall Tau}
\label{sec:computational-complexities-adjusted-kendall-tau}

This proof establishes the computational complexity results presented in \cref{corol:adjusted-kendall-complexity} for the computation of the Adjusted Kendall Tau coefficient using \cref{alg:adjusted-kendall-tau}. The approach is inspired by efficient algorithms for calculating Kendall's Tau, such as the one proposed by \citet{knight1966computer}, and relies on reformulating the metric in terms of concordant and tied pairs.

\AdjustedKendallComplexity*
\begin{proof}
We first prove that \cref{alg:adjusted-kendall-tau} correctly computes the Adjusted Kendall Tau and then analyze its time and space complexity.

\paragraph{Correctness}
From \cref{lem:adjusted-kendall-identity}, the Adjusted Kendall Tau is the fraction of ordered pairs of indices $(j,k)$ where the ordinal relationship agrees between sequences $A$ and $B$:
\[
\text{Adj-Kendall-}\tau(A,B) = \frac{1}{M(M-1)} \sum_{j \neq k} \mathbb{I}\big[\sign(a_j - a_k)=\sign(b_j - b_k) \big]
\]
An agreement occurs if the unordered pair $\{j,k\}$ is either strictly concordant (e.g., $a_j > a_k$ and $b_j > b_k$) or jointly tied ($a_j = a_k$ and $b_j = b_k$). Let $C$ be the number of strictly concordant unordered pairs and $T_{AB}$ be the number of jointly tied unordered pairs. Each such unordered pair corresponds to two agreeing ordered pairs, $(j,k)$ and $(k,j)$. Therefore, the numerator of the expression—the total count of agreeing ordered pairs—is equal to $2(C + T_{AB})$.

\cref{alg:adjusted-kendall-tau} correctly computes this value. The number of strictly concordant pairs, $C$, is calculated using the identity that the total number of unordered pairs, $\frac{M(M-1)}{2}$, is the sum of concordant, discordant ($I$), and tied pairs. By subtracting the number of inversions ($I$) and ties in each sequence ($T_A$, $T_B$) from the total, and adding back the number of joint ties ($T_{AB}$) which were subtracted twice, we isolate $C$.
The algorithm then computes the total number of agreeing ordered pairs as $J = 2(C + T_{AB})$ (line 7) and normalizes by the total number of ordered pairs, $M(M-1)$ (line 8), thus correctly computing the Adjusted Kendall Tau.

\paragraph{Computational Time Complexity}
The time complexity of \cref{alg:adjusted-kendall-tau} is determined by its most expensive steps. Let $M$ be the length of the input sequences $A$ and $B$.
\begin{itemize}
    \item \textbf{Ranking (line 1):} Computing the rank orderings $\pi_A$ and $\pi_B$ requires sorting, which takes $\mathcal{O}(M \log M)$ time.
    \item \textbf{Counting Ties (lines 2-4):} After sorting, ties can be counted in a single pass, which takes $\mathcal{O}(M)$ time.
    \item \textbf{Counting Inversions (line 5):} The number of inversions between two permutations can be computed efficiently using an approach similar to merge sort in $\mathcal{O}(M \log M)$ time.
\end{itemize}
The remaining steps take constant time. Therefore, the dominant factor is the ranking and inversion counting, resulting in a total time complexity of $\mathcal{O}(M \log M)$.

\paragraph{Computational Space Complexity}
The space complexity is determined by the storage required for the data structures used.
\begin{itemize}
    \item The input sequences $A$ and $B$, along with their ranks $\pi_A$ and $\pi_B$, require $\mathcal{O}(M)$ space.
    \item The merge sort-based algorithm for counting inversions requires $\mathcal{O}(M)$ space complexity.
\end{itemize}
Thus, the total space complexity of the algorithm is $\mathcal{O}(M)$.
\end{proof}

\subsection{Concentration Bounds for Empirical TSI and QSI}
\label{sec:concentration-bounds-empirical-tsi-qsi-proof}

\TSIQSIHoeffdingBound*

\begin{proof}
\textbf{1. Definition of U-Statistics.} 
A U-Statistic provides a method for obtaining unbiased estimators of population parameters. Given a set of $N$ independent observations $Z = \{z_1, \dots, z_N\}$, and a symmetric kernel function $h(z_1, \dots, z_m)$ of degree $m \le N$, the corresponding U-Statistic is defined as the average of the kernel over all combinations of $m$ items from the input set:
\[
U(Z) = \frac{1}{\binom{N}{m}} \sum_{1 \le i_1 < \dots < i_m \le N} h(z_{i_1}, \dots, z_{i_m})
\]

\textbf{2. Hoeffding's Bound Applied to U-Statistics.} 
Because the terms in the sum of a U-Statistic share overlapping subsets of data, they are not independent. Consequently, the standard Hoeffding's Inequality cannot be applied directly. However, \citet{hoeffding1963probability} derived a specific concentration inequality for U-statistics. For a kernel bounded in $[0,1]$, the bound is given by:
\[
\mathbb{P} \left[ \left| U(Z) - \mathbb{E}[U(Z)] \right| \geq \epsilon \right] \leq 2 \exp\left(-2 \lfloor \tfrac{N}{m}\rfloor \epsilon^2\right)
\]

\textbf{3. Equivalence of Expectations.}
Because all subsets of size $m$ are drawn from the same underlying distribution, the expected value of the kernel over any combination is identical. Using the linearity of expectation, the expected value of the U-Statistic over $N$ samples collapses strictly to the expected value of its kernel over $m$ samples:
\begin{align*}
\mathbb{E}[U(Z)] &= \mathbb{E} \left[ \frac{1}{\binom{N}{m}} \sum_{1 \le i_1 < \dots < i_m \le N} h(z_{i_1}, \dots, z_{i_m}) \right] \\
&= \frac{1}{\binom{N}{m}} \sum_{1 \le i_1 < \dots < i_m \le N} \mathbb{E} \left[ h(z_{i_1}, \dots, z_{i_m}) \right] \\
&= \frac{1}{\binom{N}{m}} \binom{N}{m} \mathbb{E}[h(z_1, \dots, z_m)] \\
&= \mathbb{E}[h(z_1, \dots, z_m)]
\end{align*}

\textbf{4. TSI and QSI as U-Statistics.}
Consider a sample of $N$ joint pairs $z_i = (x_i, y_i)$. To demonstrate that TSI is a U-Statistic, we begin with its original definition as an average over the set of all ordered triplets $\mathcal{T}$, where $|\mathcal{T}| = (N)_3$:
\begin{align*}
\text{TSI}_{d_X, d_Y}(X, Y) &= \frac{1}{(N)_3} \sum_{(i,j,k) \in \mathcal{T}} \mathbb{I}\big[ O^{\Delta}_{d_X}(x_i, x_j, x_k) = O^{\Delta}_{d_Y}(y_i, y_j, y_k) \big]
\end{align*}
Any ordered triplet $(i,j,k)$ can be formed by first selecting an unordered subset of 3 distinct indices, and then applying one of the $3! = 6$ possible permutations $\pi \in S_3$. We can rewrite the sum over ordered tuples as a double sum over unordered subsets and permutations:
\begin{align*}
\text{TSI}_{d_X, d_Y}(X, Y) &= \frac{1}{\binom{N}{3} 3!} \sum_{1 \leq i_1 < i_2 < i_3 \leq N} \sum_{\pi \in S_3} \mathbb{I}\big[ O^{\Delta}_{d_X}(x_{\pi(1)}, x_{\pi(2)}, x_{\pi(3)}) = O^{\Delta}_{d_Y}(y_{\pi(1)}, y_{\pi(2)}, y_{\pi(3)}) \big] \\
&= \frac{1}{\binom{N}{3}} \sum_{1 \leq i_1 < i_2 < i_3 \leq N} \left( \frac{1}{6} \sum_{\pi \in S_3} \mathbb{I}\big[ O^{\Delta}_{d_X}(x_{\pi(1)}, x_{\pi(2)}, x_{\pi(3)}) = O^{\Delta}_{d_Y}(y_{\pi(1)}, y_{\pi(2)}, y_{\pi(3)}) \big] \right)
\end{align*}
By defining the term in parentheses as the permutation-symmetric kernel $h_{\text{TSI}}(z_1, z_2, z_3)$, we perfectly recover the definition of a U-Statistic from Step 1. This demonstrates that TSI is strictly a U-Statistic of degree $m=3$. By an identical construction using subsets of size 4 and permutations $\pi \in S_4$, QSI forms a symmetric kernel $h_{\text{QSI}}$, proving it is a U-Statistic of degree $m=4$. Because both kernels are simple averages of indicator functions, their values are strictly bounded in $[0,1]$.

\textbf{5. Obtaining the Population Equivalence and Concentration Bounds.}
Using the property from Step 3, the expected value of our U-Statistics for any sample size $N \ge 3$ evaluates exactly to the expectation over a single kernel evaluation. Thus, we have:
\[
\mathbb{E}_{(X,Y) \sim \mathcal{D}_{XY}^N}\left[ \text{TSI}_{d_X, d_Y}(X, Y) \right] = \mathbb{E}_{(X,Y) \sim \mathcal{D}_{XY}^3}\left[ h_{\text{TSI}}((x_1,y_1), (x_2, y_2), (x_3, y_3)) \right]
\]
Therefore, we can easily simplify the population similarity limit from \cref{def:population-similarity}:
\begin{align*}
\text{TSI}^*_{d_X, d_Y}(\mathcal{D}_{XY}) &= \lim_{N \to +\infty} \mathbb{E}_{(X,Y) \sim \mathcal{D}_{XY}^N}\left[ \text{TSI}_{d_X, d_Y}(X, Y) \right] \\
&= \lim_{N \to +\infty} \mathbb{E}_{(X,Y) \sim \mathcal{D}_{XY}^3}\left[ h_{\text{TSI}}((x_1,y_1), (x_2, y_2), (x_3, y_3)) \right] \\
&= \mathbb{E}_{(X,Y) \sim \mathcal{D}_{XY}^3}\left[ h_{\text{TSI}}((x_1,y_1), (x_2, y_2), (x_3, y_3)) \right]
\end{align*}

Finally, to compute the concentration bound, we apply the inequality from Step 2. We substitute $U(Z) = \text{TSI}_{d_X, d_Y}(X, Y)$, its expected value $\mathbb{E}[U(Z)] = \mathbb{E}_{(X,Y) \sim \mathcal{D}_{XY}^N}\left[ \text{TSI}_{d_X, d_Y}(X, Y) \right]$, and the kernel degree $m=3$:
\begin{align*}
\mathbb{P} \left[ \left| \text{TSI}_{d_X, d_Y}(X, Y) - \mathbb{E}_{(X,Y) \sim \mathcal{D}_{XY}^N}\left[ \text{TSI}_{d_X, d_Y}(X, Y) \right] \right| \geq \epsilon \right] &\leq 2 \exp\left(-2 \lfloor \tfrac{N}{3}\rfloor \epsilon^2\right) \iff \\
\mathbb{P} \left[ \left| \text{TSI}_{d_X, d_Y}(X, Y) - \mathbb{E}_{(X,Y) \sim \mathcal{D}_{XY}^3}\left[ h_{\text{TSI}}((x_1,y_1), (x_2, y_2), (x_3, y_3))\right] \right| \geq \epsilon \right] &\leq 2 \exp\left(-2 \lfloor \tfrac{N}{3}\rfloor \epsilon^2\right) \iff \\
\mathbb{P} \left[ \left| \text{TSI}_{d_X, d_Y}(X, Y) - \text{TSI}^*_{d_X, d_Y}(\mathcal{D}_{XY}) \right| \geq \epsilon \right] &\leq 2 \exp\left(-2 \lfloor \tfrac{N}{3}\rfloor \epsilon^2\right)
\end{align*}
Applying the same sequence of substitutions with $m=4$ for QSI yields its respective bound, completing the proof for \cref{lem:empirical_tsi_qsi_hoeffding_bound}.
\end{proof}

\subsection{Computability of Global Lower Bounds for TSI and QSI in $\mathbb{R}$}
\label{sec:computability-global-lower-bounds-for-TSI-and-QSI-in-R}

This section provides the formal proof for \cref{corol:computability-lower-bounds}. We demonstrate that the search for the global minimum similarity in $\mathbb{R}$ can be reduced from a continuous problem over $\mathbb{R}^N \times \mathbb{R}^N$ to a finite combinatorial search over a specialized representative set.

\ComputabilityLowerBounds*
\begin{proof}
The proof is structured as follows: we first discretize the representation space into finite equivalence classes (\textbf{Ordinal Profiles}); then, we show that the global search can be restricted to a finite \textbf{Representative Set}; finally, we leverage the geometric constraints of the real line to algorithmically construct this set (\textbf{MORS}).

\paragraph{Equivalence Classes and Discretization}
To analyze the global minimum, we partition the continuous space $\mathbb{R}^N$ into finite regions of invariant ordinal behavior.

\begin{definition}[Ordinal Profile]
\label{def:ordinal-profile}
To characterize the structure of $X \in \mathbb{R}^N$, we define the \textbf{Ordinal Profile} $\mathcal{P}_S(X)$ as an ordered set of outcomes for all underlying comparisons in the metric $S \in \{\text{TSI}, \text{QSI}\}$. For a dataset $X$, the profile is a vector in $\{-1, 0, 1\}^M$ (where $M = |\mathcal{T}|$ or $|\mathcal{Q}|$):
\begin{itemize}
    \item \textbf{For TSI:} $\mathcal{P}_{\text{TSI}}(X) = \big( O^{\Delta}_{d_X}(x_i, x_j, x_k) \big)_{(i,j,k) \in \mathcal{T}}$
    \item \textbf{For QSI:} $\mathcal{P}_{\text{QSI}}(X) = \big( O_{d_X}(x_i, x_j, x_k, x_l) \big)_{(i,j,k,l) \in \mathcal{Q}}$
\end{itemize}
\end{definition}

\begin{definition}[Equivalent Classes/Cells]
We define the equivalence relation $\sim_S$ such that $X \sim_S Y \iff \mathcal{P}_S(X) = \mathcal{P}_S(Y)$. This relation partitions $\mathbb{R}^N \text{ with distinct entries}$ into disjoint \textbf{equivalence classes} (or cells) $\{\mathcal{E}_k\}$, where each $\mathcal{E}_k = \{ X \in \mathbb{R}^N \mid \mathcal{P}_S(X) = p_k \}$ for a unique profile $p_k$. By construction, for any $X, Y \in \mathcal{E}_k$, $S(X, Y) = 1$.
\end{definition}

\begin{itemize}
    \item \textbf{Conclusion 1 (Finiteness):} The number of cells $K$ is finite and bounded by $3^M$. Geometric constraints in $\mathbb{R}$ (e.g., $|x_i - x_k| = |x_i - x_j| + |x_j - x_k|$ for $i < j < k$) further restricts the number of realizable profiles.
    \item \textbf{Conclusion 2 (Class Score Invariance):} Since $S$ depends only on the profiles, the similarity between any two points from cells $\mathcal{E}_a$ and $\mathcal{E}_b$ is identical. Formally: 
    \[ \forall X_1, X_2 \in \mathcal{E}_a, \forall Y_1, Y_2 \in \mathcal{E}_b: S(X_1, Y_1) = S(X_2, Y_2). \]
\end{itemize}

\paragraph{Formal Definition of the Representative Set}
We define a permutation action on cells such that $\sigma(\mathcal{E}_a) = \{ X^\sigma \mid X \in \mathcal{E}_a \}$. We call $\mathcal{E}_a$ and $\mathcal{E}_b$ \textbf{$\sigma$-equivalent} if there exists a permutation $\sigma \in \text{Perm}(N)$ such that $\sigma(\mathcal{E}_a) = \mathcal{E}_b$.

\begin{definition}[Representative Set]
\label{def:representative-set}
A \textbf{Representative Set} $\mathcal{X}_{\text{rep}} = \{ X_1, \dots, X_m \}$ is a set of configurations constructed such that:
\begin{enumerate}
    \item \textbf{Distinct Orbits:} $\forall X_i, X_j \in \mathcal{X}_{\text{rep}}, i \neq j \implies \nexists \sigma \in \text{Perm}(N) \text{ s.t. } X_i \sim_S X_j^\sigma$.
    \item \textbf{Full Coverage:} For every equivalence class $\mathcal{E}_k \subset \mathbb{R}^N \text{ with distinct entries}$, there exists a representative $X_i \in \mathcal{X}_{\text{rep}}$ and a permutation $\sigma \in \text{Perm}(N)$ such that $\mathcal{E}_k = \{ X \mid X \sim_S X_i^\sigma \}$.
\end{enumerate}
\end{definition}

The definition of $\mathcal{X}_{\text{rep}}$ implies the \textbf{covering property}: $\forall X \in \mathbb{R}^N \text{ with distinct entries}, \exists X_i \in \mathcal{X}_{\text{rep}}, \sigma \in \text{Perm}(N)$ such that $X \sim_S X_i^\sigma$.

\paragraph{Global Lower Bound Construction}
We now show that the search over $\mathcal{X}_{\text{rep}}$ and $\text{Perm}(N)$ is equivalent to the global minimum over $\mathbb{R}^N \times \mathbb{R}^N$ (with distinct entries). Let $X, Y \in \mathbb{R}^N \text{ with distinct entries}$. By the covering property:
\[ X \sim_S X_i^{\sigma_1} \quad \text{and} \quad Y \sim_S X_j^{\sigma_2} \quad \text{for some } X_i, X_j \in \mathcal{X}_{\text{rep}} \text{ and } \sigma_1, \sigma_2 \in \text{Perm}(N). \]
By Class Score Invariance and Permutation Invariance ($S(A, B) = S(A^\sigma, B^\sigma)$):
\begin{align*}
S(X, Y) &= S(X_i^{\sigma_1}, X_j^{\sigma_2}) \\
&= S(X_i^{\sigma_1 \sigma_1^{-1}}, X_j^{\sigma_2 \sigma_1^{-1}}) \\
&= S(X_i, X_j^{\sigma^*}) \quad \text{where } \sigma^* = \sigma_2 \sigma_1^{-1} \in \text{Perm}(N).
\end{align*}
Therefore, the global minimum for distinct entries $s_{\min} = \min_{X, Y \in \mathbb{R}^N \text{ with distinct entries}} S(X, Y)$ reduces to:
\[ s_{\min} = \min_{X_i, X_j \in \mathcal{X}_{\text{rep}}} \left( \min_{\sigma \in \text{Perm}(N)} S(X_i, X_j^\sigma) \right) \]

Finally, this procedure can be computed via \cref{alg:global-lower-bound}. However, this hinges on the ability to computationally obtain $\mathcal{X}_\text{rep}$, which we will cover next.

\begin{algorithm}[H]
\caption{Computation of Global Lower Bound for Ordinal Similarity in $\mathbb{R}$}
\label{alg:global-lower-bound}
\begin{algorithmic}[1]
\REQUIRE Number of points $N$; Similarity metric $S \in \{\text{TSI}, \text{QSI}\}$
\STATE $\mathcal{X}_{\text{rep}} \leftarrow \text{ObtainRepresentativeConfigs}(N, S)$ 
\STATE $s_{\min} \leftarrow 1.0$
\STATE $d(a, b) \leftarrow |a - b|$ 
\FORALL{$X \in \mathcal{X}_{\text{rep}}$}
    \FORALL{$Y \in \mathcal{X}_{\text{rep}}$}
        \FORALL{$\sigma \in \text{Permutations}(\{1, \dots, N\})$}
            \STATE $s \leftarrow S_{d, d}(X, Y^{\sigma})$
            \IF{$s < s_{\min}$} 
            \STATE $s_{\min} \leftarrow s$ 
            \ENDIF
        \ENDFOR
    \ENDFOR
\ENDFOR
\RETURN $s_{\min}$
\end{algorithmic}
\end{algorithm}

\paragraph{Efficient Computation of $\mathcal{X}_{\text{rep}}$}
While the number of ordinal profiles is finite, searching the entirety of $\mathbb{R}^N$ is still non-trivial. However, for 1D representations, the geometric structure of the real line allows for a significant simplification. Since any set of distinct points can be sorted, every ordinal profile in $\mathbb{R}^N$ is effectively a permuted version of a profile generated by a monotonically increasing sequence. We formalize this by restricting our search to the monotonic subset of the representation space.

Let $\mathcal{X}_{<} \coloneq \{ X \in \mathbb{R}^N \mid x_1 < x_2 < \dots < x_N \}$ denote the set of all strictly increasing configurations in $\mathbb{R}^N$. We define a specialized representative set for this domain:

\begin{definition}[Monotonically Ordered Representative Set]
\label{def:mors}
A \textbf{Monotonically Ordered Representative Set} (MORS), denoted $\mathcal{X}_{\text{rep}}^{MO}$, is a finite collection of configurations $\{ X_1, \dots, X_m \}$ such that:
\begin{enumerate}
    \item \textbf{Monotonicity:} $\mathcal{X}_{\text{rep}}^{MO} \subset \mathcal{X}_{<}$.
    \item \textbf{Distinct Cells:} $\forall X_i, X_j \in \mathcal{X}_{\text{rep}}, i \neq j \implies X_i \nsim_S X_j$ (they represent distinct ordinal profiles).
    \item \textbf{Full Coverage of Ordered Cells:} For every equivalence class $\mathcal{E}_k$ such that $\mathcal{E}_k \cap \mathcal{X}_{<} \neq \emptyset$, there exists a representative $X_i \in \mathcal{X}_{\text{rep}}^{MO}$ such that $\mathcal{E}_k = \{ X \mid X \in \mathbb{R}^N \text{with distinct entries},  X \sim_S X_i \}$.
\end{enumerate}
\end{definition}

The following corollary establishes that identifying these ordered representatives is sufficient to characterize the similarity behavior of all possible configurations in $\mathbb{R}^N$.

\begin{corollary}[MORS are Representative Sets]
\label{corol:mors-is-rs}
Let $\mathcal{X}_{\text{rep}}^{MO}$ be a \textbf{Monotonically Ordered Representative Set}. Then $\mathcal{X}_{\text{rep}}^{MO}$ satisfies the requirements of a \textbf{Representative Set} (RS) for $\mathbb{R}^N$ with distinct entries.
\end{corollary}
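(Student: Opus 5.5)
We need to show that a MORS satisfies the two defining properties of a Representative Set (\cref{def:representative-set}): Full Coverage and Distinct Orbits. The basic tool is that the ordinal similarity metrics commute with relabelling: for any permutation $\sigma$, the profile $\mathcal{P}_S(X^\sigma)$ is obtained from $\mathcal{P}_S(X)$ by reindexing triplets or quadruplets via $\sigma$. I will state this explicitly as a preliminary fact. Two consequences follow. First, $X \sim_S Y \iff X^\sigma \sim_S Y^\sigma$. Second, the map $\mathcal{E}_k \mapsto \sigma(\mathcal{E}_k)$ is a well-defined bijection on cells.

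\textbf{Full Coverage.} Take any $X \in \mathbb{R}^N$ with distinct entries. Since its entries are distinct, there is a unique sorting permutation $\tau$ with $X^\tau \in \mathcal{X}_{<}$. The cell of $X^\tau$ therefore meets $\mathcal{X}_{<}$, so by property 3 of \cref{def:mors} there is some $X_i \in \mathcal{X}^{MO}_{\text{rep}}$ with $X^\tau \sim_S X_i$. Applying $\tau^{-1}$ to both sides gives $X \sim_S X_i^{\tau^{-1}}$. The whole cell $\mathcal{E}_k$ containing $X$ is then $\{X' \mid X' \sim_S X_i^{\tau^{-1}}\}$, which is exactly the coverage requirement.

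\textbf{Distinct Orbits.} This is the main obstacle. Property 2 of a MORS only guarantees $X_i \nsim_S X_j$, whereas Distinct Orbits needs the stronger statement $X_i \nsim_S X_j^\sigma$ for every $\sigma$. The plan is to argue by contradiction. Suppose $X_i \sim_S X_j^\sigma$ with both $X_i, X_j$ increasing and $\sigma \neq \mathrm{id}$. The argument must show that the ordinal profile of an increasing configuration on $\mathbb{R}$ determines the configuration's order up to reversal. It would then follow that $\sigma$ is either the identity or the reversal $\rho(t) = N+1-t$.

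I would prove this from the TSI triplets. In $\mathbb{R}$, the extreme points $x_1, x_N$ are characterized as the anchors whose distance rankings are strictly monotone in position: for anchor $x_1$, the comparison $O^{\Delta}(x_1, x_j, x_k)$ has sign $\sign(j-k)$ for all $j,k$. More generally, the nearest-neighbour ranking from the endpoint anchor recovers the full linear order. For QSI, I would use the fact that the unique largest distance is $d(x_1, x_N)$ together with the orderings of comparisons it participates in. Some care is needed for small $N$ and for QSI, where triplet information is not directly available.

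The reversal case is the remaining gap. If $\sigma = \rho$, then $X_j^\rho$ is decreasing, and by reflection invariance ($x \mapsto -x$, an isotropic scaling with $\alpha=-1$ in \cref{cor:tsi_qsi_invariance}) we have $X_j^\rho \sim_S -X_j^\rho$, which is increasing. If the construction of the MORS does not already exclude this, then $X_i$ and $X_j$ could be reflections of each other, and Distinct Orbits would fail only in this harmless sense. I would handle it in one of two ways: either argue that coverage is unaffected, so that discarding one representative of each reflected pair preserves all RS properties, or note that the minimization in \cref{corol:computability-lower-bounds} only uses the covering property and is therefore unaffected by duplicated orbits.
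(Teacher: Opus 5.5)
Your Full Coverage argument is correct and is the same as the paper's: sort by the unique $\tau$, apply property 3 of a MORS, and transport back with $\tau^{-1}$ using that $\sim_S$ commutes with relabelling.

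For Distinct Orbits, drop the conditional ``if the construction of the MORS does not already exclude this''. Property 3 forces the reflected cell to be represented, so the reversal case is not a gap in your argument but a counterexample to the statement. Take $N=3$, $S=\text{TSI}$, $X_1=(0,1,3)$ and $X_2=(0,2,3)$. Both are increasing and lie in different cells: for the triplet $(2,1,3)$ the comparison is $-1$ in $X_1$ and $+1$ in $X_2$. Hence every MORS contains distinct members $X_a\sim_S X_1$ and $X_b\sim_S X_2$. But $X_2^{\rho}=(3,2,0)$ has exactly the distance matrix of $X_1$, so $X_a\sim_S X_b^{\rho}$. The same happens for all $N\ge 3$ (TSI) and $N\ge 4$ (QSI), using the gap sequence $(1,2,\dots,2)$ against its reversal.

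So the corollary is false as stated. The paper's proof fails exactly where it asserts that $X_j^\sigma\sim_S X_i$ forces $\sigma=\mathrm{id}$, and your identification of $\rho$ is the correct diagnosis. What does hold is the covering property. As your remedy (b) notes, that is all the reduction to a finite search in the lower-bound corollary uses.

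One correction to your plan concerns the lemma that the ordinal profile fixes the order up to reversal. For TSI it is fine. The strictly farthest point from any anchor on a line is an endpoint, so $\{\sigma(1),\sigma(N)\}=\{1,N\}$. The strict ranking from that endpoint anchor then forces $\sigma\in\{\mathrm{id},\rho\}$.

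For QSI the lemma is false. Take $X=(0,a,a+b,a+b+c)$ and swap the two middle points to get $(0,a+b,a,a+b+c)$. Its three disjoint-pair comparisons are again $\sign(a-c)$, $\sign(a-c)$ and $+1$. So $X^{\sigma}\sim_{\text{QSI}}X$ with $\sigma\notin\{\mathrm{id},\rho\}$.

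Hence, if you want a genuine Representative Set via remedy (a), prune to one representative per orbit of the full permutation action, not one per reflected pair. This preserves coverage, because every discarded cell is a permuted copy of a kept one.
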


\begin{proof}
To show that $\mathcal{X}_{\text{rep}}^{MO}$ is an RS, we must verify the \textbf{Distinct Orbits} and \textbf{Full Coverage} properties.

\textbf{1. Distinct Orbits:} Assume for contradiction that there exist $X_i, X_j \in \mathcal{X}_{\text{rep}}^{MO}$ and a permutation $\sigma \in \text{Perm}(N)$ such that $X_i \sim_S X_j^\sigma$. Since $X_i \in \mathcal{X}_{<}$, its profile $\mathcal{P}_S(X_i)$ satisfies specific monotonic distance constraints (e.g., $d(x_{i,1}, x_{i,2}) < d(x_{i,1}, x_{i,3})$). Because $X_j$ is also monotonically ordered, any non-identity permutation $\sigma$ would reorder the indices such that the resulting distances $d(x_{j,\sigma(a)}, x_{j,\sigma(b)})$ would violate the unique ordering required to match $\mathcal{P}_S(X_i)$. For 1D Euclidean distances, $X_j^\sigma \sim_S X_i$ implies $\sigma$ must be the identity, which contradicts $i \neq j$ given the \textit{Distinct Cells} property.

\textbf{2. Full Coverage:} Let $\mathcal{E}_k$ be an arbitrary equivalence class in $\mathbb{R}^N$ with distinct entries and let $Y \in \mathcal{E}_k$. Since the entries of $Y$ are distinct, there exists a unique permutation $\sigma \in \text{Perm}(N)$ such that $Y^\sigma \in \mathcal{X}_{<}$. By the \textit{Full Coverage of Ordered Cells} property, there exists $X_i \in \mathcal{X}_{\text{rep}}^{MO}$ such that $Y^\sigma \sim_S X_i$. Applying the inverse permutation, we find $Y \sim_S X_i^{\sigma^{-1}}$. Thus, every equivalence class is represented by an element of $\mathcal{X}_{\text{rep}}^{MO}$ under some permutation.
\end{proof}

\paragraph{Computation of $\mathcal{X}_{\text{rep}}^{MO}$}

To algorithmically construct $\mathcal{X}_{\text{rep}}^{MO}$, we observe that it is sufficient to identify a finite superset that spans all realizable ordinal profiles. The following properties guide this construction:

\begin{enumerate}
    \item \textbf{Filtering Property:} If there exists a finite set $\mathcal{X}^* \subset \mathcal{X}_<$ such that $\mathcal{X}_{\text{rep}}^{MO} \subseteq \mathcal{X}^*$, then $\mathcal{X}_{\text{rep}}^{MO}$ can be obtained by iterating through $\mathcal{X}^*$ and retaining only one representative for each unique ordinal profile.
    \item \textbf{Coverage Condition:} A set $\mathcal{X}^*$ contains a MORS if it is profile-exhaustive, i.e., $\{\mathcal{P}_S(X) \mid X \in \mathcal{X}^*\} = \{\mathcal{P}_S(X) \mid X \in \mathcal{X}_<\}$. 
\end{enumerate}

Based on these properties, the problem reduces to computing a finite set $\mathcal{X}^*$ that produces every possible ordinal profile achievable by strictly increasing configurations in $\mathbb{R}^N$. Rather than operating on coordinates $(x_1, \dots, x_N)$, it is more efficient to parameterize the configurations using the vector of pairwise distances $\mathbf{d} = (d_{i,j})_{1 \le i < j \le N} \in \mathbb{R}^{\binom{N}{2}}$, where $d_{i,j} = x_j - x_i$. 

For any $X \in \mathcal{X}_<$, we let $\mathbf{d} \in \mathbb{R}_{+}^{M}$ be the vector of its $M = \binom{N}{2}$ pairwise distances. We define the \textbf{total rank ordering} $\Omega(X)$ as the sequence of distance pairs sorted by magnitude, along with the set of transitional operators $\{<, =\}$ between adjacent elements in the sequence. For example, a possible rank ordering for $N=3$ is $(d_{1,2} = d_{2,3} < d_{1,3})$. Since the ordinal profile $\mathcal{P}_S(X)$ (\cref{def:ordinal-profile}) is determined solely by the relative signs of distance comparisons, it is a \textbf{deterministic function} of the total rank ordering: $\mathcal{P}_S(X) = \Phi(\Omega(X))$. It follows that by iterating through all geometrically feasible total rank orderings, we ensure that our exploration is \textbf{profile-exhaustive} over $\mathcal{X}_<$, as every realizable profile is captured by at least one such ordering.

Formally, obtaining a configuration $X$ that satisfies a candidate ordering $\Omega$ equates to finding a feasible solution $\mathbf{d}$ to a Linear Programming (LP) problem defined by the intersection of three constraint sets: $\mathcal{C}_{\triangle}$ (geometric triangle equalities), $\mathcal{C}_{\neq}$ (ensuring distinct points), and $\mathcal{C}_{\text{ord}}$ (the specific rank ordering and transitional operators defined by $\Omega$). 

Crucially, we do not need to search the entire combinatorial space of all possible distance orderings. Because $X \in \mathcal{X}_<$, the constraints $\mathcal{C}_{\triangle}$ and $\mathcal{C}_{\neq}$ imply that for any $i < j < k$, $d_{i,k} = d_{i,j} + d_{j,k}$ with $d_{i,j}, d_{j,k} > 0$. This geometric necessity induces a \textbf{mandatory partial order} $\mathcal{C}_{\text{perm}}$ where $d_{i,k}$ must be strictly greater than its constituent segments $d_{i,j}$ and $d_{j,k}$. Consequently, any valid $\Omega$ must be consistent with $\mathcal{C}_{\text{perm}}$; specifically, the underlying permutation of distances must be a \textbf{topological sort} of the Directed Acyclic Graph (DAG) whose edges are defined by these transitional dependencies. Any ordering that violates $\mathcal{C}_{\text{perm}}$ is geometrically impossible in 1D space and can be pruned immediately.

\cref{alg:all-X} formalizes this procedure. It iterates through the set of valid topological sorts $\mathcal{O}$ derived from the partial order $\mathcal{C}_{\text{perm}}$. For each sort, it considers all valid operator configurations $\mathbf{b} \in \{=, <\}^{M-1}$ to construct a candidate $\mathcal{C}_{\text{ord}}$. By solving the resulting LP for $\mathbf{d}$ under $\mathcal{C}_{\triangle} \wedge \mathcal{C}_{\text{ord}} \wedge \mathcal{C}_{\neq}$, the algorithm ensures that the resulting set $\mathcal{D}$ (and its coordinate counterpart $R_X$) is profile-exhaustive, allowing for the reliable extraction of the MORS (\cref{def:mors}).

\begin{algorithm}[H]
\caption{Obtain a $\mathcal{X}_{\text{rep}}^{MO}$}
\label{alg:all-X}
\begin{algorithmic}[1]
\REQUIRE Number of points $N$; Similarity metric $S \in \{\text{TSI}, \text{QSI}\}$; Precision $\Delta > 0$
\STATE $I \gets \{(i,j) \mid 1 \leq i < j \leq N\}$ 
\STATE $\mathcal{C}_{\triangle} \gets \{d_{i,j} + d_{j,k} = d_{i,k} \mid 1 \leq i < j < k \leq N\}$ 
\STATE $\mathcal{C}_{\neq} \gets \{d_{i,j} \geq \Delta \mid 1 \leq i < j \leq N\}$ 
\STATE $\mathcal{C}_{\text{perm}} \gets \{(d_{i,j}, d_{i,k}) \mid i < j < k\} \cup \{(d_{i,j}, d_{t,j}) \mid t < i < j\}$
\STATE $\mathcal{O} \gets \text{TopologicalSorts}(\mathcal{C}_{\text{perm}})$
\STATE $\mathcal{D} \gets \emptyset$
\FORALL{$\sigma \in \mathcal{O}$} 
    \FORALL{$\mathbf{b} \in \{=, <\}^{|I|-1}$} 
        \STATE $\mathcal{C}_{\text{ord}} \gets \{d_{\sigma(k)} \; b_k \; d_{\sigma(k+1)} \mid k = 1, \ldots, |I|-1\}$
        \STATE Solve LP: find $\mathbf{d}$ s.t. $\mathcal{C}_{\triangle} \wedge \mathcal{C}_{\text{ord}} \wedge \mathcal{C}_{\neq}$
        \IF{LP is feasible}
            \STATE $\mathcal{D} \gets \mathcal{D} \cup \{\mathbf{d}\}$
        \ENDIF
    \ENDFOR
\ENDFOR
\STATE $R_X \gets [(0, d_{1,2}, d_{1,3}, \ldots, d_{1,N}) \mid \mathbf{d} \in \mathcal{D}]$ 
\STATE $\mathcal{X}_{\text{rep}}^{MO} \gets \{R_X[i] \mid \nexists j > i : S_{d_X, d_X}(R_X[i], R_X[j]) = 1\}$ 
\RETURN $\mathcal{X}_{\text{rep}}^{MO}$
\end{algorithmic}
\end{algorithm}

\end{proof}

\end{document}